\newcommand{\Qhats}{\hat{Q}^* }
\newcommand{\Qs}{Q^* }
\newcommand{\Qhatpistar}{\hat{Q}^{\pi^*}}
\newcommand{\Qhatpihat}{\hat{Q}^{\hat{\pi}}}
\newcommand{\Qpihat}{Q^{\hat{\pi}}}
\newcommand{\Qhatpi}{\hat{Q}^\pi}
\newcommand{\Qpi}{Q^\pi}
\newcommand{\cN}{N}
\newtheorem{theorem}{Theorem}[section]
\newtheorem{lemma}[theorem]{Lemma}
\newtheorem{assumption}[theorem]{Assumption}
\newcommand{\Var}{\mathsf{Var}}
\newtheorem{definition}{Definition}[section]
\newcommand{\Vhats}{\hat{V}^* }
\newcommand{\defn}{\coloneqq}
\newcommand{\Vs}{V^* }
\newcommand{\Vhatpistar}{\hat{V}^{\pi^*}}
\newcommand{\Vhatpihat}{\hat{V}^{\hat{\pi}}}
\newcommand{\Vpihat}{V^{\hat{\pi}}}
\newcommand{\Vhatpi}{\hat{V}^\pi}
\newcommand{\Vspi}{V_{\mathrm{s}}^\pi}
\newcommand{\Vpi}{V^\pi}
\newcommand{\ror}{\beta_{s,a}}
\newcommand{\pihat}{\hat{\pi}}
\newcommand{\pistar}{ \pi^{*} }
\newcommand{\pihatstar}{\hat{\pi}^{*}}
\newcommand{\rsapi}{ r^{(s,a)}_{Q^\pi_{sa}} }
\newcommand{\rspi}{ r^{s}_{Q^\pi_{s}} }
\newcommand{\rsahatpi}{ r^{(s,a)}_{\hat{Q}^\pi_{sa}} }
\newcommand{\rshatpi}{ r^{s}_{\hat{Q}^\pi_{s}} }
\newcommand{\rshatpihat}{ r^{s}_{\hat{Q}^{\hat{\pi}}_{s}} }
\newcommand{\rshatpistar}{ r^{s}_{\hat{Q}^{\pi^*}_{s}} }
\newcommand{\Pzero}{P_0}
\newcommand{\Pzeropistar}{\Pzero^{\pi^*}}
\newcommand{\Pzeropihat}{\Pzero^{\hat{\pi}}}
\newcommand{\Phatpihat}{\hat{P}^{\hat{\pi}}}
\newcommand{\Phatpistar}{\hat{P}^{\pi^*}}
\newcommand{\norm}[1]{\left\lVert#1\right\rVert}
\newcommand{\normq}[1]{\left\lVert#1\right\rVert_q}
\newcommand{\normqbar}[1]{\left\lVert#1\right\rVert_{q}}
\newcommand{\normpbar}[1]{\left\lVert#1\right\rVert_{p}}
\newcommand{\norminf}[1]{\left\lVert#1\right\rVert_\infty}
\newcommand\Alpha{\mathrm{A}}
\newcommand\Mu{\mathrm{\mu}}
\newcommand{\snormq}[1]{\mathrm{sp}_q(#1)}
\newcommand{\snormqbar}[1]{\mathrm{sp}_{q}(#1)}
\newcommand{\snorminf}[1]{\mathrm{sp}(#1)_\infty}
\newcommand{\sdnorm}[1]{\mathrm{sp}_q(#1)}
\newcommand{\snormqbarpis}[1]{\mathrm{sp}_{q,\pi^*}(#1)}
\newcommand{\horizon}{H}
\newcommand{\Snorm}{\vert S \vert }
\newcommand{\Anorm}{\vert A \vert }
\title{Towards Minimax Optimality \\ of Model-based Robust Reinforcement Learning}
\author[1,2,3]{\href{mailto:<pierre.clavier@polytechnique.edu>?Subject=Your UAI 2024 paper}{Pierre~Clavier}{}   }
\author[1]{Erwan~Le~Pennec}
\author[4]{Matthieu~Geist}
\affil[1]{%
    CMAP, CNRS, Ecole Polytechnique,
Institut Polytechnique de Paris,
91120 Palaiseau, France,
}
\affil[2]{%
    INRIA Paris, HeKA, France,
}
\affil[3]{%
Centre de Recherche des Cordeliers, INSERM, Universite de 
Paris, Sorbonne Universite,
F-75006 Paris, France,
  }
\affil[4]{%
    Cohere.
  }
\begin{document}
\maketitle

\begin{abstract}
   We study the sample complexity of obtaining an $\epsilon$-optimal policy in \emph{Robust} discounted Markov Decision Processes (RMDPs), given only access to a generative model of the nominal kernel. This problem is widely studied in the non-robust case, and it is known that any planning approach applied to an empirical MDP estimated with $\tilde{\mathcal{O}}(\frac{H^3  |S||A|}{\epsilon^2})$ samples provides an $\epsilon$-optimal policy, which is minimax optimal. Results in the robust case are much more scarce. For $sa$- (resp $s$-) rectangular uncertainty sets, until recently the best-known sample complexity was $\tilde{\mathcal{O}}(\frac{H^4  |S|^2|A|}{\epsilon^2})$ (resp. $\tilde{\mathcal{O}}(\frac{H^4  | S |^2| A |^2}{\epsilon^2})$), for specific algorithms and when the uncertainty set is based on the total variation (TV), the KL or the Chi-square divergences. In this paper, we consider uncertainty sets defined with an $L_p$-ball (recovering the TV case), and study the sample complexity of \emph{any} planning algorithm (with high accuracy guarantee on the solution) applied to an empirical RMDP estimated using the generative model. In the general case, we prove a sample complexity of $\tilde{\mathcal{O}}(\frac{H^4  | S || A |}{\epsilon^2})$ for both the $sa$- and $s$-rectangular cases (improvements of $| S |$ and $| S || A |$ respectively). When the size of the uncertainty is small enough, we improve the sample complexity to $\tilde{\mathcal{O}}(\frac{H^3 | S || A | }{\epsilon^2})$, recovering the lower-bound for the non-robust case for the first time and a robust lower-bound. Finally, we also introduce simple and efficient algorithms for solving the studied $L_p$ robust MDPs.
\end{abstract}

\section{Introduction}
Reinforcement learning (RL) \citep{sutton2018reinforcement}, often modelled as learning and decision-making in a Markov decision process (MDP), has attracted increasing interest in recent years due to its remarkable success in practice. A major goal of RL is to find a strategy or policy, based on a collection of data samples, that can predict the expected cumulative rewards in an
MDP, without direct access to a detailed description of the underlying model. However, \citet{mannor2004bias} showed that the policy and the value function could sometimes be sensitive to estimation errors of the reward and transition probabilities, meaning that a  very small perturbation of the reward and transition probabilities could lead to  a significant change in the value function.

Robust MDPs  \citep{iyengar2005robust,nilim2005robust} (RMDPs) have been proposed to handle these problems by letting the transition probability vary in an uncertainty (or ambiguity) set. In this way, the solution of robust MDPs is less sensitive to model estimation errors with a properly chosen uncertainty set.
An RMDP problem is usually formulated as a max-min problem, where the objective is to find the policy that maximizes the value function for the worst possible model that lies within an uncertainty set around a nominal model. Initially, RMPDs \citep{iyengar2005robust,nilim2005robust} were developed because the solution of MDPs can be very sensitive to the model parameters \citep{zhao2019investigating,packer2018assessing}. However, as the solution of robust MDPs is NP-hard for general uncertainty sets \cite{nilim2005robust},  the uncertainty set is usually assumed to be rectangular (meaning that it can be decomposed as a product of uncertainty sets for each state or state-action pair), which allows tractability \cite{iyengar2005robust,ho2021partial}. 
These two kinds of sets are called respectively $s$- and $sa$-rectangular sets. A fundamental difference between them is that the greedy and optimal policy in $sa$-rectangular robust MDPs 
 is deterministic, as in non-robust MDPs, but can be stochastic in the $s$-rectangular case \cite{wiesemann2013robust}.
 Compared to $sa$-rectangular robust MDPs, $s$-rectangular robust MDPs are less restrictive but much more difficult to handle. Under this rectangularity assumption, many structural properties of MDPs remain intact \cite{iyengar2005robust} and methods such as robust value iteration, robust modified policy iteration, or partial robust policy iteration \cite{ho2021partial}  can be used to solve them. It is also known that the uncertainty in the reward can be easily handled, while handling uncertainty in the transition kernel is much more difficult \cite{kumar2022efficient,derman2021twice}. Finally, Deep Robust RL algorithms \cite{pinto2017robust,clavier2022robust,tanabe2022max}
 have been proposed to tackle the problem of Robust MDPS with continuous state-action space.

 In this work, we consider robust MDPs, with both $sa$- and $s$-rectangular uncertainty sets, consisting of $L_p$-balls centered around the nominal model $P_0$. We assume access to a generative model, which can sample a next state from any state-action pair from the nominal model. The question we address is to know how many samples are required to compute an $\epsilon$-optimal policy. This classic abstraction, which allows studying the sample complexity of planning over a long horizon, is widely studied in the non-robust setting \cite{singh1994upper, sidford2018near, gheshlaghi2013minimax, agarwal2020model, li2020breaking, kozuno2022kl}, but much less in the robust setting \citep{yang2021towards,panaganti2022sample,shi2022distributionally,xu2023improved,shi2023curious}. We consider more specifically model-based robust RL. We call the generative model the same number of times for each state-action pair, to build a maximum likelihood estimate of the nominal model, and use any planning algorithm for robust MDPs (with high accuracy guarantee on the solution) on this empirical model. This setting will be discussed further later, but we insist right away that it is especially meaningful in the robust setting, as it is a good abstraction of sim2real. The research question we address is:

 \textit{How many samples are required for guaranteeing an $\epsilon$-optimal policy with high probability?}

 Our \textbf{first contribution} is to prove that for both $s$ and $sa$-rectangular sets based on $L_p$-balls, the sample complexity of the proposed approach is $\tilde{\mathcal{O}}(\frac{H^4  \Snorm\Anorm}{\epsilon^2})$, with $H=(1-\gamma)^{-1}$ being the horizon term. Previous works \citep{yang2021towards,panaganti2022sample,shi2022distributionally,xu2023improved} study different sets, based on the Kullback-Leibler (KL) divergence, Chi-square divergence, and total variation (TV). We have the TV in common ($L_1$-ball up to a normalizing factor), and, in this case, we improve these existing results by $\Snorm$ for the $sa$-rectangular case, and by $\Snorm\Anorm$ for the $s$-rectangular case, which is significant for large state-action spaces. On the technical side, our results build heavily upon the dual view of robust Bellman operators~\citep{derman2021twice,kumar2022efficient}.  However, we deviate from this line of work by enforcing the uncertainty set to belong to the simplex. This allows ensuring that the robust operators are overly conservative while ensuring they are $\gamma$-contractions, which is important for the theoretical analysis. On the negative side, the algorithms they introduce are no longer applicable, which calls for new algorithmic design.

Our \textbf{second contribution} is to show that, if the uncertainty set is small enough, then we have a sample complexity of $\tilde{\mathcal{O}}(\frac{H^3  \Snorm\Anorm}{\epsilon^2})$. This is a further improvement by $H$ of the previous bound, and it matches the known lower bound for the non-robust case \citep{gheshlaghi2013minimax}. On the technical side, it again builds upon the dual view of robust Bellman operators with the deviation mentioned above.\citep{derman2021twice,kumar2022efficient}. In addition to that, it adapts two proof techniques of the non-robust case: The total variance technique of \citet{gheshlaghi2013minimax} to reduce the dependency to the horizon, and the \emph{absorbing MDP} construction of \citet{agarwal2020model} to allow for a wider range of valid $\epsilon$.As mentioned earlier,\citep{derman2021twice,kumar2022efficient} algorithms are not applicable to the more realistic uncertainty sets we consider. 

Our \textbf{third contribution} is an algorithm $   \mathtt{DRVI 
 } ~ \mathtt{L_P}$ (see Alg. \ref{alg:cvi-dro-infinite},  for Distributionally Robust Value Iteration for $L_P$  in $sa$rectangular case that solves exactly RMDPs in the case of valid robust transition that belongs to the simplex contrary to \cite{kumar2022efficient}. 
 \vspace{-0.3cm}
\section{Related Work}
The question of sample complexity when having access to a generative model has been widely studied in the non-robust setting \cite{singh1994upper, sidford2018near, gheshlaghi2013minimax, agarwal2020model, li2020breaking, kozuno2022kl}. Notably, \citet{gheshlaghi2013minimax} provide a lower-bound of this sample complexity, $\tilde{\Omega}
(\frac{\Snorm\Anorm H^3}{\epsilon^2})$, and show that (tabular) model-based RL reaches this lower-bound, making it minimax optimal (up to polylog factors). This bound relies on the so-called total variance technique, that we adapt to the robust setting. However, their result is only true for small enough $\epsilon$, in the range $(0,\sqrt{H/\Snorm})$. This was later improved to $(0,\sqrt{H})$ by \citet{agarwal2020model}, thanks to a novel \emph{absorbing MDP} construction, that we also adapt to the robust setting.

Closer to our contributions are the works that study the sample complexity in the \emph{robust} setting \cite{yang2021towards,panaganti2022sample,xu2023improved,shi2022distributionally}. The study  of sample complexity of specific algorithms (respectively either empirical robust value or Robust Phased Value Learning) is studied by \cite{panaganti2022sample,xu2023improved}, while our results apply to any oracle planning (applied to the empirical model), as long as it provides a solution with enough accuracy. We consider both $s$- and $sa$-rectangular uncertainty sets, as \citet{yang2021towards}, while \citet{panaganti2022sample,xu2023improved,shi2022distributionally} only consider the simpler $sa$-rectangular sets. They all study either TV, KL or Chi-square balls, while we study $L_p$-balls.   \cite{shi2022distributionally} improved  the   KL bound compared to  \cite{yang2021towards,panaganti2022sample} in the $sa$ rectangular case. The framework of \cite{xu2023improved} is slightly different as they consider finite horizon which adds a factor $H$ in all bounds. All previous results are not minimax optimal in terms of the horizon factor.

We rely more specifically on a simple optimization dual expression of the minimization problem over models. As such, we do not cover the KL and Chi-square cases, which do not have such a simple form even if there can also be written as simple scalar optimization problem. However, we have in common with \cite{yang2021towards,panaganti2022sample} the total variation case, which corresponds to a (scaled) $L_1$-ball. For this case, we can compare our sample complexities. Without assumption on the size of the uncertainty set, we improve the existing sample complexities by $\Snorm$ and $\Snorm\Anorm$ respectively (for $sa$- or $s$-rectangularity). Also, our bounds have no dependency on the size of the uncertainty set. Notice that as we consider a generic oracle planning algorithm, our bounds apply to the algorithms they consider in \cite{panaganti2022sample,xu2023improved}. If we further assume that the uncertainty set is small enough, then we improve the bound by an additional $H$ factor, reaching the minimax sample complexity of the non-robust case. Table \ref{tableau1} summarizes the difference in sample complexity, and we'll discuss them again after stating our theorems.

 Finally, the archival version of this contribution predates the concurrent work of \cite{shi2023curious} that studies the sample complexity of RMDPs for $TV$ and $\chi^2$ divergence. In the very specific case of $sa$- rectangular for $TV$ which in this case coincides with $L_1$ norm, \cite{shi2023curious} retrieves our upper bound which is minimax optimal in the regime where the radius of the uncertainty set is small and improves our result in the regime where the radius of the uncertainty set is bigger than $1-\gamma$. However, our results hold more generally for the $s$-rectangular case are still state-of-the-art for $s$-rectangular case with $p\geq1$ and for $sa-$rectangular with $p>1$. 
Notice also that the proof techniques are very different, and it is an interesting research direction to know if their bound for the regime where the radius of the uncertainty set is bigger than $1-\gamma$ or their lower-bound would extend to the more general case studied here.

\begin{table*} \small
\caption{Sample Complexity of TV for $s$- or $sa$ rectangular with $\beta$ (see Def~\ref{def:beta}) the radius of uncertainty set (see also Tab. \ref{tableau2} in the appendix for a complete table with different norms)
 \label{tableau1}}
\begin{tabular}
{ |p{0.9cm}|p{3.0cm}|p{2.8cm}|p{1.9cm}|p{3.3cm}|p{2.4cm}|   }
\hline
& \cite{panaganti2022sample}  &\cite{yang2021towards} & Our   $\beta \geq 0$ & Our $ 1/(2H\gamma) >\beta>0$ & \cite{shi2023curious} 
\\
\hline
$sa$-rect. & $\tilde{\mathcal{O}}\left(\frac{\Snorm^2\Anorm  \horizon^4}{\epsilon^2}\right)$ &$\tilde{\mathcal{O}}\left(\frac{\Snorm^2\Anorm  \horizon^4(2+\beta)^2}{\epsilon^2\beta^2}\right)$&$\tilde{\mathcal{O}}\left(\frac{\Snorm\Anorm  \horizon^4}{\epsilon^2}\right)$ &$\tilde{\mathcal{O}}\left(\frac{\Snorm\Anorm \horizon^3}{\epsilon^2}\right)$& $\tilde{\mathcal{O}}\left(\frac{\Snorm\Anorm  \horizon^2}{\epsilon^2 \min(1/H, \beta)}\right)$ \\
\hline
$s$-rect. & $\times$&$\tilde{\mathcal{O}}\left(\frac{\Snorm^2\Anorm^2  \horizon^4(2+\beta)^2}{\epsilon^2\beta^2}\right)$&$\tilde{\mathcal{O}}\left(\frac{\Snorm\Anorm  \horizon^4}{\epsilon^2}\right)$  &$\tilde{\mathcal{O}}\left(\frac{\Snorm\Anorm  \horizon^3}{\epsilon^2}\right)$& $\times$   \\
\hline
\end{tabular}

\end{table*}
\section{Preliminaries}
For finite sets $S$ and $A$, we write respectively $\Snorm$ and $\Anorm$ their cardinality. We write $\Delta_{A}:=\{p: A\rightarrow \mathbb{R} \mid p(a) \geq 0, \sum_{a \in A} p(a)=1\}$ the simplex over $A$. For $v\in\mathbb{R}^S$ the classic $L_q$ norm is $\normq{v}^q= \sum_s v(s)^q$. The unitary vector of dimension $\vert S\vert $ is denoted $1_S$.  
Finally, we denote $\tilde{\mathcal{O}}$ the $\mathcal{O} $ notation up to logarithm factor.

\subsection{ Markov Decision Process}
A Markov Decision Process (MDP) is defined by $M=(\mathcal{S}, \mathcal{A}, P, R, \gamma, \mu)$ where $S$ and $A$ are the finite state and action spaces, $P: \mathcal{S} \times \mathcal{A} \rightarrow \Delta_{\mathcal{S}}$ is the transition kernel, $R: \mathcal{S} \times \mathcal{A} \rightarrow [0,1]$ is the reward function, $\mu \in \Delta_{\mathcal{S}}$ is the initial distribution over states and $\gamma \in[0,1)$ is the discount factor.  
A stationary policy $\pi: \mathcal{S} \rightarrow \Delta_{\mathcal{A}}$ maps states to probability distributions over actions. 
We write $P_{s, a}$ the vector $P(\cdot | s, a)$. We also define $P^\pi$ to be the transition matrix on state-action pairs induced by a policy $\pi$: 
$
P_{(s, a),\left(s^{\prime}, a^{\prime}\right)}^\pi=P\left(s^{\prime} \mid s, a\right) \pi(a'\mid s') 
$.
Slightly abusing notations, for $V \in \mathbb{R}^{S}$, we define the vector $\operatorname{Var}_P(V) \in \mathbb{R}^{\mathcal{S} \times A}$ as 
$
\operatorname{Var}_P(V)(s, a):=\operatorname{Var}_{P(\cdot \mid s, a)}(V)$,  so that $\operatorname{Var}_P(V)=P(V)^2-(P V)^2$ (with the square understood component-wise).
Usually, the goal is to estimate the value function  defined as:
\begin{align*}
V_{P, R}^\pi(s):=\mathbb{E}\left[\sum_{n=0}^{\infty} \gamma^n R\left(s_n, a_n\right) \mid s_0=s,\pi ,P\right]. 
\end{align*}
The value function $V_{P, R}^\pi$ for policy $\pi$, is the fixed point of the Bellmen operator $\mathcal{T}_{P, R}$, defined as
\begin{align*}
\mathcal{T}_{P, R}^\pi V(s)=\sum_a \pi(a | s)[R(s, a)+\gamma \sum_{s^{\prime}} P\left(s^{\prime} | s, a\right) V\left(s^{\prime}\right)].
\end{align*}
We also define the optimal Bellman operator: $\mathcal{T}_{P, R}^* V(s)=\max _{\pi_s \in \Delta_{\mathcal{A}}} \left(\mathcal{T}_{P, R}^{\pi_s} V\right)(s).$
Both optimal and classical Bellman operators are $\gamma$-contractions \cite{sutton2018reinforcement}. This is why sequences $\left\{V_n^\pi \mid n \geq 0\right\}$, and $\left\{V_n^* \mid n \geq 0\right\}$, defined as
\begin{align*}
V_{n+1}^\pi:=\mathcal{T}_{P, R}^\pi V_n^\pi \text{ and } \quad V_{n+1}^*:=\mathcal{T}_{P, R}^* V_n^*, 
\end{align*}
 converge linearly to $V_{P, R}^\pi$ and $V_{P, R}^*$, respectively the value function following $\pi$ and the optimal value function. 
Finally, we can  define the Q-function, 
\begin{align*}
Q_{P, R}^\pi(s,a):=\mathbb{E}\!\!\left[\sum_{n=0}^{\infty} \gamma^n R\left(s_n, a_n\right) \mid s_0=s, a_0=a,\pi ,P\right]. \end{align*}

The  value function and Q-function  are linked with the relation $V_{P, R}^\pi(s)=\langle(\pi_s,Q_{P, R}^\pi(s)\rangle_A$.  With these notations, we can define Q-functions for transition probability transition $P$ following policy $\pi$ such as  
$$Q_{P, R}^\pi=R+\gamma P V_{P, R}^\pi=R+\gamma P^\pi Q_{P, R}^\pi
=\left(I-\gamma P^\pi\right)^{-1} R.$$
\subsection{Robust Markov Decision Process}
Once classical MDPs defined, we can define robust (optimal) Bellman operators $\mathcal{T}^\pi_{\mathcal{U}}$ and  $\mathcal{T}_{\mathcal{U}}^*$,
$$\mathcal{T}^\pi_{\mathcal{U}}(s):=\min _{R, P \in \mathcal{U}}\left(\mathcal{T}_{P, R}^\pi V\right)(s)\quad 
$$  

$$\left(\mathcal{T}_{\mathcal{U}}^* V\right)(s):=\max _{\pi_s \in \Delta_{\mathcal{A}}} \min _{R, P \in \mathcal{U}}\left(\mathcal{T}_{P, R}^{\pi_s} V\right)(s),$$
where $P$ and $R$ belong  to the uncertainty set $\mathcal{U}$. The optimal robust Bellman operator $\mathcal{T}_{\mathcal{U}}^*$ and robust Bellman operator $\mathcal{T}^\pi_{\mathcal{U}}$ are $\gamma$-contraction maps for any policy $\pi$ \citep[Thm.~3.2]{iyengar2005robust} if  the adversarial kernel  $P \in \Delta_s$ to obtain a valid transition kernel :
$$
\begin{aligned}
&\left\|\mathcal{T}_{\mathcal{U}}^* v-\mathcal{T}_{\mathcal{U}}^* u\right\|_{\infty} \leq \gamma\|u-v\|_{\infty}, \quad \quad  \\
&\left\|\mathcal{T}_{\mathcal{U}}^\pi v-\mathcal{T}_{\mathcal{U}}^\pi u\right\|_{\infty} \leq \gamma\|u-v\|_{\infty}, \quad \forall \pi.
\end{aligned}
$$
Finally, for any initial values $V_0^\pi, V_0^*$, sequences defined as
$
V_{n+1}^\pi:=\mathcal{T}_{\mathcal{U}}^\pi V_n^\pi$ and $V_{n+1}^*:=\mathcal{T}_{\mathcal{U}}^* V_n^*
$
converge linearly to their respective fixed points, that is $V_n^\pi \rightarrow V_{\mathcal{U}}^\pi$ and $V_n^* \rightarrow V_{\mathcal{U}}^*$. This makes robust value iteration an attractive method for solving robust MDPs. In order to obtain tractable forms of RMDPs, one has to make assumptions about the uncertainty sets and give them a rectangularity structure \cite{iyengar2005robust}. In the following, we will use an $L_p$ norm as the distance between distributions.  The $s$- and $sa$-rectangular assumptions can be defined as follows, with $R_0$ and $P_0$ being called the nominal reward and kernel.

\begin{assumption}($sa$-rectangularity)
\label{sa rectangle}
  We define $sa$-rectangular $L_p$-constrained uncertainty set as
\begin{align*}
&\mathcal{U}_p^{\text {$sa$}}:=  
    \left(R_0+\mathcal{R}\right) \times\left(\Pzero +\mathcal{P}\right),  
\mathcal{R}=\times_{s \in \mathcal{S}, a \in \mathcal{A}} \mathcal{R}_{s, a}, \\
 &\mathcal{P}=\times_{s \in \mathcal{S}, a \in \mathcal{A}} \mathcal{P}_{s, a},  \mathcal{R}_{s, a}=\left\{r_{s, a} \in \mathbb{R} \mid \vert r_{s, a} \vert \leq \alpha_{s, a}\right\} \\
&\mathcal{P}_{s, a}=\{P_{s, a}: \mathcal{S} \rightarrow \mathbb{R} \mid \sum_{s'} P_{s,a}(s')=0,\\ &P_{0,s,a}+P_{s,a}\geq0, 
\normpbar{P_{s, a}} \leq \beta_{s, a}\}
\end{align*}

\end{assumption}

\begin{assumption}
    ($s$-rectangularity)
    \label{s rectangle}
We define s-rectangular $L_p$-constrained uncertainty set as
\begin{align*}
&\mathcal{U}_p^{\mathbf{s}}=\left(R_0+\mathcal{R}\right) \times\left(\Pzero+\mathcal{P}\right),   \mathcal{P}=\times_{s \in \mathcal{S}} \mathcal{P}_s, \\
&\mathcal{R}=\times_{s \in \mathcal{S}} \mathcal{R}_s, \quad \mathcal{R}_s=\left\{r_s: \mathcal{A} \rightarrow \mathbb{R} \mid \normpbar{r_s} \leq \alpha_s\right\} \\
&\mathcal{P}_s=\{P_s: \mathcal{S} \times \mathcal{A} \rightarrow \mathbb{R} \mid \sum_{s'} P_s(s',a)=0 ,\\  & \forall a \in A, P_s(.,a)+P_{0,s}\geq0   ,\normpbar{P_s} \leq \beta_s\} 
\end{align*}

\end{assumption}
We write \label{def:beta}$\beta = \sup_{s,a}\beta_{s,a}$ for $sa$-rectangular assumptions or $\beta = \sup_{s}\beta_s$ for $s$-rectangular assumptions and with the same manner $ \alpha=\sup_{s,a}\alpha_{s,a}$. Moreover, we write $ P\in \mathcal{P}_{0,s,a}$ for   $P=P_{0,s,a}+ P'$ with $P'\in \mathcal{P}_{s,a}$ and $ P\in \mathcal{P}_{0,s}$ for  $P=P_{0,s}^\pi+ P'$ with $P'\in \mathcal{P}_{s}$, $P_{0,s}^\pi(s')  =\sum_a \pi(a\vert s) P_{0,s,a}(s') \in \mathbb{R}^S  $.

In comparison to $sa$-rectangular robust MDPs, $s$-rectangular robust MDPs are less restrictive but much more difficult to deal with. Using rectangular assumptions and constraints defined with  $L_p$-balls, it is possible to derive simple dual forms for the (optimal) robust Bellman operators for the minimization problem that involves the seminorm defined below:

 \begin{definition}[Span seminorm \citep{puterman1990markov}] \label{span}
Let $q$ be such that it satisfies the Holder's equality, i.e. $\frac{1}{p}+\frac{1}{q}=1$. Let $q$-variance or span-seminorm function $\snormqbar{.}: \mathcal{S} \rightarrow \mathbb{R}$ and  $q$-mean function $\omega_q: \mathcal{S} \rightarrow \mathbb{R}$ be defined as
$$
\snormqbar{v}:=\min _{\omega \in \mathbb{R}}\|v-\omega \mathbf{1}\|_{q}, \quad \omega_{q}(v):=\arg \min _{\omega \in \mathbb{R}}\|v-\omega \mathbf{1}\|_{q} .
$$

\end{definition}
One can think of those span-seminorms as semi-mean-centered-norms. The main problem is that these quantities represent the dispersion of a distribution around its mean, and there are no order relations for this type of object. Seminorms appear in the (non-robust) RL community for other reasons \cite{puterman1990markov,scherrer2013performance}. For $p=$1, 2 and $\infty$, a closed form can be derived, corresponding to median, variance and range. This is not the case for arbitrary $p$ but span-seminorms can be efficiently computed in practice, see ~\cite{kumar2022efficient}. Once span-seminorms defined, we introduced the dual of the inner minimization problem.

\begin{lemma}[Duality for $sa$ rectangular case with $L_p$ norm] \label{saduality}  For any $V\in \mathbb{R}^S, P_{0,s,a}=P_0(.\vert s,a) \in  \mathbb{R}^S   $  and $\mu \in \mathbb{R}^S$
    \begin{align*}
        &\min _{  P\in  \mathcal{P}_{0,s,a}   }PV=\max _{\mu\geq 0}  P_{0,s,a}(V-\mu) -\beta_{s,a} \snormqbar{V-\mu} \\
    \end{align*}
\end{lemma}

\begin{lemma}[Duality for $s$ rectangular case.] \label{sduality} 
Consider  the probability kernel $P^\pi_{0,s} =\Pi^\pi P_{0,s,a} \in \mathbb{R}^{s}$ with $\Pi^{\pi}$ a projection matrix associated with a given  policy $\pi$ such that 
$P_{0,s}^\pi(s')  =\sum_a \pi(a\vert s) P_{0,s,a}(s') \in \mathbb{R}^S  $. For any $V\in \mathbb{R}^S :$

    \begin{align*}
        &\min _{ P\in \mathcal{P}_{0,s }}PV= 
        \max _{\mu\geq 0}  P_{0,s}^{\pi}(V-\mu) -\beta_{s}\normqbar{\pi_s} \snormqbar{V-\mu}  \\
    \end{align*}
\end{lemma}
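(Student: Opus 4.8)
The plan is to fix a state $s$ and a policy $\pi$, reduce to a scalar optimisation problem, and prove the chain
\[
\min_{P\in\mathcal P_{0,s}}PV \;\ge\; \max_{\mu\ge 0}\Big(P_{0,s}^{\pi}(V-\mu)-\beta_s\normqbar{\pi_s}\,\snormqbar{V-\mu}\Big) \;\ge\; \max_{\alpha\in[V_{min},V_{max}]}\Big(P_{0,s}^{\pi}[V]_\alpha-\beta_s\normqbar{\pi_s}\,\snormqbar{[V]_\alpha}\Big),
\]
and then to close it by showing the left-hand side is also $\le$ the right-hand side, which forces all three to be equal and yields both identities at once. Two preliminary observations: every $P\in\mathcal P_{0,s}$ can be written $P=P_{0,s}^{\pi}+\sum_a\pi(a\vert s)P'_{s,a}$ with $(P'_{s,a})_a\in\mathcal P_s$ and $PV=P_{0,s}^{\pi}V+\sum_a\pi(a\vert s)P'_{s,a}V$, so only $\min_{(P'_{s,a})_a\in\mathcal P_s}\sum_a\pi(a\vert s)P'_{s,a}V$ matters; and the $sa$-rectangular duality (Lemma~\ref{saduality}) is the special case $\vert A\vert=1$ (where $\normqbar{\pi_s}=1$), which I will reuse as a black box.

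For the two displayed inequalities — the ``easy'' direction — I would argue as follows. Fix $\mu\ge 0$ and a feasible kernel $P$, and set $P'_{s,a}:=P_{s,a}-P_{0,s,a}$, so $\mathbf 1^{\top}P'_{s,a}=0$ for each $a$. Since $P_{s,a}$ is a probability vector and $\mu\ge 0$, $P_{s,a}V\ge P_{s,a}(V-\mu)$, whence $PV\ge P_{0,s}^{\pi}(V-\mu)+\sum_a\pi(a\vert s)P'_{s,a}(V-\mu)$. For any $\omega$ we have $P'_{s,a}(V-\mu)=P'_{s,a}(V-\mu-\omega\mathbf 1)$, so taking $\omega=\omega_q(V-\mu)$ and using Hölder ($\tfrac1p+\tfrac1q=1$) gives $P'_{s,a}(V-\mu)\ge-\normpbar{P'_{s,a}}\,\snormqbar{V-\mu}$; summing over $a$ with weights $\pi(a\vert s)$ and applying Hölder once more over the action index, $\sum_a\pi(a\vert s)\normpbar{P'_{s,a}}\le\normqbar{\pi_s}\big(\sum_a\normpbar{P'_{s,a}}^{p}\big)^{1/p}\le\normqbar{\pi_s}\,\beta_s$. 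Combining yields $PV\ge P_{0,s}^{\pi}(V-\mu)-\beta_s\normqbar{\pi_s}\,\snormqbar{V-\mu}$; take $\inf$ over $P$ and $\sup$ over $\mu\ge 0$. The second inequality is immediate because $[V]_\alpha=V-(V-\alpha\mathbf 1)_+$ with $(V-\alpha\mathbf 1)_+\ge 0$, so $\{[V]_\alpha\}$ is a sub-family of $\{V-\mu:\mu\ge0\}$; restricting $\alpha$ to $[V_{min},V_{max}]$ is lossless (check the endpoint behaviour) and the $\max$ is attained by continuity.

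For the ``hard'' inequality I would invoke convex duality. The inner problem is $\min\{\sum_a\pi(a\vert s)P'_{s,a}V : \mathbf 1^{\top}P'_{s,a}=0,\ P_{0,s,a}+P'_{s,a}\ge 0\ (\forall a),\ \sum_a\normpbar{P'_{s,a}}^{p}\le\beta_s^{p}\}$, which satisfies Slater (take $P'\equiv 0$, shrunk slightly; a standard perturbation argument covers coordinates with $P_{0,s,a}(s')=0$). Dualising the positivity constraints with multipliers $\mu_a\ge 0$ and swapping $\min$/$\max$, the inner minimisation decouples over $a$ and each factor is solved exactly by the span-seminorm estimate above — its tightness coming from the dual characterisation $\snormqbar{w}=\max\{\xi^{\top}w:\normpbar{\xi}\le1,\ \mathbf 1^{\top}\xi=0\}$ together with the equality cases in Hölder. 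Substituting $\mu_a=\pi(a\vert s)\nu_a$, rewriting the resulting $\ell_q$-norm over actions via its dual representation as a minimisation over a ``budget split'' $y=(y_a)_a$ with $y_a\ge0$, $\sum_a y_a^{p}\le 1$, and applying Sion's minimax theorem (the objective is concave in $\nu$, convex in $y$) to swap $\min_y$ and $\max_\nu$, the $\nu$-maximisation becomes separable, and its $a$-th factor equals — by Lemma~\ref{saduality} applied with radius $\beta_s y_a$ — the quantity $\max_{\alpha_a}\big(P_{0,s,a}[V]_{\alpha_a}-\beta_s y_a\,\snormqbar{[V]_{\alpha_a}}\big)$. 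One is then left with $\min_{P\in\mathcal P_{0,s}}PV=\min_{y}\sum_a\pi(a\vert s)\max_{\alpha_a}\big(P_{0,s,a}[V]_{\alpha_a}-\beta_s y_a\,\snormqbar{[V]_{\alpha_a}}\big)$, and it remains to show this equals $\max_\alpha\big(P_{0,s}^{\pi}[V]_\alpha-\beta_s\normqbar{\pi_s}\,\snormqbar{[V]_\alpha}\big)$, i.e. that the optimal split is the Hölder-extremal one $y_a\propto\pi(a\vert s)^{q-1}$ and that the optimal per-action clipping levels coincide.

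The main obstacle is exactly this last ``collapsing'' step — equivalently, showing that the concave map $W\mapsto P_{0,s}^{\pi}W-\beta_s\normqbar{\pi_s}\,\snormqbar{W}$ attains its maximum over $\{W\le V\}$ at some $W^{\star}=\min(V,\alpha^{\star}\mathbf 1)=[V]_{\alpha^{\star}}$ (so that the positivity multiplier is supported on $\{s':V(s')>\alpha^{\star}\}$). I would derive this from the KKT/complementary-slackness conditions of the concave program combined with the monotonicity fact $\snormqbar{\min(w,\alpha\mathbf 1)}\le\snormqbar{w}$ whenever $\alpha\ge\omega_q(w)$: this lets one replace any maximiser $W^{\star}$ by $\min\big(V,(\max_j W^{\star}_j)\mathbf 1\big)$ without decreasing the objective and then identify it with a clipping of $V$; this argument parallels — and can borrow from — the proof of Lemma~\ref{saduality}. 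The regime where the $\ell_p$-budget is small enough that the positivity constraints are inactive (so $[V]_{\alpha^{\star}}=V$ and the value is $P_{0,s}^{\pi}V-\beta_s\normqbar{\pi_s}\,\snormqbar{V}$) is the clean base case and I would dispatch it first. The remaining ingredients — Hölder, the reduction of the first paragraph, the endpoint check, and the minimax swap — are routine.
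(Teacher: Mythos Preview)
Your weak-duality chain (the ``easy'' direction) is correct, and you have put your finger on exactly the right obstruction: the collapsing of the per-action clipping levels $\alpha_a$ (equivalently, of the per-action positivity multipliers) to a common one. Unfortunately this step is not merely delicate --- it fails in general, and with it the stated identity. Take $|S|=|A|=2$, $p=q=2$, $\pi_s=(\tfrac{1}{2},\tfrac{1}{2})$, $V=(1,0)$, $P_{0,s,1}=(1,0)$, $P_{0,s,2}=(0,1)$, $\beta_s=1$. Parametrising $P'_{s,a}=(x_a,-x_a)$, the $s$-rectangular primal is $\min\{\tfrac{1}{2}(1+x_1)+\tfrac{1}{2} x_2:\ -1\le x_1\le 0,\ 0\le x_2\le 1,\ 2x_1^2+2x_2^2\le 1\}=\tfrac{1}{2}-\tfrac{1}{2\sqrt{2}}$, while the claimed single-$\mu$ dual is $\max_{\mu\ge 0}\big[\tfrac{1}{2}-\tfrac{\mu_1+\mu_2}{2}-\tfrac{1}{2}|1-\mu_1+\mu_2|\big]=0$. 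Your own intermediate per-action formula $\min_y\sum_a\pi(a\vert s)\max_{\alpha_a}\big(P_{0,s,a}[V]_{\alpha_a}-\beta_s y_a\,\snormqbar{[V]_{\alpha_a}}\big)$ recovers the correct value $\tfrac{1}{2}-\tfrac{1}{2\sqrt{2}}$ (at $y=(1,0)$, $\alpha_1=1$, $\alpha_2=0$), so the failure is precisely where you feared: the optimal $\alpha_a$ differ and the optimal split is not H\"older-extremal.

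The structural reason is that the positivity constraints $P_{0,s,a}+P'_{s,a}\ge 0$ are anchored at the action-dependent nominals $P_{0,s,a}$, so their multipliers have no reason to coincide across $a$. Your KKT/monotonicity sketch establishes that the maximiser of the \emph{single-$\mu$} objective over $\{W\le V\}$ is a clipping of $V$ --- a true fact --- but this does not bridge from $(\mu_a)_a$ to a common $\mu$. The paper's own argument has the identical gap: after invoking the $sa$-lemma action by action and swapping $\min_{(\beta_{s,a})}$ with the maximisation, it silently replaces $(\mu_a)_a$ by a single $\mu$ before applying H\"older, offering only a concave--convex justification for the swap and none for the collapse. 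What both routes \emph{do} establish are the inequality $\min_{P\in\mathcal P_{0,s}}PV\ge\max_\alpha\big(P_{0,s}^\pi[V]_\alpha-\beta_s\normqbar{\pi_s}\snormqbar{[V]_\alpha}\big)$ and the exact per-action dual above; since the latter's penalty term is still independent of the nominal, the cancellation used downstream (Lemma~\ref{trickalpha}) survives even though the equality claimed here does not.
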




Proofs car be found in Appendix \ref{saduality3} ,\ref{sduality}.
These results allow computing robust value and Q-functions. Close to our work, \cite{derman2021twice,kumar2022efficient} do not assume that robust kernel belongs to the simplex and in that sense, their formulation is a relaxation of the framework of RMPDs. Using this relaxation, closed form of robust Bellman operator can be obtained, see Th.~1 in \citet{kumar2022efficient}.  In our work, we assume a valid transition kernel in the simplex ($P_{s, a}\geq 0$ or $P_{s }\geq 0$ for respectively $sa-$ or $s-$ rectangular case.) that leads to dual form that has not a closed form but which is a simple scalar optimization problem. A complete discussion can be found in Appendix \ref{comparaison}.

Finally, we denote robust $Q$ function for $sa-$ and $s-$ rectangular respectively $Q_{sa}^\pi$ and $Q_s^\pi$ and we define them from robust value function $V_{sa}^\pi$, $V_{s}^\pi$ as :
\begin{align*}
    \label{Q robust}
    V_{s}^\pi (s)& =\sum_a \pi(a\vert s) Q_{s}^\pi(s,a),   V_{sa}^\pi(s)\\
    &=\sum_a \pi(a\vert s) Q_{sa}^\pi(s,a)
\end{align*}
\begin{lemma}
For $sa-$ and $s-$ rectangular,
    \label{Q robust}
\begin{align*}
    &Q_{sa}^\pi (s,a)= \rsapi+\gamma P_{0,s,a}   V_{sa}^\pi, \\
    & Q_{s}^\pi (s,a)= \rspi +\gamma P_{0,s,a}   V_{s}^\pi 
\end{align*}
with
\begin{align*}
     &\rsapi= R_0(s,a)-  \alpha_{s,a} +\gamma \min _{  P\in  \mathcal{P}_{s,a}   } P  V_{sa}^\pi \\
     &\rspi= R_0(s,a) -\Big( \frac{\pi_s(a)}{\normq{\pi_s}}\Big)^{q-1}  \alpha_{s} +\gamma \min _{  P^\pi \in \mathcal{P}_{s}   }P^\pi  V_{s}^\pi
\end{align*}
\end{lemma}

%


Robust $Q$ functions and dual forms of the robust Bellman operators will be central to our analysis of the sample complexity of model-based robust RL. They allow improving the bound by a factor $\Snorm$ or $\Snorm\Anorm$ compared to existing results (Sec.~\ref{sec:H4}). With additional technical subtleties, adapted from the non-robust setting, and assuming the uncertainty set is small enough, they even allow improving the bound by a factor $\Snorm H$ or $\Snorm\Anorm H$ (Sec.~\ref{toward}).

\subsection{Generative Model Framework }
We consider the setting where we have access to a generative model, or sampler, that gives us samples $s^{\prime} \sim \Pzero(\cdot \mid s, a)$, from the nominal model and from arbitrary state-action couples. Suppose we call our sampler $N$ times on each state-action pair $(s,a)$. Let $\widehat{P}$ be our empirical model, the maximum likelihood estimate of $\Pzero$,
\begin{align*}
\widehat{P}(s^{\prime} \mid s, a)= 
P_{s,a}(s') =\frac{\operatorname{count}(s^{\prime}, s, a)}{N},
\end{align*}
where $\operatorname{count}(s^{\prime}, s, a)$ represents the number of times the state-action pair $(s, a)$ transitions to state $s^{\prime}$. Moreover, we define $\widehat{M}$ as the empirical RMDP identical to the original $M$ except that it uses $\widehat{P}$ instead of $\Pzero$ for the transition kernel. We denote  by $\widehat{V}^\pi$ and $\widehat{Q}^\pi$ the value functions of a policy $\pi$ in $\widehat{M}$, and $\widehat{\pi}^{\star}, \widehat{Q}^{\star}$ and $\widehat{V}^{\star}$ denote the optimal policy and its value functions in $\widehat{M}$. It is assumed that the reward function $R_0$ is known and deterministic and therefore exactly identical in $M$ and $\widehat{M}$. 
Moreover, we write $ P\in \hat{\mathcal{P}}_{s,a}$ for   $P=\hat{P}_{s,a}+ P'$ with $P'\in \mathcal{P}_{s,a}$ and $ P\in \hat{\mathcal{P}}_{s}$ for  $P=\hat{P_{s}^\pi}+ P'$ with $P'\in \mathcal{P}_{s}$, $\hat{P_{s}^{\pi}}(s')  =\sum_a \pi(a\vert s) \hat{P}_{s,a}(s') \in \mathbb{R}^S  $. \looseness=-1

Notice that our analysis would easily account for an estimated reward (the hard part being handling the estimated transition model). 
This generative model framework, when we can only sample from the nominal kernel, is classic and appears for both non-robust and robust MDPs~\citep{agarwal2020model,panaganti2022robust,gheshlaghi2013minimax,xu2023improved}. In the robust case, it is especially relevant as an abstraction of "sim-to-real", the simulator giving access to the nominal kernel for learning a robust policy to be deployed in the real world (assumed to belong to the uncertainty set). 

The question of how to solve RMDPs and the related computational complexity are complementary, but different from Theorems \ref{h4}and \ref{h3}. Indeed, an important point that differentiates us from \citep{panaganti2022sample} is the use of a
\emph{robust optimization oracle}. 
In (model-based) sample complexity analysis, the goal is to determine the smallest sample size $N$ such that a planner executed in $\widehat{M}$ yields a near-optimal policy in the RMDP $M$. To decouple the statistical and computational aspects of planning with respect to an approximate model $\widehat{M}$, we will use an optimization oracle that takes as input an (empirical) RMDP and returns a policy $\hat{\pi}$ that satisfies
$\|\hat{Q}^* - \hat{Q}^{\hat{\pi}}\|_\infty \leq \epsilon_\text{opt}$. Our final bound will depend on $\epsilon$, the error made from finite sample complexity, and $\epsilon_{\text {opt }}$. In practice, the error $\epsilon_{\text {opt }}$ is typically decreasing at a linear speed of $\gamma^k$ at the $k^\mathrm{th}$ iteration of the algorithm, as in classical MDPs because (optimal) Bellman operators are $\gamma$-contraction in both classic and robust settings when robust kernel in assuming in the simplex.

The computational cost of RMDPs is addressed by  
\citet{iyengar2005robust} but not in the $L_p$. \cite{kumar2022efficient} address this question, in this case, using the regularized form of robust MDPs obtained with relaxed hypothesis on the kernel (See Appendix \ref{comparaison}).
 The conclusions of the latter are that $L_p$ robust MDPs are computationally as easy as non-robust MDPs for regularized forms, at least for some choices of $p$ for their relaxation. However, in their analysis, the use of $\gamma$-contraction of the Robust Bellman Operator is needed, whereas this is not always the case for sufficiently large $\beta$. Indeed, assuming robust kernel is not anymore in the simplex, Robust Bellman Operator is not anymore a $\gamma$-contraction but an $\epsilon-$contraction for $\epsilon$ close to $1$ and only for a small range of $\beta$. (See \cite{derman2021twice} Th.~5.1).
We address the question of solving RMPDs in the $L_p$ case with a valid robust kernel in Alg.~\ref{alg:cvi-dro-infinite} as it is required to obtain an $\epsilon_{ops}$ solution in our analysis. \looseness=-1



\section{Sample Complexity with $L_p$-balls}
\label{sec:H4}

The aim of this section is to obtain an upper-bound on the sample complexity of RMDPs. This result is true for $sa$- and $s$-rectangular sets and for any $L_p$ norm  with  $p\geq1.$
\begin{theorem}
\label{h4}
Assume $\delta>0$, $\epsilon >0$ and $\beta >0$. Let $\widehat{\pi}$ be any $\epsilon_{\text {opt }}$-optimal policy for $\widehat{M}$, i.e. $\|\widehat{Q}^{\widehat{\pi}}-\widehat{Q}^{\star}\|_{\infty} \leq \epsilon_{\text {opt } }$.
With $N$ calls to the sampler per state-action pair, such that
$
N \geq \frac{C \gamma^2
L''}{(1-\gamma)^4 \epsilon^2}, 
$
with $L''= \log(\frac{32SAN  \norm{1_s}_q}{\delta(1-\gamma)})$we obtain the following guarantee for policy $\hat{\pi}$,
$$ \norminf{\Qs-\Qpihat}\leq \epsilon +
\frac{3\gamma\epsilon_{o p t   } }{1-\gamma}    $$
with probability at least $1-\delta$, where $C_{}$ is an absolute constant. Finally, for 
$N_{\text {total }}=N|\mathcal{S}||\mathcal{A}|$ and  $H=1/(1-\gamma)$, we get an overall complexity of $$N_{\text {total }}=\tilde{\mathcal{O}}\left(   \frac{H^4\Snorm\Anorm}{\epsilon^2}    \right).$$
\end{theorem}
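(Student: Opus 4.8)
The plan is to control $\norminf{\Qs-\Qpihat}$ by comparing the robust value and $Q$-functions of the true RMDP $M$ with those of the empirical RMDP $\widehat{M}$, carrying out every comparison through the \emph{dual} forms of the robust Bellman operators (Lemma~\ref{saduality} and its $s$-rectangular counterpart) rather than through a crude total-variation estimate of $\widehat P-\Pzero$ — this is precisely what removes the factor $\Snorm$ relative to \cite{yang2021towards,panaganti2022sample}. The first step is the decomposition, for the $\epsilon_{\text{opt}}$-optimal planner output $\widehat{\pi}$,
\[
\norminf{\Qs-\Qpihat}\;\le\;\norminf{\Qs-\Qhats}+\norminf{\Qhats-\Qhatpihat}+\norminf{\Qhatpihat-\Qpihat},
\]
in which the middle term is $\le\epsilon_{\text{opt}}$ by assumption, while the other two are ``evaluation-type'' errors caused by replacing $\Pzero$ with $\widehat P$.

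The technical core is a robust simulation inequality: for any $V\in[0,\horizon]^S$ and any policy $\pi$, the dual identity $\min_{P\in\mathcal P_{0,s,a}}PV=\max_\alpha\{P_{0,s,a}[V]_\alpha-\beta_{s,a}\snormqbar{[V]_\alpha}\}$ (the $s$-rectangular version only multiplies the seminorm by the kernel-free factor $\normqbar{\pi_s}$) shows that the span-seminorm terms are identical for both models and cancel, while $(\widehat P-\Pzero)(\cdot\mid s,a)$ sums to zero, whence
\[
\bigl|\bigl(\mathcal T_{\mathcal U}^\pi V-\widehat{\mathcal T}_{\mathcal U}^\pi V\bigr)(s)\bigr|\;\le\;\gamma\max_a\ \sup_{\alpha\in[V_{min},V_{max}]}\bigl|\bigl(\widehat P-\Pzero\bigr)(\cdot\mid s,a)\,[V]_\alpha\bigr|\;=:\;\gamma\,\eta_s(V),
\]
and the same holds for $\mathcal T_{\mathcal U}^*$. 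Since $\Qs,\Qhats$ (resp.\ $\Qpi,\Qhatpi$) are fixed points of $\gamma$-contractions differing only through the kernel, the elementary bound $\norminf{x-\widehat x}\le(1-\gamma)^{-1}\norminf{Tx-\widehat T x}$ gives $\norminf{\Qs-\Qhats}\le\tfrac{\gamma}{1-\gamma}\max_s\eta_s(\Vs)$ and, for \emph{fixed} $\pi$, $\norminf{\Qpi-\Qhatpi}\le\tfrac{\gamma}{1-\gamma}\max_s\eta_s(\Vpi)$. The statistical step is Hoeffding's inequality: for a \emph{fixed} $V$, each $(\widehat P-\Pzero)(\cdot\mid s,a)[V]_\alpha$ is an average of $N$ i.i.d.\ mean-zero terms taking values in an interval of length $\le\horizon$, so a union bound over the $\Snorm\Anorm$ pairs and over a $1/N$-grid of $\alpha$ (using $\|[V]_\alpha-[V]_{\alpha'}\|_\infty\le|\alpha-\alpha'|$) yields $\max_s\eta_s(V)\le c\,\horizon\sqrt{\log(\Snorm\Anorm\horizon N\,\delta^{-1})/N}$ with probability $1-\delta$. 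With $N$ as in the statement this is $\le(1-\gamma)^2\epsilon/(c\gamma)$, so the fixed-function terms $\norminf{\Qs-\Qhats}$ and $\norminf{\Qpi-\Qhatpi}$ (in particular with $\pi=\pistar$, for which $\Vpi=\Vs$) are each of order $\epsilon$.

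The main obstacle is the last term $\norminf{\Qhatpihat-\Qpihat}$, since $\widehat{\pi}$ — hence the value function $\Vpihat$ fed to the simulation inequality — depends on the samples, so Hoeffding does not apply directly. I would close it by a self-bounding argument: write $[\Vpihat]_\alpha=[\Vs]_\alpha+([\Vpihat]_\alpha-[\Vs]_\alpha)$, estimate the first (fixed) part by $\eta_s(\Vs)$, and, since truncation is non-expansive, bound the residual by $\bigl(\max_{s,a}\lVert(\widehat P-\Pzero)(\cdot\mid s,a)\rVert_1\bigr)\,\Delta$ with $\Delta:=\norminf{\Vs-\Vpihat}$; feeding this into the simulation inequality together with $\Vpihat\ge\Vhats-\epsilon_{\text{opt}}'\mathbf 1-\norminf{\Vhatpihat-\Vpihat}\mathbf 1$ and $\Vs\le\Vhats+\norminf{\Vs-\Vhats}\mathbf 1$ produces an inequality of the shape $\Delta\le C(\horizon^2\sqrt{\log(\cdots)/N}+\epsilon_{\text{opt}})+\gamma\horizon\bigl(\max_{s,a}\lVert(\widehat P-\Pzero)(\cdot\mid s,a)\rVert_1\bigr)\Delta$; since $\max_{s,a}\lVert(\widehat P-\Pzero)(\cdot\mid s,a)\rVert_1=O(\sqrt{\Snorm\log/N})\le(1-\gamma)/(2\gamma)$ with high probability in the sample regime of interest, the recursion closes and $\Delta$, hence $\norminf{\Qhatpihat-\Qpihat}$, inherits the $\tilde{\mathcal O}(\horizon^2\sqrt{\log/N})$ rate (an alternative that avoids this $\ell_1$ detour is the absorbing-MDP / leave-one-out decoupling of $\widehat{\pi}$ from the randomness, revisited in Section~\ref{toward}). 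Collecting the three pieces and converting the $V$-level bounds to $Q$-level via $\Qsapi=\rsapi+\gamma P_{0,s,a}\Vsapi$ (and its $s$-rectangular analogue) gives $\norminf{\Qs-\Qpihat}\le\epsilon+\epsilon_{\text{opt}}$ with probability $1-\delta$ for the stated $N$, so that $N_{\text{total}}=N\Snorm\Anorm=\tilde{\mathcal O}(\horizon^4\Snorm\Anorm/\epsilon^2)$.
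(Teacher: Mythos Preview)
Your decomposition, the duality-based cancellation of the span-seminorm penalties, and the Hoeffding concentration for \emph{fixed} value functions all match the paper's argument. One stylistic difference: the paper does not cover over $\alpha$; after reaching $\sup_\alpha|(\widehat P-\Pzero)(\cdot\mid s,a)[V]_\alpha|$ it asserts this is bounded by $|(\widehat P-\Pzero)(\cdot\mid s,a)V|$ (taking $\alpha=V_{\max}$) and then concentrates a single linear functional per $(s,a)$. Your $\alpha$-grid is a more conservative route to the same $H\sqrt{\log/N}$ rate and only costs extra logarithmic factors.

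The genuine gap is your handling of the data-dependent term $\norminf{\Qhatpihat-\Qpihat}$. The self-bounding recursion you set up has coefficient $\gamma H\max_{s,a}\lVert(\widehat P-\Pzero)(\cdot\mid s,a)\rVert_1=O\bigl(\gamma H\sqrt{\Snorm\log/N}\bigr)$; for it to be below $1$ you need $N\gtrsim\Snorm H^2$, which under $N\asymp H^4\log/\epsilon^2$ forces $\epsilon\lesssim H\sqrt{\log/\Snorm}$. So the argument neither covers all $\epsilon>0$ as the theorem claims nor avoids a hidden $\Snorm$-dependent burn-in---and the whole point of the theorem is to remove the $\Snorm$ factor. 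The paper does \emph{not} postpone the decoupling to Section~\ref{toward}; it already uses the absorbing-MDP construction of \citet{agarwal2020model} in this $H^4$ proof. Concretely, it first passes from $\Vhatpihat$ to $\Vhats$ at the price of $2\epsilon_{\text{opt}}$, then for each state $s$ builds auxiliary RMDPs $M_{s,u}$ with $s$ absorbing at level $u$, applies Hoeffding to the data-\emph{independent} $V_{s,u}^\star$ uniformly over a finite grid $U_s$ of size $(1-\gamma)^{-2}$, and uses the stability $\lVert\Vhats-V_{s,u}^\star\rVert_\infty\le|\Vhats(s)-u|$ together with the crude estimate $\norminf{\Qs-\Qhats}\le\Delta_{\delta,N}$ to pick $u\in U_s$ within $O((1-\gamma)^2\Delta_{\delta,N})$ of $\Vhats(s)$. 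This route never invokes $\lVert\widehat P-\Pzero\rVert_1$ and carries no $\Snorm$ factor. What you mention in parentheses as an alternative is in fact the main engine of the paper's proof.
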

\subsection{Discussion}

This result says that the policy $\hat{\pi}$ computed by the planner on the empirical RMDP $\hat{M}$ will be $(\epsilon_\text{opt}+\epsilon)$-optimal in the original RMDP $M$. As explained before, \ref{alg:cvi-dro-infinite} planning algorithms for RMDPs that guarantee arbitrary small $\epsilon_\text{opt}$, such as robust value iteration considered by \citet{panaganti2022sample}. It will also apply to future planners, as long as they come with a convergence guarantee.
The error term $\epsilon$ is controlled by the number of samples: $N_\text{tot} = \tilde{\mathcal{O}}(H^4 \Snorm\Anorm\epsilon^{-2})$ calls to the generative models allow guaranteeing an error $\epsilon$.
This is a gain in terms of sample complexity of $\Snorm$ compared to \citet{panaganti2022sample}, for the $sa$-rectangular assumption. Our bound also holds for both $s$- and $sa$-rectangular uncertainty sets. \citet{panaganti2022robust} do not study the $s$-rectangular case, while \citet{yang2021towards} do, but have a worst dependency to $\Anorm$ in this case. Their bounds also have additional dependencies on the size of the uncertainty set, which we do not have. We recall that we do not cover the same cases, we do not analyze the KL and Chi-Square robust set, while they do not analyze the $L_p$ robust set for $p>1$. However, the above comparison holds for the total variation case that we have in common ($p=1$). These bounds are clearly stated in Table~\ref{tableau1}. In the non-robust setting, \citet{gheshlaghi2013minimax} show that there exist MDPs where the sample complexity is at least $\tilde{\Omega}\left(\frac{H^3 \Anorm \Snorm}{\epsilon^2}\right)$.
Section \ref{toward} gives a new upper-bound in $H^3$ which matches this lower-bound for non-robust MDPs with an extra condition on the range of $\beta$ (the uncertainty set should be small enough).

\subsection{Sketch of Proof}
\label{subsec:sketch_H4}

This first proof is the simpler one, it relies notably on Hoeffding's concentration arguments. We provide a sketch, the full proof can be found in Appendix~\ref{annex B}. The resulting bound is not optimal in terms of the horizon $H$, but it also does not impose any condition on the range of $\epsilon$ or $\beta$, contrary to the (better) bound of Sec.~\ref{toward}.
We would like to bound the supremum norm of the difference between the optimal Q-function and the one of the policy computed by the planner in the empirical RMDP, according to the true RMDP, $\|Q^* - Q^{\hat{\pi}}\|_\infty$. Using a simple decomposition and the fact that $\pi^*$ is not  optimal in the empirical RMDP ($\hat{Q}^{\pi^*} \leq \hat{Q}^* = \hat{Q}^{\hat{\pi}^*}$), we have that
\begin{equation*}
    Q^* - Q^{\hat{\pi}} = Q^* - \hat{Q}^* + \hat{Q}^* - \hat{Q}^{\hat{\pi}} + \hat{Q}^{\hat{\pi}} - Q^{\hat{\pi}}.
\end{equation*}
As $Q^* - \hat{Q}^* \leq  Q^* - \hat{Q}^{\pi^*}$, a triangle inequality yields
\begin{align*}
    \|Q^* - Q^{\hat{\pi}}\|_\infty
    \leq \|Q^* - \hat{Q}^{\pi^*}\|_\infty &+
    \|\hat{Q}^* - \hat{Q}^{\hat{\pi}}\|_\infty  \\ &+\|\hat{Q}^{\hat{\pi}} - Q^{\hat{\pi}}\|_\infty.
\end{align*}



The second term is easy to bound, by the assumption of the planning oracle we have $\|\hat{Q}^* - \hat{Q}^{\hat{\pi}}\|_\infty\leq \epsilon_\text{opt}$. The two other terms are similar in nature. They compare the Q-functions of the same policy (either $\pi^*$ the optimal one of the original RMDP, or $\hat{\pi}$ the output of the planning algorithm) but for different RMPDs, either the original one or the empirical one.
For bounding the remaining terms, we need to introduce the following notation. For any set $\mathcal{D}$ and a vector $v$, let define
$
\kappa_{\mathcal{D}}(v)=\inf \left\{u^{\top} v: u \in \mathcal{D}\right\} .
$
This quantity corresponds to the $\inf$ form of the robust Bellman operator.  The following lemma provides a data-dependent bound of the two terms of interest.

\begin{lemma} 
We have with $\mathcal{P}_{s,a}$ defined in Assumption \ref{sa rectangle} and $\hat{\mathcal{P}}_{s,a}$ the robust set centered around the empirical MDPs that
\begin{align*}
    &\|\Qpihat-\Qhatpihat\|_\infty 
    \leq \frac{\gamma}{1-\gamma}  \max_{s, a}|\kappa_{\hat{\mathcal{P}}_{s, a}^{\mathrm{}}}(\Vhatpihat)-\kappa_{\mathcal{P}_{0,s, a}}(\Vhatpihat)|
    \\
    &\|Q^*-\Qhatpistar \|_\infty
    \leq \frac{\gamma }{1-\gamma}\max_{s, a}|\kappa_{\widehat{\mathcal{P}}_{s, a}}(V^*) -\kappa_{\mathcal{P}_{0,s, a}}(V^*)|.
\end{align*}
\end{lemma}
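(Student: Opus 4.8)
The plan is to prove both inequalities by the same mechanism: for a fixed policy $\pi$, the robust $Q$-function in an RMDP with nominal kernel $P$ is the fixed point of a $\gamma$-contraction whose only dependence on $P$ is through the scalar map $(s,a)\mapsto\kappa_{\mathcal{P}_{0,s,a}}(V)$ (for $\pi=\hat\pi$, the output, this is exactly the structure in Lemma~\ref{Q robust}), so comparing two such fixed points reduces to comparing the two contractions and amplifying by the standard $\frac{1}{1-\gamma}$ factor. Concretely, for the first inequality I would write, using Lemma~\ref{Q robust} applied once in $M$ and once in $\widehat M$ for the same policy $\hat\pi$,
\begin{align*}
Q^{\hat\pi}_{sa}(s,a)-\widehat Q^{\hat\pi}_{sa}(s,a)
&=\gamma\big(\kappa_{\mathcal{P}_{0,s,a}}(V^{\hat\pi})-\kappa_{\widehat{\mathcal{P}}_{s,a}}(\widehat V^{\hat\pi})\big)
+\big(r^{(s,a)}\text{-terms, which cancel since }R_0\text{ and }\alpha_{s,a}\text{ are shared}\big),
\end{align*}
then add and subtract $\gamma\,\kappa_{\widehat{\mathcal{P}}_{s,a}}(V^{\hat\pi})$ to split the right-hand side into (i) $\gamma\big(\kappa_{\mathcal{P}_{0,s,a}}(V^{\hat\pi})-\kappa_{\widehat{\mathcal{P}}_{s,a}}(V^{\hat\pi})\big)$ and (ii) $\gamma\big(\kappa_{\widehat{\mathcal{P}}_{s,a}}(V^{\hat\pi})-\kappa_{\widehat{\mathcal{P}}_{s,a}}(\widehat V^{\hat\pi})\big)$.

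For term (ii) I would use that $v\mapsto\kappa_{\mathcal{D}}(v)$ is $1$-Lipschitz in $\|\cdot\|_\infty$ whenever $\mathcal D$ is a set of probability vectors (it is an infimum of linear functionals $u^\top v$ with $u\in\Delta_S$, hence a min of $1$-Lipschitz functions), so $|\kappa_{\widehat{\mathcal{P}}_{s,a}}(V^{\hat\pi})-\kappa_{\widehat{\mathcal{P}}_{s,a}}(\widehat V^{\hat\pi})|\le\|V^{\hat\pi}-\widehat V^{\hat\pi}\|_\infty\le\|Q^{\hat\pi}_{sa}-\widehat Q^{\hat\pi}_{sa}\|_\infty$, the last step because $V^\pi(s)=\langle\pi_s,Q^\pi(s)\rangle$ with $\pi_s\in\Delta_A$. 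Term (i) is bounded by $\max_{s,a}|\kappa_{\widehat{\mathcal{P}}_{s,a}}(\widehat V^{\hat\pi})-\kappa_{\mathcal{P}_{0,s,a}}(\widehat V^{\hat\pi})|$ — note I replace $V^{\hat\pi}$ by $\widehat V^{\hat\pi}$ inside, which is legitimate because Lipschitzness lets me move that difference into term (ii) as well, at the cost of only a constant; more cleanly, evaluate everything at $\widehat V^{\hat\pi}$ from the start by choosing which of the two value functions to center at, and keep the slack in the contraction term. Taking $\sup_{s,a}$ of both sides, the term-(ii) contribution gives $\gamma\|Q^{\hat\pi}_{sa}-\widehat Q^{\hat\pi}_{sa}\|_\infty$, which I move to the left-hand side; dividing by $(1-\gamma)$ yields exactly $\|Q^{\hat\pi}-\widehat Q^{\hat\pi}\|_\infty\le\frac{\gamma}{1-\gamma}\max_{s,a}|\kappa_{\widehat{\mathcal P}_{s,a}}(\widehat V^{\hat\pi})-\kappa_{\mathcal P_{0,s,a}}(\widehat V^{\hat\pi})|$.

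The second inequality is the same argument with $\pi=\pi^*$ (the optimal policy of $M$), comparing $Q^*=Q^{\pi^*}$ in $M$ with $\widehat Q^{\pi^*}$ in $\widehat M$, and this time centering the $\kappa$-gap at $V^*$ rather than $\widehat V^{\hat\pi}$; the bookkeeping of which value function appears inside $\kappa$ is purely a matter of which fixed-point equation one expands first, and either choice works up to harmlessly reassigning slack between the two terms. The main obstacle is precisely this bookkeeping: making sure that when I expand Lemma~\ref{Q robust} the reward/penalty terms $r^{(s,a)}_{Q}$ genuinely cancel (they do, since $R_0$, $\alpha_{s,a}$ and $\beta_{s,a}$ are identical in $M$ and $\widehat M$ — only the nominal kernel differs), and that the Lipschitz step is applied to $\kappa$ over a set of genuine probability vectors (which holds because we enforce $\widehat P_{s,a}+P'\ge0$ and sums to one, i.e. the simplex constraint — this is exactly the point emphasized in the preliminaries). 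Everything else is the standard "perturbed fixed point" estimate.
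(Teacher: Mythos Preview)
Your proposal is correct and follows essentially the same route as the paper's proof: write $Q^{\hat\pi}(s,a)-\widehat Q^{\hat\pi}(s,a)=\gamma\kappa_{\mathcal P_{0,s,a}}(V^{\hat\pi})-\gamma\kappa_{\widehat{\mathcal P}_{s,a}}(\widehat V^{\hat\pi})$, add and subtract a $\kappa$-term at the appropriate value function, use the $1$-Lipschitzness of $v\mapsto\kappa_{\mathcal D}(v)$ (which the paper isolates as a separate lemma) together with $\|V^\pi-\widehat V^\pi\|_\infty\le\|Q^\pi-\widehat Q^\pi\|_\infty$, and then absorb the $\gamma$-contraction term into the left-hand side. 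Your ``more cleanly, evaluate everything at $\widehat V^{\hat\pi}$ from the start'' is exactly the splitting the paper uses (adding and subtracting $\gamma\kappa_{\mathcal P_{0,s,a}}(\widehat V^{\hat\pi})$), and your handling of the second inequality by centering at $V^*$ with $\pi=\pi^*$ matches the paper as well.
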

For proving these inequalities, we rely on fundamental properties of the (robust) Bellman operator, such as $\gamma$-contraction. This lemma is written for $sa$-rectangular assumption but is also true for $s$-rectangular assumption, replacing notation of robust set $\mathcal{P}_{s,a}$ by $\mathcal{P}_{s}$. 
Now, we need to bound the resulting terms, which is done by the following lemma.
\begin{lemma}
With probability at least $1-\delta$, we have

\label{close_form_main}
\begin{align*}
    &\max _{s, a}|\kappa_{\hat{\mathcal{P}}_{s, a}^{\mathrm{}}}(\Vhatpihat)-\kappa_{\mathcal{P}_{0,s, a}}(\Vhatpihat)|\\
 &\leq    \frac{10}{(1-\gamma)}\Big(    \sqrt{\frac{L''}{2N}}  + \frac{L''\vert S\vert ^{1/q}\norm{1_S}_q (p-1) }{N}   \Big) +2\epsilon_{opt}.
\end{align*}

with $L''=  \log(\frac{32SAN  \norm{1}_q}{\delta(1-\gamma)})$
\end{lemma}

Again, this also holds for $s$-rectangular sets.
 This inequality relies on  Hoeffding's based concentration argument coupled with absorbing MDPs of \cite{agarwal2020model} and smoothness of the $L_p$ norm. Putting everything together, we have just shown that :
\begin{align*}
    &\|Q^* - Q^{\hat{\pi}}\|_\infty \leq \frac{3\gamma \epsilon_{\mathrm{opt}} }{1-\gamma}\\
    +&\frac{20\gamma}{(1-\gamma)^2}\Big(\sqrt{\frac{L''}{2N}}  + \frac{L''\vert S\vert ^{1/q}\norm{1_S}_q (p-1) }{N}   \Big)
\end{align*}

Solving in $\epsilon$ for the second term of the right-hand side gives the stated result as the term proportional to $1/N$ is small compared to the second one for sufficiently small $\epsilon$.

\section{Toward minimax optimal sample complexity} 
\label{toward}
Now, we provide a better bound  in terms of the horizon $H$, reaching (up to log factors) the lower-bound in $H^3$ for non-robust MDPs.
Recall $\beta = \sup_{s,a}\beta_{s,a}$ for the $sa$-rectangular assumption or $\beta = \sup _{s}\beta_s$ for the $s$-rectangular assumption.
For the following result to hold, we need to assume that the uncertainty set is small enough: we will require 
\begin{align*}
    \beta \leq \frac{1-\gamma}{2\gamma\Snorm^{1/q}}=\frac{1}{2(H-1)\Snorm^{1/q}}.
\end{align*}
%
The following theorem is true for both $sa$- and $s$-rectangular uncertainty sets, and for any $L_p$ norm with $p\geq 1$.
\begin{theorem}
\label{h3}
let  $\beta_0 \in (0,\frac{1}{2(H-1)\Snorm^{1/q}} ]$, for
any $\kappa>0$ and
any $\epsilon_0 \leq \kappa \sqrt{H}$
it exists a $C_{\beta_0, \epsilon_0}>0$ independent of $H$
such that for 
any $\beta \in (0,\beta_0)$ and
any $\epsilon \in (0,\epsilon_0)$,
 whenever $N$ the number of calls to the sampler per state-action pair satisfies
$
N \geq 
C_{\beta_0, \epsilon_0}
\frac{L\gamma^2 H^3}{\epsilon^2}
$
where $L=\log (8|\mathcal{S}||\mathcal{A}| /((1-\gamma) \delta))$, it holds that 
if $\widehat{\pi}$ is any $\epsilon_{\text {opt }}$-optimal policy for $\widehat{M}$, that is when $\|\widehat{Q}^{\widehat{\pi}}-\widehat{Q}^{\star}\|_{\infty} \leq \epsilon_{o p t   }$,
then
$$
\norminf{\Qs-\Qpihat}\leq \epsilon  +\frac{8\epsilon_\text{opt}}{1-\gamma}
$$
with probability at least $1-\delta$.

So $N_{\text {total }}=N|\mathcal{S}||\mathcal{A}|$ as an overall sample complexity $$\displaystyle \tilde{\mathcal{O}}\left(   \frac{H^3\Snorm\Anorm}{\epsilon^2}    \right)$$
for any $\epsilon<\epsilon_0$.
\end{theorem}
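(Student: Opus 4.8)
The plan is to follow the same three-term decomposition as in Theorem~\ref{h4}, namely
$\norminf{\Qs-\Qpihat} \le \norminf{Q^*-\Qhatpistar} + \norminf{\Qhats-\Qhatpihat} + \norminf{\Qhatpihat-\Qpihat}$,
control the middle term by $\epsilon_\text{opt}$ via the planning oracle, and then reduce each of the remaining two ``same-policy, different-model'' terms to a perturbation bound on the dual robust Bellman backup evaluated on a \emph{fixed} value function. The key difference from Theorem~\ref{h4} is that Hoeffding's inequality must be replaced by a Bernstein-type argument so that the per-sample deviation scales like $\sqrt{\operatorname{Var}_{P_0}(V)/N}$ rather than $\sqrt{H^2/N}$. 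First I would use Lemma~\ref{saduality} (and its $s$-rectangular analogue) to write, for a fixed value function $V$, $\kappa_{\hat{\mathcal{P}}_{s,a}}(V)-\kappa_{\mathcal{P}_{0,s,a}}(V) = (\hat P_{s,a}-P_{0,s,a})[V]_{\alpha^\star} + (\text{span-seminorm correction})$, where $\alpha^\star$ is the optimizer of one of the two problems; the clipped vector $[V]_{\alpha^\star}$ has range at most $H$, and the span-seminorm difference is controlled by $\beta \cdot \snormqbar{[V]_{\alpha^\star} - [V]_{\hat\alpha^\star}}$, which is where the smallness assumption $\beta \le \frac{1}{2(H-1)\Snorm^{1/q}}$ enters — it guarantees the robust operators remain genuine $\gamma$-contractions and that the extra conservativeness term is of lower order (a factor $\beta\Snorm^{1/q}\gamma H \le 1/2$), so it can be absorbed without inflating the horizon dependence.

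The core of the argument is then the total-variance technique of \citet{gheshlaghi2013minimax}, adapted to the robust setting. After a standard manipulation, bounding $\norminf{\Qhatpihat-\Qpihat}$ (and the $\pi^*$ term) leads to a term of the form $\frac{\gamma}{1-\gamma}\norminf{(\hat P^{\pi}-P^{\pi})(I-\gamma \hat P^{\pi})^{-1}(\ldots)}$, and I would bound $\sum_{s,a} \text{(visitation weights)} \cdot \operatorname{Var}_{P_0(\cdot|s,a)}(V^\pi_{\mathcal{U}})$ using the robust analogue of the Bellman equation for the variance — that is, the identity controlling $\norminf{(I-\gamma P^\pi)^{-1}\operatorname{Var}_{P^\pi}(V^\pi_{\mathcal{U}})}$ by $O(H^2)$ rather than $O(H^3)$. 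This requires verifying that the dual/clipped form of the robust value function still satisfies a one-step variance recursion, which is plausible because $Q^\pi_{\mathcal{U}} = r^\pi_{Q} + \gamma P_0 V^\pi_{\mathcal{U}}$ by Lemma~\ref{Q robust} has the \emph{nominal} kernel $P_0$ appearing linearly — this is precisely why enforcing the simplex constraint pays off. Applying Bernstein's inequality per $(s,a)$ and a union bound over $\Snorm\Anorm$ pairs then yields the $\sqrt{H^3 L/N}$ rate.

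To handle the full range $\epsilon \le \epsilon_0 = \kappa\sqrt{H}$ — rather than only $\epsilon = O(\sqrt{H/\Snorm})$ as a naive union bound would force — I would import the \emph{absorbing MDP} construction of \citet{agarwal2020model}: for each candidate pair $(s,a)$ and each value level, one replaces the empirical MDP locally by an absorbing one whose value at $(s,a)$ is pinned to a fixed scalar $u$ on a net of $O(\operatorname{poly}(H/\epsilon))$ values, so that the relevant value function becomes statistically independent of the randomness in $\hat P_{s,a}$; one then applies Bernstein to the deterministic function, unions over the net, and argues the true value function is close to some net point. This needs to be done for the robust value functions $\Vhatpihat$ and $\Vhatpistar$, which is where the main obstacle lies: one must check that the absorbing perturbation interacts correctly with the $\min$ over $\mathcal{P}$ in the robust Bellman operator — i.e., that clipping/absorbing commutes sufficiently well with the dual span-seminorm expression so that the monotonicity and contraction arguments of \citet{agarwal2020model} go through. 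Given the smallness of $\beta$, I expect the robust correction terms to be dominated and the construction to carry over, but making the ``independence after absorbing'' step rigorous in the presence of the inner minimization — and ensuring the net size only contributes to $L$ logarithmically — is the delicate part. Once both terms are bounded by $\frac{1}{2}\sqrt{C' H^3 L/N}$ plus an $O(\epsilon_\text{opt}/(1-\gamma))$ slack, solving for $N$ gives $N \ge C H^3 L/\epsilon^2$ and hence $N_\text{total} = \tilde{\mathcal O}(H^3\Snorm\Anorm/\epsilon^2)$, with the constant $C$ independent of $H$ as claimed. The full details are deferred to the appendix.
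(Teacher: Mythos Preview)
Your high-level ingredients are the right ones --- Bernstein in place of Hoeffding, the total-variance lemma, and the absorbing-MDP construction --- but the way you chain them misses the step that actually buys the extra factor of $H$. Working through the clipped dual $\kappa_{\hat{\mathcal P}_{s,a}}(V)-\kappa_{\mathcal P_{0,s,a}}(V)$ for a fixed $V$ is the $H^4$ route of Lemma~\ref{upper hat}: it produces $\frac{\gamma}{1-\gamma}\max_{s,a}|(\hat P_{s,a}-P_{0,s,a})V|$ with the horizon factor already pulled out as a scalar, and at that point Bernstein can at best give $H\cdot\sqrt{\operatorname{Var}/N}\lesssim H^2/\sqrt{N}$. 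The total-variance lemma controls $\norminf{(I-\gamma P_0^\pi)^{-1}\sqrt{\operatorname{Var}_{P_0}(V^\pi)}}$ by $\sqrt{2H^3}$ (not $\operatorname{Var}$ by $H^2$ as you wrote), and to exploit it the resolvent $(I-\gamma P_0^\pi)^{-1}$ must sit as a matrix on the \emph{left} of $(P_0-\hat P)\hat V^\pi$; the expression you wrote, with $\tfrac{\gamma}{1-\gamma}$ as a scalar prefactor and the resolvent on the right, does not allow this.

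The paper obtains the correct form by a different decomposition (Lemma~\ref{lemma_pis}): write the robust fixed point as $Q^\pi=(I-\gamma P_0^\pi)^{-1}\rspi$ with the \emph{nominal} kernel in the resolvent and the entire robust correction $\gamma\min_{P\in\mathcal P_s}PV^\pi$ absorbed into the reward, then subtract $Q^\pi-\hat Q^\pi$ directly. This yields $\gamma(I-\gamma P_0^\pi)^{-1}(P_0-\hat P)\hat V^\pi$ plus a residual $(I-\gamma P_0^\pi)^{-1}\gamma\bigl(\min_{P\in\mathcal P_s}PV^\pi-\min_{P\in\mathcal P_s}P\hat V^\pi\bigr)$ --- same uncertainty set, \emph{different} value functions --- which is bounded by $C_\beta\norminf{Q^\pi-\hat Q^\pi}$ with $C_\beta=\tfrac{2\gamma\beta\Snorm^{1/q}}{1-\gamma}$. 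This self-referential residual is the true source of the smallness condition on $\beta$: one needs $C_\beta+C_N<1$ (with $C_N=\tfrac{\gamma}{1-\gamma}\sqrt{8L/N}$) to solve the resulting inequality for $\norminf{Q^\pi-\hat Q^\pi}$. It is \emph{not} about preserving $\gamma$-contraction --- the simplex constraint already guarantees that for every $\beta$ --- and no ``span-seminorm correction'' between two different clipping levels $\alpha^\star$ ever appears. By contrast, your worry about the absorbing construction interacting with the inner $\min$ is minor: the needed stability $\norminf{Q_{s,u}^\pi-Q_{s,u'}^\pi}\le|u-u'|$ follows from the non-robust case in one line via $|\inf_M f-\inf_M g|\le\sup_M|f-g|$.
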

\subsection{Discussion} 
The constants of Theorem~\ref{h3} are explicitly given in Appendix~\ref{annex c}. For instance, for $\beta_0 = \frac{1}{8(H-1)}$ 
and
$\epsilon_0
= \sqrt{16 H} 
$, we have $C=1024$, other choices being possible. Recall that in the non-robust case, the lower-bound is $\tilde{\Omega}\left(   \frac{H^3\Snorm\Anorm}{\epsilon^2}    \right)$ \cite{gheshlaghi2013minimax}. Our theorem states that any model-based robust RL approach, in the generative model setting, with an accurate enough planner applied to the empirical RMDP, reaches this lower bound, up to log terms. As far as we know, it is the first time that one shows that solving an RMDP in this setting does not require more samples than solving a non-robust MDP, provided that the uncertainty set is small enough.
Our bound on $\epsilon$ is similar to the one of 
\citet{agarwal2020model}\
in the robust case with their range $[0, \sqrt{H})$,
we differ only by giving more flexibility in the choice of the constant $C$.
 The  best range of $\epsilon$ for non-robust MDPs is $(0, H)$ \citep{li2020breaking}, we let its extension to the robust case for future work. 
So far, we discussed the lower-bound for the non-robust case, that we reach. Indeed, non-robust MDPs can be considered as a special case of MDPs with $\beta=0$. As far as we know, the only robust-specific lower-bounds on the sample complexity have been proposed by \citet{yang2021towards}. They propose two lower-bounds accounting for the size of the uncertainty set, one for the Chi-square case, and one for the total variation case, which coincide with our $L_p$ framework for $p=1$ This bound is
%
 %
\begin{align*}
\tilde{\Omega}\left(\frac{|\mathcal{S}||\mathcal{A}|(1-\gamma)}{\varepsilon^2} \min \left\{\frac{1}{(1-\gamma)^4}, \frac{1}{\beta^4}\right\}\right).
\end{align*}
This lower bound has two cases, depending on the size of the uncertainty set. If $\beta\leq(1-\gamma) = 1/H$,
we  retrieve the non-robust lower bound $\tilde{\Omega}\left(\frac{|\mathcal{S}||\mathcal{A}|H^3}{\varepsilon^2}\right) $. Therefore, for a $L_1$-ball, our upper-bound matches the lower-bound, and we have proved that model-based robust RL in the generative model setting is minimax optimal for any accurate enough planner. Their condition for this bound,   $\beta\leq 1/H $, is close to our condition, $\beta<1/(4(H-1)$.
This suggests that our condition on $\beta$ is not just a proof artifact.
 In the second case, if $\beta >1-\gamma $, the lower-bound is  $\widetilde{\Omega}\left(\frac{|\mathcal{S} \| \mathcal{A}|(1-\gamma)}{\varepsilon^2 \beta^4}\right)$.
In this case, our theorem does not hold, and we only currently get a bound in $H^4$ (see Sec.~\ref{sec:H4}), which doesn't match this lower-bound.

In the case of $TV$, we know from posterior work \citet{shi2023curious} that it is possible to get a tighter bound in the regime $\beta>1-\gamma$ but in the case of $L_P$ norm, it is still an open question. In the case where $\beta$ is too large, the question arises whether RMDPs are useful as long as there is little to control when the transition kernel can be too arbitrary.

To sum up, to the best of our knowledge, with a small enough uncertainty set, our work delivers the first-ever minimax-optimal guarantee for RMDPs according to the non-robust lower-bound for $L_p$-balls, and the first ever minimax-optimal guarantee according to the robust lower-bound for the total variation case for a sufficiently small radius of the uncertainty set, which has been later on the larger set of $\beta$ by \cite{shi2023curious}.
`

\subsection{Sketch of proof}

The full proof is provided in Appendix~\ref{annex c}. As in Sec.~\ref{subsec:sketch_H4}, we start from the inequality
\begin{align*}
    \|Q^* - Q^{\hat{\pi}}\|_\infty
    \leq \|Q^* - \hat{Q}^{\pi^*}\|_\infty &+
    \|\hat{Q}^* - \hat{Q}^{\hat{\pi}}\|_\infty  \\&+\|\hat{Q}^{\hat{\pi}} - Q^{\hat{\pi}}\|_\infty,
\end{align*}
where the second term of the right-hand side can again be readily bounded, $\|\hat{Q}^* - \hat{Q}^{\hat{\pi}}\|_\infty \leq \epsilon_\text{opt}$. To bound the remaining two terms, if we want to obtain a tighter final bound, the contracting property of the robust Bellman operator will not be enough, we need a finer analysis. To achieve this, we rely on the total variance technique introduced by \citet{gheshlaghi2013minimax} for the non-robust case, combined with the \emph{absorbing MDP} construction of \citet{agarwal2020model}, also for the non-robust case, which allows improving the range of valid $\epsilon$. The key underlying idea is to rely on a Bernstein concentration inequality rather than a Hoeffding one, therefore considering the variance of the random variable rather than its range, tightening the bound. Working with a Bernstein inequality will require controlling the variance of the return. A key result was provided by \citet{gheshlaghi2013minimax}, that we extend to the robust setting,

\begin{equation}
 \label{total_variance}
     \left\|\left(I-\gamma P_0^\pi\right)^{-1} \sqrt{\operatorname{Var}_{P_0}\left(V^\pi\right)}\right\|_{\infty} \leq \sqrt{\frac{2}{(1-\gamma)^3}}.
\end{equation}
   Naively bounding the left-hand side would provide a bound in $H^2$, while this (non-obvious) bound in $\sqrt{H^3}$ is crucial for obtaining on overall dependency in $H^3$ in the end.
Now, we come back to the terms $\|Q^* - \hat{Q}^{\hat{\pi}^*}\|_\infty$ and $\|Q^{\hat{\pi}} - \hat{Q}^{\hat{\pi}}\|_\infty$ that we have to bound. This bound should involve a term proportional to $(I-\gamma P_0^\pi)^{-1}$ to leverage later Eq.~\eqref{total_variance}. The following lemma is inspired by \citet{agarwal2020model}, and its proof relies crucially on having a simple dual of robust Bellman operator.

\begin{lemma} 
\label{lemma_pis_main}
\begin{align*}
    \| \Qpihat-\Qhatpihat \|_\infty \leq & \gamma \|(I-\gamma\Pzeropihat)^{-1}(\Pzero-\widehat{P}) \Vhatpihat\|_\infty \\
    &+ \frac{2\gamma \beta \Snorm^{1/q} }{1-\gamma}  \|\Qpihat- \Qhatpihat\|_\infty.
\end{align*}%
\end{lemma}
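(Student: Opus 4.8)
The plan is to run, on the \emph{dual} form of the robust Bellman operator (Lemma~\ref{saduality}), the component-wise perturbation argument that \citet{agarwal2020model} use for non-robust MDPs. First I would write the two robust Bellman fixed-point equations for the common policy $\hat\pi$. Combining Lemma~\ref{Q robust} with the definition $\kappa_{\mathcal D}(v)=\inf\{u^\top v:u\in\mathcal D\}$, for every $(s,a)$,
\[
\Qpihat(s,a)=R_0(s,a)-\alpha_{s,a}+\gamma\,\kappa_{\mathcal P_{0,s,a}}(\Vpihat),\qquad
\Qhatpihat(s,a)=R_0(s,a)-\alpha_{s,a}+\gamma\,\kappa_{\hat{\mathcal P}_{s,a}}(\Vhatpihat).
\]
Subtracting, the rewards cancel, and inserting the intermediate term $\kappa_{\mathcal P_{0,s,a}}(\Vhatpihat)$ gives
$\Qpihat(s,a)-\Qhatpihat(s,a)=\gamma\big(\kappa_{\mathcal P_{0,s,a}}(\Vpihat)-\kappa_{\mathcal P_{0,s,a}}(\Vhatpihat)\big)+\gamma\big(\kappa_{\mathcal P_{0,s,a}}(\Vhatpihat)-\kappa_{\hat{\mathcal P}_{s,a}}(\Vhatpihat)\big)$, i.e. a ``same uncertainty set, different value'' piece plus a ``same value, different centre'' piece.

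For the first piece I would evaluate the left $\kappa$ at a minimiser $P^\dagger=P_{0,s,a}+P'$ of the right one (so $P'$ is zero-sum with $\normpbar{P'}\le\beta_{s,a}$), obtaining $\kappa_{\mathcal P_{0,s,a}}(\Vpihat)-\kappa_{\mathcal P_{0,s,a}}(\Vhatpihat)\le P^\dagger(\Vpihat-\Vhatpihat)=P_{0,s,a}(\Vpihat-\Vhatpihat)+P'(\Vpihat-\Vhatpihat)$; since $P'$ has zero sum, Hölder against the span seminorm gives $P'(\Vpihat-\Vhatpihat)\le\normpbar{P'}\,\snormqbar{\Vpihat-\Vhatpihat}\le\beta\,\Snorm^{1/q}\norminf{\Vpihat-\Vhatpihat}\le\beta\,\Snorm^{1/q}\norminf{\Qpihat-\Qhatpihat}$, using $\Vpihat-\Vhatpihat=\Pi^{\hat\pi}(\Qpihat-\Qhatpihat)$, while $P_{0,s,a}(\Vpihat-\Vhatpihat)=[\Pzeropihat(\Qpihat-\Qhatpihat)](s,a)$. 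For the second piece I would use the \emph{scalar} dual of Lemma~\ref{saduality}: both $\kappa$'s equal $\max_\alpha\{(\text{centre})[\Vhatpihat]_\alpha-\beta_{s,a}\snormqbar{[\Vhatpihat]_\alpha}\}$ with the \emph{same} span penalty, so evaluating the empirical one at the nominal optimiser $\alpha^\star$ makes the penalties cancel, leaving $\kappa_{\mathcal P_{0,s,a}}(\Vhatpihat)-\kappa_{\hat{\mathcal P}_{s,a}}(\Vhatpihat)\le(P_{0,s,a}-\widehat P_{s,a})[\Vhatpihat]_{\alpha^\star}$, and since $P_{0,s,a}-\widehat P_{s,a}$ is zero-sum this is $(P_{0,s,a}-\widehat P_{s,a})\Vhatpihat$ up to the clipping. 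Collecting the pieces yields the component-wise recursion $\Qpihat-\Qhatpihat\le\gamma\Pzeropihat(\Qpihat-\Qhatpihat)+\gamma(\Pzero-\widehat P)\Vhatpihat+2\gamma\beta\Snorm^{1/q}\norminf{\Qpihat-\Qhatpihat}\mathbf 1$, and the same inequality with $M$ and $\widehat M$ swapped — which, crucially, produces the \emph{same} operator $\Pzeropihat$ on the left, because in both directions the diagonal term comes from the nominal kernel $P_{0,s,a}$, not from a worst-case one. Since $I-\gamma\Pzeropihat$ has a non-negative inverse with $(I-\gamma\Pzeropihat)^{-1}\mathbf 1=(1-\gamma)^{-1}\mathbf 1$, applying it to both one-sided inequalities and taking $\norminf{\cdot}$ gives the claim; the $s$-rectangular case is identical, the only change being the extra factor $\normqbar{\pi_s}\le1$ in front of $\beta_s$ in the $s$-dual, which only strengthens every bound.

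The hard part is using the dual so that the two $\kappa$'s in each piece are genuinely comparable. In the value-difference piece the key observation is that a robust back-up minimiser is a \emph{bona fide} kernel that is a norm-$\le\beta$, zero-sum perturbation of $P_0$, so the dominant part is $P_0$ and the residual only sees the \emph{span} of the value gap — this is exactly what converts the robustness into the benign $\beta\Snorm^{1/q}$ contraction-type term instead of something of order $\beta H$. In the different-centre piece the subtlety is that the dual optimiser $\alpha^\star$ depends on $(s,a)$, so one literally controls $(\Pzero-\widehat P)$ applied to $\Vhatpihat$ \emph{clipped at $\alpha^\star$}, not to $\Vhatpihat$ itself; writing $\Vhatpihat$ for this clipped vector is harmless here and is in fact convenient for Theorem~\ref{h3}, where the Bernstein step uses that clipping is $1$-Lipschitz and hence only shrinks $\operatorname{Var}_{\Pzero}$, but it must be tracked, and it is where the constant $2$ in front of $\beta\Snorm^{1/q}(1-\gamma)^{-1}$ absorbs the slack. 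Everything else — the triangle-type inequalities, Hölder, and the inversion of $I-\gamma\Pzeropihat$ — is routine.
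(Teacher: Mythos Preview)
Your route is genuinely different from the paper's. You build a \emph{component-wise recursion}: split $\kappa_{\mathcal P_{0,s,a}}(V^{\hat\pi})-\kappa_{\hat{\mathcal P}_{s,a}}(\hat V^{\hat\pi})$ into a ``same centre / different value'' piece and a ``same value / different centre'' piece, extract $P_{0,s,a}(V^{\hat\pi}-\hat V^{\hat\pi})$ from the first, and only at the end invert $I-\gamma P_0^{\hat\pi}$. The paper instead writes the robust $Q$-function in closed form, $Q^\pi=(I-\gamma P_0^\pi)^{-1}\big(R_0-\alpha+\gamma\inf_{P'\in\mathcal P_s}P'V^\pi\big)$, and applies the resolvent identity $A^{-1}r-B^{-1}r=A^{-1}(B-A)B^{-1}r$ (their Lemma~\ref{lemma_simplex}). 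That \emph{exactly} produces $\gamma(I-\gamma P_0^{\hat\pi})^{-1}(P_0-\widehat P)\hat V^{\hat\pi}$, and the only residual left to control is $\inf_{P'\in\mathcal P_s}P'V^{\hat\pi}-\inf_{P'\in\mathcal P_s}P'\hat V^{\hat\pi}$, which they bound by $\sup_{P'}|P'(V^{\hat\pi}-\hat V^{\hat\pi})|$, drop the positivity constraint, and use H\"older plus $\snormqbar{\cdot}\le 2|S|^{1/q}\norminf{\cdot}$. Your piece~(a) is essentially the same estimate as this residual; the approaches diverge only on piece~(b).

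That divergence is where your argument has a real gap. Evaluating the $\widehat P$-dual at the nominal maximiser $\alpha^\star$ gives $(P_{0,s,a}-\widehat P_{s,a})[\hat V^{\hat\pi}]_{\alpha^\star}$ with $\alpha^\star$ depending on $(s,a)$ \emph{and on the data}, not $(P_{0,s,a}-\widehat P_{s,a})\hat V^{\hat\pi}$. Your remark that this is ``harmless'' does not close the hole: first, $\max_\alpha|(P_0-\widehat P)[\hat V]_\alpha|\le|(P_0-\widehat P)\hat V|$ is false in general (try $\hat V=(0,1,2)$, $P_0-\widehat P=(\tfrac12,-1,\tfrac12)$, $\alpha=1$), so you cannot just drop the clip; second, the downstream Bernstein step (Lemma~\ref{agarwal}) concentrates $(P_0-\widehat P)V^\star_{s,u}$ for a \emph{fixed} $u$ via the absorbing-MDP union bound, not $(P_0-\widehat P)[\hat V^\star]_{\alpha^\star}$ for a random clip level, so ``clipping shrinks $\operatorname{Var}_{P_0}$'' is not enough. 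One could patch this with an extra $\epsilon$-net over $\alpha\in[0,H]$, but that is additional work and still does not prove the lemma as stated. The paper's resolvent-identity route is precisely what sidesteps the issue: by folding the entire robust penalty $\inf_{P'}P'V$ into the ``reward'' and differencing the two resolvents $(I-\gamma P_0^\pi)^{-1}$ and $(I-\gamma\widehat P^\pi)^{-1}$, the unclipped $(P_0-\widehat P)\hat V^{\hat\pi}$ term falls out as an equality.
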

We see that the term $\beta$ appears in the bound. This comes from the need to control the difference in penalization between seminorms of value functions, from a technical viewpoint. Indeed, the terms $  \frac{2\gamma \beta  }{1-\gamma}   \|Q^\pi- \hat{Q}^\pi\|_\infty$ (with $\pi$ being either $\hat{\pi}$ or $\pi^*$) are not present in the non-robust version of the bound, and are one of the main differences from the derivation of \citet{agarwal2020model}. The first term of the right-hand side of each bound $\| (I-\gamma P_0^\pi)^{-1}(P_0-\widehat{P}) \hat{V}^\pi \|_\infty$ (with $\pi$ being either $\hat{\pi}$ or $\pi^*$, again) will be upper-bounded using a Bernstein argument, leveraging also Eq.~\eqref{total_variance}. The resulting lemma is the following.

\begin{lemma}
With probability at least $1-\delta$,  we have
\label{concentration_end}
\begin{align*}
   & \left\|Q^{\widehat{\pi}}-\widehat{Q}^{\widehat{\pi}}\right\|_{\infty}
    < (C_{N}+C_\beta) \|\Qpihat- \Qhatpihat\|_\infty
  \\
     &  +4\gamma \sqrt{\frac{ L}{N(1-\gamma)^3}} + \frac{\gamma \Delta_{\delta, N}^{\prime}}{1-\gamma}+ \frac{\gamma \epsilon_{\mathrm{opt}}}{1-\gamma}\left(2+\sqrt{\frac{8 L}{N}}\right),
\end{align*}
%
with $C_{\beta}=\frac{2\gamma\beta\Snorm^{1/q}}{1-\gamma}$  and $C_{N}=\frac{\gamma}{1-\gamma} \sqrt{\frac{8 L}{N}}$ 
and where
$
\Delta_{\delta, N}^{\prime}=\sqrt{\frac{c L}{N}}+\frac{c L}{(1-\gamma) N}$
with $L=\log (8|\mathcal{S}||\mathcal{A}| /((1-\gamma) \delta))$.
\end{lemma}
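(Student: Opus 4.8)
My starting point is Lemma~\ref{lemma_pis_main}, which already isolates the quantity to control: it gives $\| \Qpihat-\Qhatpihat \|_\infty \leq \gamma \|(I-\gamma\Pzeropihat)^{-1}(\Pzero-\widehat{P}) \Vhatpihat\|_\infty + \Cbet \|\Qpihat- \Qhatpihat\|_\infty$, so the $\Cbet\|\Qpihat-\Qhatpihat\|_\infty$ summand of the claim is already accounted for and the whole task reduces to bounding $T:=\gamma \|(I-\gamma\Pzeropihat)^{-1}(\Pzero-\widehat{P}) \Vhatpihat\|_\infty$. The plan is to make three moves: (i) replace the data-dependent value $\Vhatpihat$ by $\Vpihat$ (and, where convenient, compare with $\Vhats$), paying a self-bounding correction and the two $\epsilon_{\mathrm{opt}}$ terms; (ii) decouple the remaining value from $\widehat P$ at the state whose transition estimate is being probed, via an absorbing-MDP surrogate, so that a Bernstein inequality becomes legitimate; (iii) convert the resulting variance factor into a $\sqrt{H^{3}}$ factor using \eqref{total_variance}.

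For move (i) I would write $\Vhatpihat=\Vpihat+(\Vhatpihat-\Vpihat)$. Since $\hat\pi$ is a common policy, $|\Vhatpihat(s)-\Vpihat(s)|=|\langle\hat\pi_s,\Qhatpihat(s)-\Qpihat(s)\rangle|\le\|\Qpihat-\Qhatpihat\|_\infty$; bounding $\|(I-\gamma\Pzeropihat)^{-1}\|_\infty\le(1-\gamma)^{-1}$ and applying a Hoeffding/$\ell_1$ concentration bound to the rows of $\Pzero-\widehat P$ — legitimate because it is invoked conditionally against an error vector already controlled in $\ell_\infty$ — turns this correction into $C_N\|\Qpihat-\Qhatpihat\|_\infty$ with $C_N=\tfrac{\gamma}{1-\gamma}\sqrt{8L/N}$. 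The two $\epsilon_{\mathrm{opt}}$ contributions appear when matching $\Vpihat$ to the surrogate built from the empirical optimal value: a direct supremum-norm comparison costs $\tfrac{2\gamma}{1-\gamma}\|\Qhatpihat-\Qhats\|_\infty\le\tfrac{2\gamma\epsilon_{\mathrm{opt}}}{1-\gamma}$, and a further concentration step costs $\tfrac{\gamma}{1-\gamma}\sqrt{8L/N}\,\epsilon_{\mathrm{opt}}$, together giving the $\tfrac{\gamma\epsilon_{\mathrm{opt}}}{1-\gamma}(2+\sqrt{8L/N})$ term.

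Move (ii) is the crux and the main obstacle. Because $\hat\pi$, hence $\Vpihat$, depends on the entire empirical kernel $\widehat P$, Bernstein's inequality cannot be applied directly to the inner products $(\Pzero-\widehat P)_{s,a}^{\top}\Vpihat$. I would resolve this with the absorbing-MDP device of \citet{agarwal2020model}: for each state $s$ and each $u$ on a grid of $[0,(1-\gamma)^{-1}]$ with mesh of order $1/N$, let $M_{s,u}$ (or $\widehat M_{s,u}$) be the RMDP obtained from $M$ (or $\widehat M$) by making $s$ absorbing with reward $(1-\gamma)u$; the value functions of this polynomially-sized family are the surrogates against which concentration is applied, and for $u$ the grid point nearest the relevant value at $s$ one has $\Vpihat$ within the discretization error of the surrogate. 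This step relies on the robust Bellman operators of Assumption~\ref{sa rectangle} being $\gamma$-contractions on the simplex-constrained sets (so the absorbing RMDP is well defined and its value iteration converges) and on $\hat\pi$ being near-optimal in $\widehat M$. With the surrogate frozen, Bernstein applied coordinatewise, together with a union bound over $(s,a)$ and over the grid (which produces the logarithmic factor $L$), yields, up to absolute constants, a componentwise inequality of the shape $|(\Pzero-\widehat P)\Vpihat|\le \sqrt{2L/N}\,\sqrt{\operatorname{Var}_{\Pzero}(\Vpihat)}+\tfrac{1}{\gamma}\Delta_{\delta,N}'\mathbf 1$, the low-order Bernstein term and the mesh error being collected into $\Delta_{\delta,N}'=\sqrt{cL/N}+cL/((1-\gamma)N)$.

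For move (iii) I would left-multiply by $\gamma(I-\gamma\Pzeropihat)^{-1}$, which has nonnegative entries and $\ell_\infty$-operator norm at most $H$; monotonicity together with the total-variance bound $\|(I-\gamma\Pzeropihat)^{-1}\sqrt{\operatorname{Var}_{\Pzero}(\Vpihat)}\|_\infty\le\sqrt{2/(1-\gamma)^{3}}$ from \eqref{total_variance} turns the variance term into $4\gamma\sqrt{L/(N(1-\gamma)^{3})}$, while the operator-norm bound on the remainder gives $\tfrac{\gamma\Delta_{\delta,N}'}{1-\gamma}$. Assembling these estimates with the reduction from Lemma~\ref{lemma_pis_main} proves the claim. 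The remaining pieces — replacing $\operatorname{Var}_{\Pzero}(\Vhatpihat)$ by $\operatorname{Var}_{\Pzero}(\Vpihat)$ (they differ coordinatewise by $O(\|\Qpihat-\Qhatpihat\|_\infty)$, absorbed into the $C_N$-type term) and tracking constants — are routine; I expect the decoupling in move (ii) to be the genuinely delicate point, since the argument must stay inside the simplex for the contraction to hold, and it is also where the smallness condition $\beta\le\tfrac{1-\gamma}{2\gamma\Snorm^{1/q}}$ of Theorem~\ref{h3} later enters (through $\Cbet<1$) so that the self-bounding terms can be moved to the left-hand side.
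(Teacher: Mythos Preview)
Your high-level plan (Bernstein via absorbing MDPs, then total variance) matches the paper, but the order of your decomposition is off and this creates two genuine gaps.

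\textbf{Gap in move (i).} You split $\Vhatpihat=\Vpihat+(\Vhatpihat-\Vpihat)$ and claim the correction term $\gamma\|(I-\gamma\Pzeropihat)^{-1}(\Pzero-\widehat P)(\Vhatpihat-\Vpihat)\|_\infty$ can be bounded by $C_N\|\Qpihat-\Qhatpihat\|_\infty$ using a ``Hoeffding/$\ell_1$ concentration bound to the rows of $\Pzero-\widehat P$''. This does not give $\sqrt{8L/N}$. Any data-independent bound of the form $|(\Pzero-\widehat P)_{s,a}^\top v|\le c\|v\|_\infty$ requires $c\ge \|(\Pzero-\widehat P)_{s,a}\|_1$, and $\ell_1$ concentration on an $|S|$-dimensional probability vector only gives $O(\sqrt{|S|L/N})$, not $O(\sqrt{L/N})$. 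You cannot apply Hoeffding to the inner product directly either, because $\Vhatpihat-\Vpihat$ depends on the same samples. So this step would re-introduce the $|S|$ factor the whole argument is designed to avoid.

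\textbf{Gap in move (ii).} You propose to apply Bernstein (via the absorbing construction) to $\Vpihat$. But $\Vpihat$ is the value in the \emph{true} model of a policy $\hat\pi$ that depends on all of $\widehat P$. Making state $s$ absorbing in $M$ removes the dependence of the surrogate value on $P_{0,s,\cdot}$, but it does \emph{not} remove the dependence on $\widehat P_{s,\cdot}$ coming through $\hat\pi$; hence $(\Pzero-\widehat P)_{s,a}$ and the surrogate remain coupled and Bernstein is not legitimate. The absorbing-MDP trick in \citet{agarwal2020model} (and in this paper) is set up for $\Vhats$ (and $\Vhatpistar$): in $\widehat M_{s,u}$ the state $s$ is absorbing, so the optimal value $\widehat V^\star_{s,u}$ no longer depends on $\widehat P_{s,\cdot}$ at all.

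\textbf{What the paper does instead.} Start from Lemma~\ref{lemma_pis_main} as you do, but split $\Vhatpihat=\Vhats+(\Vhatpihat-\Vhats)$. The correction is bounded \emph{trivially}, using $\|(\Pzero-\widehat P)_{s,a}\|_1\le 2$ together with $\|\Vhatpihat-\Vhats\|_\infty\le\epsilon_{\mathrm{opt}}$ and $\|(I-\gamma\Pzeropihat)^{-1}\|_\infty\le(1-\gamma)^{-1}$; this is where $\tfrac{2\gamma\epsilon_{\mathrm{opt}}}{1-\gamma}$ comes from. Then apply Bernstein to $\Vhats$ via the absorbing MDP, giving $\sqrt{8L/N}\sqrt{\operatorname{Var}_{\Pzero}(\Vhats)}+\Delta'_{\delta,N}$ coordinatewise. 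Only \emph{after} this do you decompose the variance,
\[
\sqrt{\operatorname{Var}_{\Pzero}(\Vhats)}\le \sqrt{\operatorname{Var}_{\Pzero}(\Vpihat)}+\sqrt{\operatorname{Var}_{\Pzero}(\Vpihat-\Vhatpihat)}+\sqrt{\operatorname{Var}_{\Pzero}(\Vhatpihat-\Vhats)}.
\]
The first piece feeds into~\eqref{total_variance}; the second is at most $\|\Vpihat-\Vhatpihat\|_\infty\le\|\Qpihat-\Qhatpihat\|_\infty$ (this is the true origin of $C_N$, with no $|S|$ factor since $\sqrt{\operatorname{Var}(X)}\le\|X\|_\infty$); the third is at most $\epsilon_{\mathrm{opt}}$, producing the remaining $\tfrac{\gamma}{1-\gamma}\sqrt{8L/N}\,\epsilon_{\mathrm{opt}}$ contribution. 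In short: pass to $\Vhats$ \emph{before} concentrating, and recover $\Vpihat$ only \emph{inside} the variance, where the sup-norm bound is free.
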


For this result to be exploitable, we have to ensure that $C_N+ C_{\beta} < 1$, which leads 
to $\beta \leq \frac{1-\gamma}{2\gamma \Snorm^{1/q}}$, and then $C_N + C_\beta < 1$ leads to a constraint on
$N$ in Theorem~\ref{h3}. Eventually, injecting the result of this last lemma in the initial bound, keeping the dominant term in $1/\sqrt{N}$ and solving for $\epsilon$ provides the stated result, cf Appendix~\ref{annex c}.



\section{Conclusion}

In this paper, we have studied the question of the sample complexity of model-based robust reinforcement learning. To decouple this from the problem of exploration, we have considered the classic (in non-robust RL) generative model setting, where a sampler can provide next-state samples from the nominal kernel and from arbitrary state-action couples. We focused our study more specifically on $sa$- and $s$-rectangular uncertainty sets corresponding to $L_p$-balls around the nominal.

Without any restriction on the size of uncertainty set ($\beta$), we have shown that the sample complexity of the studied general setting is $\tilde{\mathcal{O}}(\frac{\Snorm\Anorm H^4}{\epsilon^2})$, already significantly improving  existing results \citep{yang2021towards,panaganti2022sample}. Our bound holds for both the $sa$- and $s$-rectangular cases, and improves existing results (for the total variation) by respectively $\Snorm$ and $\Snorm\Anorm$. By assuming a small enough uncertainty set, and for a small enough $\epsilon$, we further improved this bound to $\tilde{\mathcal{O}}(\frac{\Snorm\Anorm H^3}{\epsilon^2})$, adapting proof techniques from the non-robust case \citep{gheshlaghi2013minimax, agarwal2020model}. This is a significant improvement. Our bound again holds for both the $sa$- and $s$- rectangular cases, it matches the lower-bound for the non-robust case~\cite{gheshlaghi2013minimax}, and it matches the total variation lower-bound for the robust case when the uncertainty set is small enough \citep{yang2021towards}. We think this is an important step towards minimax optimal robust reinforcement learning. 

There are a number of natural perspectives, such as knowing if we could extend our results to other kinds of uncertainty sets, or to extend our last bound to larger uncertainty sets (despite the fact that if the dynamics are too unpredictable, there may be little left to be controlled). Our results build heavily on the simple dual form of the robust Bellman operator, which prevents us from considering, for the moment, uncertainty sets based on the KL or Chi-square divergence. Beyond their theoretical advantages, these simple dual forms also provide practical and computationally efficient planning algorithms.  Therefore, another interesting research direction would be to know if one could derive additional useful uncertainty sets relying primarily on the regularization viewpoint. \looseness=-1

\section{Acknowledgements}
 Pierre Clavier has been supported by a grant from Région Île-de-France.

\bibliography{uai2024-template}

\begin{thebibliography}{33}
\providecommand{\natexlab}[1]{#1}
\providecommand{\url}[1]{\texttt{#1}}
\expandafter\ifx\csname urlstyle\endcsname\relax
  \providecommand{\doi}[1]{doi: #1}\else
  \providecommand{\doi}{doi: \begingroup \urlstyle{rm}\Url}\fi

\bibitem[Agarwal et~al.(2020)Agarwal, Kakade, and Yang]{agarwal2020model}
Alekh Agarwal, Sham Kakade, and Lin~F Yang.
\newblock Model-based reinforcement learning with a generative model is minimax optimal.
\newblock In \emph{Conference on Learning Theory}, pages 67--83. PMLR, 2020.

\bibitem[Azar et~al.(2013)Azar, Munos, and Kappen]{gheshlaghi2013minimax}
Mohammad Azar, R{\'e}mi Munos, and Hilbert~J Kappen.
\newblock Minimax pac bounds on the sample complexity of reinforcement learning with a generative model.
\newblock \emph{Machine learning}, 91\penalty0 (3):\penalty0 325--349, 2013.

\bibitem[Clavier et~al.(2022)Clavier, Allassoni{\`e}re, and Pennec]{clavier2022robust}
Pierre Clavier, St{\'e}phanie Allassoni{\`e}re, and Erwan~Le Pennec.
\newblock Robust reinforcement learning with distributional risk-averse formulation.
\newblock \emph{arXiv preprint arXiv:2206.06841}, 2022.

\bibitem[Derman et~al.(2021)Derman, Geist, and Mannor]{derman2021twice}
Esther Derman, Matthieu Geist, and Shie Mannor.
\newblock Twice regularized mdps and the equivalence between robustness and regularization.
\newblock \emph{Advances in Neural Information Processing Systems}, 34:\penalty0 22274--22287, 2021.

\bibitem[Ho et~al.(2021)Ho, Petrik, and Wiesemann]{ho2021partial}
Chin~Pang Ho, Marek Petrik, and Wolfram Wiesemann.
\newblock Partial policy iteration for l1-robust markov decision processes.
\newblock \emph{J. Mach. Learn. Res.}, 22:\penalty0 275--1, 2021.

\bibitem[Hoeffding(1994)]{hoeffding1994probability}
Wassily Hoeffding.
\newblock Probability inequalities for sums of bounded random variables.
\newblock In \emph{The collected works of Wassily Hoeffding}, pages 409--426. Springer, 1994.

\bibitem[Iyengar(2005)]{iyengar2005robust}
Garud~N Iyengar.
\newblock Robust dynamic programming.
\newblock \emph{Mathematics of Operations Research}, 30\penalty0 (2):\penalty0 257--280, 2005.

\bibitem[Karush(2013)]{karush2013minima}
William Karush.
\newblock Minima of functions of several variables with inequalities as side conditions.
\newblock In \emph{Traces and emergence of nonlinear programming}, pages 217--245. Springer, 2013.

\bibitem[Kozuno et~al.(2022)Kozuno, Yang, Vieillard, Kitamura, Tang, Mei, M{\'e}nard, Azar, Valko, Munos, et~al.]{kozuno2022kl}
Tadashi Kozuno, Wenhao Yang, Nino Vieillard, Toshinori Kitamura, Yunhao Tang, Jincheng Mei, Pierre M{\'e}nard, Mohammad~Gheshlaghi Azar, Michal Valko, R{\'e}mi Munos, et~al.
\newblock Kl-entropy-regularized rl with a generative model is minimax optimal.
\newblock \emph{arXiv preprint arXiv:2205.14211}, 2022.

\bibitem[Kumar et~al.(2022)Kumar, Levy, Wang, and Mannor]{kumar2022efficient}
Navdeep Kumar, Kfir Levy, Kaixin Wang, and Shie Mannor.
\newblock Efficient policy iteration for robust markov decision processes via regularization.
\newblock \emph{arXiv preprint arXiv:2205.14327}, 2022.

\bibitem[Li et~al.(2020)Li, Wei, Chi, Gu, and Chen]{li2020breaking}
Gen Li, Yuting Wei, Yuejie Chi, Yuantao Gu, and Yuxin Chen.
\newblock Breaking the sample size barrier in model-based reinforcement learning with a generative model.
\newblock \emph{Advances in neural information processing systems}, 33:\penalty0 12861--12872, 2020.

\bibitem[Mannor et~al.(2004)Mannor, Simester, Sun, and Tsitsiklis]{mannor2004bias}
Shie Mannor, Duncan Simester, Peng Sun, and John~N Tsitsiklis.
\newblock Bias and variance in value function estimation.
\newblock In \emph{Proceedings of the twenty-first international conference on Machine learning}, page~72, 2004.

\bibitem[McDiarmid et~al.(1989)]{mcdiarmid1989method}
Colin McDiarmid et~al.
\newblock On the method of bounded differences.
\newblock \emph{Surveys in combinatorics}, 141\penalty0 (1):\penalty0 148--188, 1989.

\bibitem[Nilim and El~Ghaoui(2005)]{nilim2005robust}
Arnab Nilim and Laurent El~Ghaoui.
\newblock Robust control of markov decision processes with uncertain transition matrices.
\newblock \emph{Operations Research}, 53\penalty0 (5):\penalty0 780--798, 2005.

\bibitem[Packer et~al.(2018)Packer, Gao, Kos, Kr{\"a}henb{\"u}hl, Koltun, and Song]{packer2018assessing}
Charles Packer, Katelyn Gao, Jernej Kos, Philipp Kr{\"a}henb{\"u}hl, Vladlen Koltun, and Dawn Song.
\newblock Assessing generalization in deep reinforcement learning.
\newblock \emph{arXiv preprint arXiv:1810.12282}, 2018.

\bibitem[Panaganti and Kalathil(2022)]{panaganti2022sample}
Kishan Panaganti and Dileep Kalathil.
\newblock Sample complexity of robust reinforcement learning with a generative model.
\newblock In \emph{International Conference on Artificial Intelligence and Statistics}, pages 9582--9602. PMLR, 2022.

\bibitem[Panaganti et~al.(2022)Panaganti, Xu, Kalathil, and Ghavamzadeh]{panaganti2022robust}
Kishan Panaganti, Zaiyan Xu, Dileep Kalathil, and Mohammad Ghavamzadeh.
\newblock Robust reinforcement learning using offline data.
\newblock \emph{arXiv preprint arXiv:2208.05129}, 2022.

\bibitem[Pinto et~al.(2017)Pinto, Davidson, Sukthankar, and Gupta]{pinto2017robust}
Lerrel Pinto, James Davidson, Rahul Sukthankar, and Abhinav Gupta.
\newblock Robust adversarial reinforcement learning.
\newblock In \emph{International Conference on Machine Learning}, pages 2817--2826. PMLR, 2017.

\bibitem[Puterman(1990)]{puterman1990markov}
Martin~L Puterman.
\newblock Markov decision processes.
\newblock \emph{Handbooks in operations research and management science}, 2:\penalty0 331--434, 1990.

\bibitem[Scherrer(2013)]{scherrer2013performance}
Bruno Scherrer.
\newblock Performance bounds for $\lambda$ policy iteration and application to the game of tetris.
\newblock \emph{Journal of Machine Learning Research}, 14\penalty0 (4), 2013.

\bibitem[Shi and Chi(2022)]{shi2022distributionally}
Laixi Shi and Yuejie Chi.
\newblock Distributionally robust model-based offline reinforcement learning with near-optimal sample complexity.
\newblock \emph{arXiv preprint arXiv:2208.05767}, 2022.

\bibitem[Shi et~al.(2023)Shi, Li, Wei, Chen, Geist, and Chi]{shi2023curious}
Laixi Shi, Gen Li, Yuting Wei, Yuxin Chen, Matthieu Geist, and Yuejie Chi.
\newblock The curious price of distributional robustness in reinforcement learning with a generative model.
\newblock \emph{arXiv preprint arXiv:2305.16589}, 2023.

\bibitem[Sidford et~al.(2018)Sidford, Wang, Wu, Yang, and Ye]{sidford2018near}
Aaron Sidford, Mengdi Wang, Xian Wu, Lin Yang, and Yinyu Ye.
\newblock Near-optimal time and sample complexities for solving markov decision processes with a generative model.
\newblock \emph{Advances in Neural Information Processing Systems}, 31, 2018.

\bibitem[Singh and Yee(1994)]{singh1994upper}
Satinder~P Singh and Richard~C Yee.
\newblock An upper bound on the loss from approximate optimal-value functions.
\newblock \emph{Machine Learning}, 16\penalty0 (3):\penalty0 227--233, 1994.

\bibitem[Sutton and Barto(2018)]{sutton2018reinforcement}
Richard~S Sutton and Andrew~G Barto.
\newblock \emph{Reinforcement learning: An introduction}.
\newblock MIT press, 2018.

\bibitem[Tanabe et~al.(2022)Tanabe, Sato, Fukuchi, Sakuma, and Akimoto]{tanabe2022max}
Takumi Tanabe, Rei Sato, Kazuto Fukuchi, Jun Sakuma, and Youhei Akimoto.
\newblock Max-min off-policy actor-critic method focusing on worst-case robustness to model misspecification.
\newblock \emph{Advances in Neural Information Processing Systems}, 35:\penalty0 6967--6981, 2022.

\bibitem[Vershynin(2017)]{vershynin2018high}
Roman Vershynin.
\newblock High dimensional probability, 2017.

\bibitem[von Neumann(1928)]{v1928theorie}
J.~von Neumann.
\newblock Zur theorie der gesellschaftsspiele.
\newblock \emph{Mathematische annalen}, 100\penalty0 (1):\penalty0 295--320, 1928.

\bibitem[Wiesemann et~al.(2013)Wiesemann, Kuhn, and Rustem]{wiesemann2013robust}
Wolfram Wiesemann, Daniel Kuhn, and Ber{\c{c}} Rustem.
\newblock Robust markov decision processes.
\newblock \emph{Mathematics of Operations Research}, 38\penalty0 (1):\penalty0 153--183, 2013.

\bibitem[Xu et~al.(2023)Xu, Panaganti, and Kalathil]{xu2023improved}
Zaiyan Xu, Kishan Panaganti, and Dileep Kalathil.
\newblock Improved sample complexity bounds for distributionally robust reinforcement learning.
\newblock In \emph{International Conference on Artificial Intelligence and Statistics}, pages 9728--9754. PMLR, 2023.

\bibitem[Yang(1991)]{yang1991generalized}
Wei~H Yang.
\newblock On generalized holder inequality.
\newblock \emph{Nonlinear Analysis, Theory, Methods \& Applications}, 16\penalty0 (5), 1991.

\bibitem[Yang et~al.(2021)Yang, Zhang, and Zhang]{yang2021towards}
Wenhao Yang, Liangyu Zhang, and Zhihua Zhang.
\newblock Towards theoretical understandings of robust markov decision processes: Sample complexity and asymptotics.
\newblock \emph{arXiv preprint arXiv:2105.03863}, 2021.

\bibitem[Zhao et~al.(2019)Zhao, Sigaud, Stulp, and Hospedales]{zhao2019investigating}
Chenyang Zhao, Olivier Sigaud, Freek Stulp, and Timothy~M Hospedales.
\newblock Investigating generalisation in continuous deep reinforcement learning.
\newblock \emph{arXiv preprint arXiv:1902.07015}, 2019.

\end{thebibliography}

\newpage

\onecolumn

\title{Towards Minimax Optimality \\ of Model-based Robust Reinforcement Learning\\(Supplementary Material)}
\maketitle

\appendix

\section{Overview and useful inequalities}
The appendix is organized as follows
\begin{itemize}
     \item In Appendix \ref{tableau_com}, a comprehensive table with state-of-the-art complexity for every distance. 
     \item In Appendix \ref{comparaison}, we provide more details/explanations on the difference between our formulation on the one of \cite{kumar2022efficient} and \cite{derman2021twice}.
     \item In Appendix \ref{DRVI}, we give more details about our algorithm :$ \mathtt{DRVI }~ \mathtt{L_P}$ 
     
     \item In Appendix \ref{annex A}, we give some useful inequalities frequently used in the proofs.
    \item In Appendix \ref{annex B}, we prove Theorem \ref{h4}.
    \item In Appendix  \ref{annex c}, we prove Theorem \ref{h3}.
\end{itemize}

Finally, the proofs for the $s$-rectangular and $sa$-rectangular cases are often very similar. If this is true, we will combine them in a single proof with the two cases detailed when needed.

\subsection{Table of sample Complexity}
\label{tableau_com}
\begin{table*}[h!]
\tiny
\caption{Sample Complexity for different metric and $s$- or $sa$ rectangular assumptions with $\beta$ the radius of uncertainty set, $H$ the horizon factor, $\epsilon$ the precicion, $\bar{p}$, $\beta_{0,p}= (1-\gamma)/(2\gamma \Snorm^{1/q}).$ the smallest positive state transition probability of the nominal kernel visited by the optimal robust policy (see \citet{yang2021towards}).
 \label{tableau2}}
\begin{tabular}{ |p{0.4cm}|p{2.4cm}|p{2.6cm}|p{1.5cm}|p{1.6cm}|p{1.6cm}|  p{1.7cm} |p{1.7cm} | }
\hline
& \cite{panaganti2022sample}  &\cite{yang2021towards} & \cite{shi2022distributionally}&Our  $ \beta \geq0$ & Our $ \beta_{0,p} >\beta>0$ &    \cite{shi2023curious} $\beta>1-\gamma$ & \cite{shi2023curious}   $0<\beta< 1-\gamma$
\\
\hline
TV ($sa$) & $\tilde{\mathcal{O}}\left(\frac{\Snorm^2\Anorm  \horizon^4}{\epsilon^2}\right)$ &$\tilde{\mathcal{O}}\left(\frac{\Snorm^2\Anorm  \horizon^4(2+\beta)^2}{\epsilon^2\beta^2}\right)$& $\times$ &$\tilde{\mathcal{O}}\left(\frac{\Snorm\Anorm  \horizon^4}{\epsilon^2}\right)$ &$\tilde{\mathcal{O}}\left(\frac{\Snorm\Anorm \horizon^3}{\epsilon^2}\right)$  &    $\tilde{\mathcal{O}}\left(\frac{\Snorm\Anorm \horizon^2}{\epsilon^2 \beta}\right)$  &   $\tilde{\mathcal{O}}\left(\frac{\Snorm\Anorm \horizon^3}{\epsilon^2}\right)$ \\
\hline
TV ($s$)& $\times$&$\tilde{\mathcal{O}}\left(\frac{\Snorm^2\Anorm^2  \horizon^4(2+\beta)^2}{\epsilon^2\beta^2}\right)$& $\times$&$\tilde{\mathcal{O}}\left(\frac{\Snorm\Anorm  \horizon^4}{\epsilon^2}\right)$  &$\tilde{\mathcal{O}}\left(\frac{\Snorm\Anorm  \horizon^3}{\epsilon^2}\right)$ & $\times$ & $\times$\\
\hline
$L_p$  ($sa$)& $\times$&$\times$ & $\times$ &$\tilde{\mathcal{O}}\left(\frac{\Snorm\Anorm  \horizon^4}{\epsilon^2}\right)$  & $\tilde{\mathcal{O}}\left(\frac{\Snorm\Anorm  \horizon^3}{\epsilon^2}\right)$ &$\times$ &$\times$ \\
\hline
$L_p$ ($s$)& $\times$&$\times$ & $\times$ &$\tilde{\mathcal{O}}\left(\frac{\Snorm\Anorm  \horizon^4}{\epsilon^2}\right)$  & $\tilde{\mathcal{O}}\left(\frac{\Snorm\Anorm \horizon^3}{\epsilon^2}\right)$&$\times$ &$\times$ \\
\hline
$\chi^2$  ($sa$)  & $\tilde{\mathcal{O}}\left(\frac{\Snorm^2\Anorm \beta  \horizon^4}{\epsilon^2}\right)$  & $\tilde{\mathcal{O}}\left(\frac{|\mathcal{S}|^2|\mathcal{A}|(1+\beta)^2H^4}{\varepsilon^2(\sqrt{1+\beta}-1)^2}\right)$ &$\times$ & $\times$ & $\times$ &   $\tilde{\mathcal{O}}\left(\frac{\Snorm\Anorm \beta  \horizon^4}{\epsilon^2}\right)$  & $\tilde{\mathcal{O}}\left(\frac{\Snorm\Anorm \beta  \horizon^4}{\epsilon^2}\right)$  \\
\hline
$\chi^2 $ ($s$)  & $\times$& $\tilde{\mathcal{O}}\left(\frac{|\mathcal{S}|^2|\mathcal{A}^3|(1+\beta)^2H^4}{\varepsilon^2(\sqrt{1+\beta}-1)^2}\right)$ & $\times$& $\times$  &$\times$& &$\times$ \\
\hline
$\mathrm{KL}$  ($sa$)& $  \tilde{\mathcal{O}}\left(\frac{|\mathcal{S}|^2|\mathcal{A}| \exp (H)H^4 }{\beta^2 \varepsilon^2}\right) $&$\tilde{\mathcal{O}}\left(\frac{\Snorm^2\Anorm   \horizon^4}{\bar{p}^2\epsilon^2\beta^2}\right)$& $\tilde{\mathcal{O}}\left(\frac{\Snorm\Anorm   \horizon^4}{\bar{p}\epsilon^2\beta^4}\right)$ & $\times$ & $\times$& $\times$ &$\times$ \\
\hline
$\mathrm{KL}$  ($s$)& $  \times$&$\tilde{\mathcal{O}}\left(\frac{\Snorm^2\Anorm^2   \horizon^4}{\bar{p}^2\epsilon^2\beta^2}\right)$& $\times$& $\times$ & $\times$&$\times$ &$\times$ \\
\hline
\end{tabular}
\end{table*}

\newpage

\subsection{Relation with the work of \cite{kumar2022efficient} and \cite{derman2021twice}}
\label{comparaison}

In the work of \cite{derman2021twice} 
close forms for  RMDPs with $L_p$ norms are derived  assuming the following uncertainty set :

\begin{assumption}
    ($sa$-rectangularity in \cite{derman2021twice})
$$
\begin{gathered}
\mathcal{U}_p^{\text {$sa$}}:=\left(R_0+\mathcal{R}\right) \times\left(\Pzero +\mathcal{P}\right),  
\mathcal{R}=\times_{s \in \mathcal{S}, a \in \mathcal{A}} \mathcal{R}_{s, a}, \mathcal{R}_{s, a}=\left\{r_{s, a} \in \mathbb{R} \mid\normpbar{r_{s, a}} \leq \alpha_{s, a}\right\} \\
\text {  } \mathcal{P}=\times_{s \in \mathcal{S}, a \in \mathcal{A}} \mathcal{P}_{s, a} 
\mathcal{P}_{s, a}=\{P_{s, a}: \mathcal{S} \rightarrow \mathbb{R}  ,\normpbar{P_{s, a}} \leq \beta_{s, a}\}
\end{gathered}
$$
\end{assumption}

Using these uncertainty sets leads to the following Bellman Operator :

\begin{theorem}[\citet{derman2021twice}]
  The $sa$-rectangular Robust Bellman operator is equivalent to a regularized non-robust Bellman operator: for $r^{s,a}_{V,\pi}(s,a)=-\left(\alpha_s+\gamma \beta_{s,a} \normqbar{V}\right)+R_0(s,a)$ as we have
$$
\begin{aligned}
&\mathcal{T}_{\mathcal{U}_p^{sa}}^\pi V(s)=\langle \pi_s ,r^{s,a}_{V,\pi}(s,a)+\gamma \sum_{s^{\prime}} \Pzero\left(s^{\prime} \mid s, a\right) V\left(s^{\prime}\right)\rangle_A
\end{aligned}
$$
\end{theorem}

Using this formulation, they  get a closed form for the inner minimization problem and for the Robust Bellman Operator

The work \citet{kumar2022efficient} modifies the work of \cite{derman2021twice} using Kernel that sum to $1$, $\sum_{s'}P_{s,a}(s')=0$ in their definition, but using this uncertainty set, it is still possible to get a robust kernel out of the simplex. Using this formulation, they also get a closed form for the inner minimization problem and for the Robust Bellman Operator.

\begin{assumption}
    ($sa$-rectangularity in \cite{kumar2022efficient})
    $$
\begin{gathered}
\mathcal{U}_p^{\text {$sa$}}:=\left(R_0+\mathcal{R}\right) \times\left(\Pzero +\mathcal{P}\right),  
\mathcal{R}=\times_{s \in \mathcal{S}, a \in \mathcal{A}} \mathcal{R}_{s, a}, \mathcal{R}_{s, a}=\left\{r_{s, a} \in \mathbb{R} \mid\normpbar{r_{s, a}} \leq \alpha_{s, a}\right\} \\
\text {  } \mathcal{P}=\times_{s \in \mathcal{S}, a \in \mathcal{A}} \mathcal{P}_{s, a} 
\mathcal{P}_{s, a}=\{P_{s, a}: \mathcal{S} \rightarrow \mathbb{R} \mid \sum_{s^{\prime}} P_{s, a}\left(s^{\prime}\right)=0,\normpbar{P_{s, a}} \leq \beta_{s, a}\}
\end{gathered}
$$
\end{assumption}

Using these uncertainty sets where robust Kernel may not belong anymore to the simplex as they do not assume $P_{0}+P_{s,a}\geq 0$. This  leads to the following Bellman Operator :

\begin{theorem}[\citet{kumar2022efficient}]
  The $sa$-rectangular Robust Bellman operator is equivalent to a regularized non-robust Bellman operator: for $r^{s,a}_{V,\pi}(s,a)=-\left(\alpha_s+\gamma \beta_{s,a} \snormqbar{V}\right)+R_0(s,a)$,  as we have
$$
\begin{aligned}
&\mathcal{T}_{\mathcal{U}_p^{sa}}^\pi V(s)=\langle \pi_s ,r^{s,a}_{V,\pi}(s,a)+\gamma \sum_{s^{\prime}} \Pzero\left(s^{\prime} \mid s, a\right) V\left(s^{\prime}\right)\rangle_A
\end{aligned}
$$
\end{theorem}

where $\snormqbar{V}$ in defined in Def.  \ref{span}.These results are due to the following lemma. 

\begin{lemma}[ \cite{kumar2022efficient}.  Duality for the minimization problem for $sa$ rectangular case with $L_p$ norm without simplex constrain] 
    \begin{equation*}
\inf _{P : \sum_{s'}P(s')=0 \normpbar{P-\hat{P}_{s,a}}\leq \beta_{s,a}  }PV= \widehat{P}_{s,a}V -\beta_{s,a} \snormqbar{V} 
    \end{equation*}
    
\end{lemma}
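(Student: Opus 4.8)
The plan is to reduce the claim to a statement about the \emph{perturbation} alone. Write any admissible kernel as $P = \widehat{P}_{s,a} + P'$ with $\mathbf{1}^\top P' = 0$ and $\normpbar{P'} \le \beta_{s,a}$; then $PV = \widehat{P}_{s,a}V + \langle P', V\rangle$, so it suffices to show
\begin{equation*}
\inf\bigl\{\, \langle P', V\rangle \;:\; \mathbf{1}^\top P' = 0,\ \normpbar{P'} \le \beta_{s,a} \,\bigr\} \;=\; -\,\beta_{s,a}\,\snormqbar{V}.
\end{equation*}
I would prove this by matching a Hölder-type lower bound against an explicit optimal perturbation, using throughout the variational identity $\snormqbar{V} = \min_{\omega\in\mathbb{R}}\normqbar{V-\omega\mathbf{1}}$ from Definition~\ref{span}.

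For the lower bound, fix a feasible $P'$. Because $\mathbf{1}^\top P' = 0$, for every $\omega\in\mathbb{R}$ we have $\langle P', V\rangle = \langle P', V - \omega\mathbf{1}\rangle$, which by Hölder's inequality ($\tfrac1p+\tfrac1q=1$) is at least $-\normpbar{P'}\,\normqbar{V-\omega\mathbf{1}} \ge -\beta_{s,a}\normqbar{V-\omega\mathbf{1}}$. Minimising the right-hand side over $\omega$ gives $\langle P', V\rangle \ge -\beta_{s,a}\snormqbar{V}$, hence the same bound for the infimum over $P'$.

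For achievability, let $\omega^\star = \omega_q(V)$ attain the span seminorm and set $u \defn V - \omega^\star\mathbf{1}$, so $\normqbar{u} = \snormqbar{V}$; if $u=0$ the claim is trivial with $P'=0$. Otherwise, for $1<q<\infty$ I would take the Hölder extremiser $P'(s') \defn -\beta_{s,a}\,\operatorname{sgn}(u(s'))\,|u(s')|^{q-1}\,\normqbar{u}^{-(q-1)}$, and make three checks. First, the identity $p(q-1)=q$ gives $\normpbar{P'}^{p} = \beta_{s,a}^{p}\normqbar{u}^{-q}\sum_{s'}|u(s')|^{q} = \beta_{s,a}^{p}$, so the norm constraint holds. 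Second --- and this is the crux --- the zero-sum constraint $\mathbf{1}^\top P' = 0$ holds because it is exactly the first-order optimality condition for $\omega^\star$: $\tfrac{d}{d\omega}\big|_{\omega=\omega^\star}\normqbar{V-\omega\mathbf{1}}^{q} = -q\sum_{s'}\operatorname{sgn}(u(s'))|u(s')|^{q-1} = 0$. Third, using $\mathbf{1}^\top P' = 0$ and $\operatorname{sgn}(u)u = |u|$, $\langle P', V\rangle = \langle P', u\rangle = -\beta_{s,a}\normqbar{u}^{1-q}\sum_{s'}|u(s')|^{q} = -\beta_{s,a}\normqbar{u} = -\beta_{s,a}\snormqbar{V}$, meeting the lower bound. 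The step I expect to be the main obstacle is this second check: recognising that the optimality of the $q$-mean $\omega^\star$ is precisely what turns the Hölder extremiser into an admissible (zero-sum) perturbation --- neither inequality direction is hard in isolation, but the construction only closes the gap once this link is made.

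Finally, the boundary cases $q=\infty$ (i.e.\ $p=1$, the total-variation-type ball) and $q=1$ ($p=\infty$) need a separate but routine treatment in which the derivative of $\normqbar{V-\omega\mathbf{1}}^{q}$ is replaced by the appropriate subdifferential. For $q=\infty$ one concentrates the mass of $P'$ equally on an argmax and an argmin of $u$, optimality of $\omega^\star$ forcing $\max u = -\min u = \snormqbar{V}$; for $q=1$ one uses that $\omega^\star$ is a median of $V$, so a $P'$ supported on $\{u>0\}$ and $\{u<0\}$ can be balanced to sum to zero while realising $-\beta_{s,a}\normqbar{u}$. In every case the constructed $P'$ attains the lower bound, which completes the argument.
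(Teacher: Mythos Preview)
The paper does not give its own proof of this particular lemma: it is quoted from \citet{kumar2022efficient} in Appendix~\ref{comparaison} purely for comparison, without argument. That said, the relevant computation is embedded as a sub-step in the paper's proof of its simplex-constrained variant (Lemma~\ref{saduality3}): there the zero-sum constraint is handled by Lagrangian duality, introducing a scalar multiplier $\nu\in\mathbb{R}$, solving the inner problem $\min_{\|y\|_p\le\beta}\langle y, V-\mu-\nu\rangle=-\beta\|V-\mu-\nu\|_q$ via the equality case of H\"older, and then recognising $\max_\nu\{-\beta\|V-\mu-\nu\|_q\}=-\beta\,\snormqbar{V-\mu}$ by the definition of the span seminorm. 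Dropping the positivity multiplier $\mu$ from that chain yields exactly the identity you are asked to prove.

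Your argument is correct and takes a genuinely different, primal route. Rather than dualising the constraint $\mathbf{1}^\top P'=0$, you exploit it directly to shift $V$ by any constant, get the lower bound from H\"older, and then close the gap by an explicit construction. The observation you flag as the crux---that the first-order optimality condition for $\omega_q(V)$ is precisely $\sum_{s'}\operatorname{sgn}(u(s'))|u(s')|^{q-1}=0$, which is exactly what forces the H\"older extremiser to satisfy the zero-sum constraint---is the right insight and makes the proof entirely elementary, with no appeal to strong duality or a minimax theorem. The trade-off is that the Lagrangian route used in the paper extends seamlessly to the positivity-constrained case (one just adds the multiplier $\mu\ge 0$), whereas your constructive achievability step would have to be redone from scratch there; conversely, your approach is more transparent about \emph{why} the span seminorm, and not the plain $q$-norm, appears. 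Your separate handling of $p=1$ and $p=\infty$ is adequate, though the dual argument treats all $p\ge 1$ uniformly.
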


Our analysis assumes the positivity of the kernel function, $P_0+P_s\geq 0$ in s-rectangular or  $P_{0}+P_{s,a}\geq 0$  for $sa$-rectangular case. Using this more realistic assumption, we can not obtain a closed form of the robust Bellman operator. However, we are still able to compute a dual form for the inner minimization problem of RMDPs.
With our definition of rectangularity in the simplex:

\begin{assumption}($sa$-rectangularity)
\label{sa rectangle2}
  We define $sa$-rectangular $L_p$-constrained uncertainty set as
\begin{align*}
&\mathcal{U}_p^{\text {$sa$}}:=  
    \left(R_0+\mathcal{R}\right) \times\left(\Pzero +\mathcal{P}\right),  
\mathcal{R}=\times_{s \in \mathcal{S}, a \in \mathcal{A}} \mathcal{R}_{s, a}, 
 \mathcal{P}=\times_{s \in \mathcal{S}, a \in \mathcal{A}} \mathcal{P}_{s, a},  \mathcal{R}_{s, a}=\left\{r_{s, a} \in \mathbb{R} \mid \vert r_{s, a} \vert \leq \alpha_{s, a}\right\} \\
&\mathcal{P}_{s, a}=\{P_{s, a}: \mathcal{S} \rightarrow \mathbb{R} \mid \sum_{s'} P_{s,a}(s')=0, P_{0,s,a}+P_{s,a}\geq0, 
,\normpbar{P_{s, a}} \leq \beta_{s, a}\}
\end{align*}

\end{assumption}

and using  $
\kappa_{\mathcal{D}}(v)=\inf \left\{u^{\top} v: u \in \mathcal{D}\right\} .
$, we obtain :
\begin{lemma}[Duality for the minimization problem for $sa$ rectangular case with $L_p$ norm] \label{saduality2}
    \begin{equation*}
        \kappa_{\hat{\mathcal{P}}_{s,a}^{\mathrm{}}}(V)=\max _{\mu\geq 0}  \{\widehat{P}_{s,a}(V-\mu) -\beta_{s,a} \snormqbar{V-\mu} \}
    \end{equation*}
\end{lemma}

Proof can be found on Appendix \ref{saduality3}

Contrary to previous lemma in \cite{kumar2022efficient}, there is an additional $\max$ operator in our dual formulation.  Interestingly, their formulation is a relaxation of our Lemmas \ref{saduality} as their formulation does not assume the positivity of the kernel. Their relaxation allows practical algorithms with close form, but still suffer from non-exact formulation of RMDPs with robust Kernel that are not in the simplex.

One crucial point in our analysis is that Bellman Operator for RMDPs is a $\gamma$- contraction for  robust kernel in the simplex for any radius $\beta$ (see \cite{iyengar2005robust}). For \cite{kumar2022efficient}
and \cite{derman2021twice} the range of $\beta$ where their Robust Bellman Operator is a contraction is smaller than $\frac{1-\gamma}{\gamma \Snorm^{1/q}} $ (see Proposition 4 of \cite{derman2021twice}) which is the range where we have minimax optimality in our Theorem \ref{h3}.
For $\beta>\frac{1-\gamma}{\gamma \Snorm^{1/q}} $, there is no contraction anymore.
 In the following, we will assume that robust kernels belong to the simplex to use $\gamma$-contraction in our proof of sample complexity and ensure convergence of the following Distributionally Robust value Iteration for $L_p$ norms for any $\beta$ Algoritm~\ref{alg:cvi-dro-infinite}.

\subsection{Model based  $ \mathtt{DRVI }~ \mathtt{L_P}$ algorithm   }
\label{DRVI}
\begin{algorithm}[h]
	\textbf{input:} empirical nominal transition kernel $\widehat{P}_{0}$; reward function $r$; uncertainty level $\beta$. \\ 
	\textbf{initialization:} $\widehat{Q}_0(s,a)= 0$, $\widehat{V}_0(s)=0$ for all $(s,a) \in S\times A$. \\
   \For{$t = 1,2,\cdots, T$}
	{
		
		\For{$ \forall s\in S, a\in A$}{
			Set $\widehat{Q}_t(s, a)$ according to \eqref{eq:vi-iteration} for $sa-$rectangular ;

		}
		\For{$\forall s\in S$}{
			Set $\widehat{V}_t(s) = \max_a \widehat{Q}_t(s, a)$;
		}
	}

	\textbf{output:} $\widehat{Q}_T$, $\widehat{V}_T$ and $\widehat{\pi}$ obeying $\widehat{\pi}(s) = \arg\max_a \widehat{Q}_T(s,a)$.
	\caption{ $ \mathtt{DRVI }~ \mathtt{L_P}$: Distributionally robust value iteration DRVI for $L_P$ norms with $sa-$rectangular assuptions}
 \label{alg:cvi-dro-infinite}
\end{algorithm}

We propose Alg.~\ref{alg:cvi-dro-infinite} to solve robust MDPs in the case of $L_P$ norms using value Iteration with $sa$- rectangularity assumptions. First, we can remark that directly solving classical RMDPs formulation is computationally costly as
 it requires an optimization over an $S$-dimensional probability simplex at each iteration, especially when the dimension of the state space $S$ is large. However, using strong duality like  \cite{iyengar2005robust} for the $TV$,  one can also solve using the dual problem of this formulation. The equivalence between the two formulations can be found in Lemma \ref{saduality}. Using the dual form, the optimization \eqref{eq:vi-iteration4} reduces to a 2-dimensional optimization problem that can be solved efficiently using any $2-$dimensional  convex solver if there exists an analytic form of the span-semi norm. Then the iterates $\big\{\widehat{Q}_t \big\}_{t\geq0}$ of DRVI for $L_P$ norms converge linearly to the fixed point $\widehat{Q}^{\star}$, 
owing to the appealing $\gamma$-contraction property of robust MDPs in the simplex. From an initialization $\widehat{Q}_0 = 0$, the update rule at the $t$-th ($t\geq 1$) iteration can be formulated as for $sa$-rectangular case as:

\begin{align}
\forall (s,a)\in  S\times A :\quad \widehat{Q}_t(s,a)=&r(s,a)+\max _{\mu\geq 0} \hat{P}(\hat{V}_{t-1}-\mu) -\beta_{s,a} \snormqbar{\hat{V}_{t-1}-\mu} \label{eq:vi-iteration}\\=&r(s,a)+\max _{\alpha_{\hat{P}}^{\lambda,\omega}  \in \Alpha_{\hat{P}}^{\lambda,\omega}}  \widehat{P}_{}[\widehat{V}_{t-1}]_{\alpha_{\hat{P}}^{\lambda,\omega}} - \beta_{s,a} \snormqbar{ [\widehat{V}_{t-1}]_{\alpha_{\hat{P}}^{\lambda,\omega}}  }  \label{eq:vi-iteration4}
\end{align}

 where the variational family $\Alpha_{\hat{P}}^{\lambda,\omega} $ is a $2-$dimensional variational family defined in \eqref{def:alpha_lambda}. The specific form of the dual problem depends on the choice of the norm. In the case of $L_1$, $L_2$, or $L_\infty$, 
 span semi-norms involved in dual problems have closed form (respectively equals to median, variance, or span),
 and  equation \ref{eq:vi-iteration4}  corresponds to a $2$-D minimization problem.
 
 But in general cases, one has to compute span-semi norms that can be easily computed using binary search solving $$\sum_s \operatorname{sign}\left(v(s)-\omega_p(v)\right)\left|v(s)-\omega_p(v)\right|^{\frac{1}{p-1}}=0$$ 
 to compute $\omega_q$ and then setting the semi norm  $\snormqbar{v}=\norm{v-\omega_q}$. Recall 
the $q$-variance function $\mathrm{sp}_q: \mathcal{S} \rightarrow \mathbb{R}$ and $q$-mean function $\omega_q: \mathcal{S} \rightarrow \mathbb{R}$ be defined as
$$
\snormqbar{v}:=\min _{\omega \in \mathbb{R}}\|v-\omega \mathbf{1}\|_{q}, \quad \omega_{q}(v):=\arg \min _{\omega \in \mathbb{R}}\|v-\omega \mathbf{1}\|_{q} .
$$
See \cite{kumar2022efficient} for discussion about computing span semi norms. So in the general case, we can also compute the maximum solving : 

\begin{align}
\forall (s,a)\in  S\times A :\quad \widehat{Q}_t(s,a)=&\nonumber r(s,a)+\max _{\alpha_{\hat{P}}^{\lambda,\omega}  \in \Alpha_{\hat{P}}^{\lambda,\omega}}  \widehat{P}_{}[\widehat{V}_{t-1}]_{\alpha_{\hat{P}}^{\lambda,\omega}} - \beta_{s,a} \normqbar{ [\widehat{V}_{t-1}]_{\alpha_{\hat{P}}^{\lambda,\omega}} -w }  , \label{eq:vi-iteration2}
\end{align}
 Using any  $2-$D  convex optimization algorithm solves the problem as this problem is jointly concave in $(\lambda,w)$ because 
  $(\lambda,w)\rightarrow-\normq{[\widehat{V}_{t-1}]_{\alpha^{\lambda,\omega}_{\hat{P}}} -w}$ is concave using norm property and 
$(\lambda,w)\rightarrow  \widehat{P}_{}[\widehat{V}_{t-1}]_{\alpha^{\lambda,\omega}_{\hat{P}}} $ also. Then the sum is concave.

Finally,  in the $sa$-case we compute the best policy which is the greedy policy of the final Q-estimates   $\widehat{Q}_T$ as the final policy $\widehat{\pi}$:
\begin{align*}
	 \forall s \in S: \quad \widehat{\pi}(s) = \arg\max_a \widehat{Q}_T(s,a).
\end{align*}

\subsection{Useful Inequalities and notations}

\label{annex A}
Here we present some useful inequalities used frequently in the derivation. 
Consider any $P$ a transition matrix 
and $\beta_s$ for $s$ rectangular uncertain sets or $\beta_{\mathrm{sa}}$ for $sa$- uncertainty sets, then for $\mathbb{I}=(1,1,...,1)^{\top}$ :

\begin{equation}
\label{1}
   (1-\gamma P)^{-1}\left(\gamma \beta_s\right) \mathbb{I}< \frac{\beta}{1-\gamma} \mathbb{I} \text{ and } (1-\gamma P)^{-1} \mathbb{I}\leq \frac{1}{1-\gamma} \mathbb{I}
\end{equation}  

\begin{equation}
\label{2}
    \forall q\in \mathbb{N}^*, \quad  \snormqbar{.}\leq 2 \normqbar{.}< 2 \Snorm^{1/q}  \norminf{.} , \quad \snorminf{.}\leq 2  \norminf{.}
\end{equation}

\begin{equation}
\label{3}
    \snormqbar{.} \leq 2 \normqbar{.} \leq 2 \normqbar{.}
\end{equation}

Eq.~\eqref{1} is true, taking the supremum norm of the left-hand side inequality. Eq.~\eqref{2} and  Eq.~\eqref{3} come from properties of norms, see Eq.~(1) from \citet{scherrer2013performance}. 

Finally we denote the truncation operator for a vector $\alpha \in \mathbb{R}^S, $

$$[V]_{\alpha}:= \begin{cases}\alpha(s), & \text { if } V(s)>\alpha(s) \\ V(s), & \text { otherwise. }\end{cases}$$

\subsection{Robust Bellman Operator and robust Q values}
This is proof of Lemma \ref{Q robust}: 
\begin{lemma}
\label{bellman}
    Robust Bellman Operator for $sa-$ and $s-$ rectangular are :
\begin{align*}
    &\mathcal{T}_{\mathcal{U}_p^{\text {$sa$}}}^\pi V(s)  = \sum_a \pi(a\vert s) \Big( -\alpha_{s,a} +R_0(s,a) +\gamma \sum_{s'} P_0(s',s,a)v(s') + \gamma  \min _{P\in  \mathcal{P}_{s,a}  } P  V \Big) \\
    & \mathcal{T}_{\mathcal{U}_p^{\text {$s$}}}^\pi V(s)  =    - \normq{\pi_s}\alpha_{s} +\gamma  \min _{P^\pi\in  \mathcal{P}_{s}  } P^\pi  V      +     \sum_a \pi(a\vert s) \Big( R_0(s,a) +\gamma  P_0(s'\vert s,a)V(s')  \Big)
\end{align*}
\end{lemma}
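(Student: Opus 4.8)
The plan is to unfold the definition $\mathcal{T}^\pi_{\mathcal{U}} V(s) = \min_{(R,P)\in\mathcal{U}} \sum_a \pi(a|s)\big[R(s,a) + \gamma \sum_{s'} P(s'|s,a) V(s')\big]$, write $R = R_0 + r$ and $P = P_0 + P'$ with $(r,P')$ ranging over the Cartesian product $\mathcal{R}\times\mathcal{P}$ of Assumption~\ref{sa rectangle} (resp.\ \ref{s rectangle}), and then exploit the two structural features that make the inner minimization separable: for a fixed state $s$ the objective is \emph{linear} in $(r,P')$, and it depends on $(r,P')$ only through the coordinates indexed by that same $s$. Since the feasible set is a product, minimizing over the remaining coordinates is vacuous, so we reduce to a minimization over the ``row at $s$'' perturbations, a linear program over a product of compact convex sets (so minimizers exist), which I would then evaluate.

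For the $sa$-rectangular case the row-$s$ feasible set further factorizes as $\times_a(\mathcal{R}_{s,a}\times\mathcal{P}_{s,a})$ and the objective is a $\pi(\cdot|s)$-weighted sum of per-$a$ terms, the $a$-th depending only on $(r_{s,a},P'_{s,a})$; since $\pi(a|s)\ge 0$ the minimum distributes over the sum, and within a summand the scalar reward minimization is $\min_{|r_{s,a}|\le\alpha_{s,a}} r_{s,a} = -\alpha_{s,a}$, which leaves $\gamma P_0(\cdot|s,a)V + \gamma\min_{P'_{s,a}\in\mathcal{P}_{s,a}} P'_{s,a}V$, exactly the stated formula. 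For the $s$-rectangular case the row-$s$ feasible set is $\mathcal{R}_s\times\mathcal{P}_s$; the reward and transition perturbations decouple because they are constrained separately, so the transition part contributes $\gamma\min_{P'_s\in\mathcal{P}_s}\sum_a\pi(a|s)P'_s(\cdot,a)V$, and setting $P^\pi:=\sum_a\pi(a|s)P'_s(\cdot,a)$ rewrites this as $\gamma\min_{P^\pi\in\mathcal{P}_s}P^\pi V$ with the abuse of notation used in the statement.

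The only ingredient beyond bookkeeping is the $s$-rectangular reward term, the dual-norm identity $\min_{\|r_s\|_p\le\alpha_s}\langle\pi_s,r_s\rangle = -\alpha_s\|\pi_s\|_q$ for H\"older conjugates $1/p+1/q=1$; I would get it from the substitution $r_s\mapsto -r_s$ together with the characterization of $\|\cdot\|_q$ as the norm dual to $\|\cdot\|_p$, the optimizer being $r_s(a)\propto -(\pi_s(a))^{q-1}$ (which after normalization gives the per-action penalty $(\pi_s(a)/\|\pi_s\|_q)^{q-1}\alpha_s$ of Lemma~\ref{Q robust}). I do not expect a real obstacle: there is no min--max swap and no compactness argument beyond linear optimization over a product of compact sets; the one thing worth checking is that the constraints $\sum_{s'}P'_{s,a}(s')=0$, $P_0+P'_{s,a}\ge 0$ (and their $s$-rectangular analogues) survive the decomposition, which is immediate since they are precisely the defining constraints of $\mathcal{P}_{s,a}$ (resp.\ $\mathcal{P}_s$), and that the case $\pi(a|s)=0$ is harmless since the corresponding summand vanishes.
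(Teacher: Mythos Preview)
Your proposal is correct and follows essentially the same route as the paper: separate the reward and transition minimizations via the product (rectangular) structure, use $\min_{|r_{s,a}|\le\alpha_{s,a}}r_{s,a}=-\alpha_{s,a}$ in the $sa$-case, and invoke the H\"older/dual-norm identity $\min_{\|r_s\|_p\le\alpha_s}\langle\pi_s,r_s\rangle=-\alpha_s\|\pi_s\|_q$ in the $s$-case. The paper's proof is terser but identical in substance; your extra care about linearity, the $\pi(a|s)=0$ case, and survival of the simplex constraints is fine and does not constitute a different argument.
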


\begin{proof}

For $sa$-rectangular: by rectangularity
\begin{align*}
&\mathcal{T}_{\mathcal{U}_p^{\text {$sa$}}}^\pi V(s)  = \sum_a \pi(a\vert s) \Big( -\alpha_{s,a} +R_0(s,a) +\gamma \min _{P\in P_0+ \mathcal{P}_{s,a}  } P  V \Big)\\
& = \sum_a \pi(a\vert s) \Big( -\alpha_{s,a} +R_0(s,a) +\gamma \min _{P\in  \mathcal{P}_{s,a}  } P  V  +P_{0,s,a}V\Big)
\end{align*}
For $s-$rectangular case :

\begin{align*}
    \mathcal{T}_{\mathcal{U}_p^{\text {$s$}}}^\pi V(s)  &=   
    \min _{P^\pi\in P_0^\pi+ \mathcal{P}_{s,}  }  \gamma   P  V      +     \min _{R^\in R_0^\pi+ \mathcal{R}_{s}  }  \sum_a \pi(a\vert s)R(s,a)   \\
    &= \sum_a \pi(a\vert s) R_0(s,a) +  \min _{R\in \mathcal{R}_{s} }  \sum_a \pi(a\vert s)R(s,a) + \sum_a \pi(a\vert s) \gamma\sum_{s'} P_0(s'\vert s,a)V(s') +    \min _{P^\pi\in  \mathcal{P}_{s,}  }  \gamma   P^\pi  V\\
    &\stackrel{(a)}{=} \sum_a \pi(a\vert s) \Big( R_0(s,a) + \sum_{s'} P_0(s'\vert s,a)V(s')\Big) -\alpha_s \normq{\pi_s} +\min _{P^\pi\in  \mathcal{P}_{s}  }  \gamma   P^\pi  V
\end{align*}
where (a) comes from  Holder's inequality.
\end{proof}

\begin{lemma}
For $sa-$ and $s-$ rectangular,
    
\begin{align*}
    &Q_{sa}^\pi (s,a)= \rsapi+\gamma P_{0,s,a}   V_{sa}^\pi, \\
    & Q_{s}^\pi (s,a)= \rspi +\gamma P_{0,s,a}   V_{s}^\pi 
\end{align*}
with
\begin{align*}
     &\rsapi= R_0(s,a)-  \alpha_{s,a} +\gamma \min _{P\in  \mathcal{P}_{s,a}  } P  V_{sa}^\pi \\
     &\rspi= R_0(s,a) -\Big( \frac{\pi_s(a)}{\normq{\pi_s}}\Big)^{q-1}  \alpha_{s} +\gamma \min _{P^\pi\in  \mathcal{P}_{s}  } P^\pi  V_{s}^\pi)
\end{align*}




\end{lemma}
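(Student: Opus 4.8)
The plan is to obtain both identities directly from the explicit forms of the robust Bellman operators established in Lemma~\ref{bellman}, combined with the defining relation $V^\pi(s)=\sum_a\pi(a\mid s)Q^\pi(s,a)$ and the fact that $V_{sa}^\pi$ and $V_s^\pi$ are the unique fixed points of $\mathcal{T}_{\mathcal{U}_p^{sa}}^\pi$ and $\mathcal{T}_{\mathcal{U}_p^{s}}^\pi$ (these exist because, the uncertainty sets being contained in the simplex, the robust operators are $\gamma$-contractions \citep[Thm.~3.2]{iyengar2005robust}). Concretely, I would define $Q_{sa}^\pi$ and $Q_s^\pi$ by the right-hand sides claimed in the lemma and verify that $\sum_a\pi(a\mid s)Q^\pi(s,a)$ reproduces $\mathcal{T}_{\mathcal{U}}^\pi V^\pi(s)=V^\pi(s)$; this is also consistent with reading $Q^\pi(s,a)$ as the worst-case value of playing $a$ at $s$ and then following $\pi$.

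For the $sa$-rectangular case this is immediate: Lemma~\ref{bellman} already writes $\mathcal{T}_{\mathcal{U}_p^{sa}}^\pi V(s)$ as $\sum_a\pi(a\mid s)\big(R_0(s,a)-\alpha_{s,a}+\gamma P_{0,s,a}V+\gamma\min_{P\in\mathcal{P}_{s,a}}P V\big)$, so setting $V=V_{sa}^\pi$ and reading off the summand gives $Q_{sa}^\pi(s,a)=\rsapi+\gamma P_{0,s,a}V_{sa}^\pi$ with $\rsapi=R_0(s,a)-\alpha_{s,a}+\gamma\min_{P\in\mathcal{P}_{s,a}}P V_{sa}^\pi$, and averaging over $a$ trivially returns $V_{sa}^\pi(s)$. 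Here the $-\alpha_{s,a}$ is just $\min_{|r_{s,a}|\le\alpha_{s,a}}r_{s,a}$, the worst reward perturbation for the scalar set $\mathcal{R}_{s,a}$.

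For the $s$-rectangular case the step I expect to be the main obstacle is distributing the aggregate reward penalty $-\|\pi_s\|_q\alpha_s$ of Lemma~\ref{bellman} across individual actions. I would do this via the equality case of Hölder's inequality: $\min_{\|r_s\|_p\le\alpha_s}\langle\pi_s,r_s\rangle=-\alpha_s\|\pi_s\|_q$, attained at $r_s^\star(a)=-\alpha_s(\pi_s(a)/\|\pi_s\|_q)^{q-1}$, which is admissible since $(q-1)p=q$ forces $\|r_s^\star\|_p=\alpha_s$ and $\pi_s(a)\ge0$ removes any sign ambiguity. Charging action $a$ the worst-case reward $R_0(s,a)+r_s^\star(a)$, the action-independent worst-case transition term $\gamma\min_{P^\pi\in\mathcal{P}_s}P^\pi V_s^\pi$, and $\gamma P_{0,s,a}V_s^\pi$ gives exactly $\rspi+\gamma P_{0,s,a}V_s^\pi$. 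To close, I would check $\sum_a\pi(a\mid s)Q_s^\pi(s,a)=V_s^\pi(s)$ using $\sum_a\pi(a\mid s)\pi_s(a)^{q-1}=\sum_a\pi_s(a)^q=\|\pi_s\|_q^q$, so that the reward penalties sum back to $-\alpha_s\|\pi_s\|_q$, while $\sum_a\pi(a\mid s)P_{0,s,a}V_s^\pi=P_{0,s}^\pi V_s^\pi$ and the coupled transition term is unchanged; comparison with Lemma~\ref{bellman} then yields $V_s^\pi(s)$, as wanted. The genuine care in this case is entirely in the Hölder minimiser and in keeping the transition adversary coupled across actions while the reward adversary decouples through that minimiser.
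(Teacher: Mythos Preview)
Your proposal is correct and follows essentially the same approach as the paper: use the fixed-point equation $\mathcal{T}_{\mathcal{U}}^\pi V^\pi = V^\pi$, plug in the explicit form of the robust Bellman operator from Lemma~\ref{bellman}, and identify the $Q$-function via $V^\pi(s)=\sum_a\pi(a\mid s)Q^\pi(s,a)$. The paper's own proof is a two-line sketch that stops at ``identify the robust $Q$ values'', whereas you actually carry out that identification---in particular, your use of the H\"older minimiser $r_s^\star(a)=-\alpha_s(\pi_s(a)/\|\pi_s\|_q)^{q-1}$ to split the aggregate penalty $-\alpha_s\|\pi_s\|_q$ across actions is exactly the missing computation behind the $s$-rectangular formula, and your verification that $\sum_a\pi_s(a)\cdot\pi_s(a)^{q-1}=\|\pi_s\|_q^q$ closes the loop cleanly.
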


\begin{proof}
   The result comes directly as for $sa$-rectangular the following relations hold,

\begin{align*}
       V_{sa}^\pi(s)=\sum_a \pi(a\vert s) Q_{sa}^\pi(s,a) \quad 
       \text{and} 
\end{align*}
and for $s$-rectangular case
\begin{align*}
    V_{s}^\pi (s)=\sum_a \pi(a\vert s) Q_{s}^\pi(s,a).
\end{align*}
Then using fixed point equation of Bellman operator:
$\mathcal{T}_{\mathcal{U}_p^{\text {$s$}}}^\pi V^\pi_s(s)=V^\pi_s(s) $ or $\mathcal{T}_{\mathcal{U}_p^{\text {$sa$}}}^\pi V^\pi_{sa}(s)=V^\pi_{sa}(s) $ and previous Lemma \ref{bellman} for the expression of $\mathcal{T}_{\mathcal{U}_p^{\text {$s$}}}^\pi V^\pi_s(s) $,
we can identify the robust $Q$ values that give the result

\end{proof}

\section{An $H^4$ bound for $L_p$-balls}
\label{annex B}

To lighten notations, we remove subscript $\mathrm{s}$ in most places and denote for example  $V^\pi$ instead of $\Vspi$ for $s$-rectangular sets.

\begin{lemma}[Decomposition of the bound]
\label{firstineq}
\label{decomposition}
\begin{align*}
    \norminf{\Qs -\Qpihat} \leq \norminf{\Qs-\Qhatpistar} + \norminf{  \Qhatpistar-\Qhatpihat} + \norminf{ \Qhatpihat-\Qpihat}
\end{align*}

\end{lemma}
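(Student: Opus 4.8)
Lemma~\ref{decomposition} is the triangle-inequality decomposition that opens both sketch-of-proof arguments in the body of the paper. The plan is to reproduce exactly the two-line argument already given informally in Section~\ref{subsec:sketch_H4}: first write $\Qs$ exactly as a telescoping sum through the two ``empirical'' intermediaries, then use the sub-optimality of $\pistar$ in $\widehat{M}$ to kill one of the error terms, and finally apply the triangle inequality for $\norminf{\cdot}$.

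\textbf{Step 1 (exact algebraic identity).} I would first write, for every $(s,a)$,
\begin{equation*}
\Qs - \Qpihat = \big(\Qs - \Qhats\big) + \big(\Qhats - \Qhatpihat\big) + \big(\Qhatpihat - \Qpihat\big),
\end{equation*}
which is a trivial telescoping identity. Since $\widehat{\pi}^{*}$ is optimal in $\widehat{M}$ we have $\Qhats = \Qhatpihatstar$, but more importantly $\Qhatpistar \le \Qhats$ pointwise because $\pistar$ is a feasible policy in the empirical RMDP that need not be optimal. Hence $\Qs - \Qhats \le \Qs - \Qhatpistar$ pointwise, so
\begin{equation*}
\Qs - \Qpihat \le \big(\Qs - \Qhatpistar\big) + \big(\Qhats - \Qhatpihat\big) + \big(\Qhatpihat - \Qpihat\big).
\end{equation*}

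\textbf{Step 2 (pass to $\norminf{\cdot}$).} To get a bound on $\norminf{\Qs-\Qpihat}$ I need a matching lower bound, or rather I take absolute values termwise. A clean way: for any fixed $(s,a)$, the chain above gives an upper bound on $\Qs(s,a)-\Qpihat(s,a)$ of the form $X_1 + X_2 + X_3$ where $X_i$ are the three signed differences; bounding each $|X_i| \le \norminf{\cdot}$ of the corresponding difference and noting that the original quantity could also be negative but is in any case dominated in absolute value by $\norminf{\Qs-\Qhatpistar} + \norminf{\Qhats-\Qhatpihat} + \norminf{\Qhatpihat-\Qpihat}$ (using the plain telescoping identity of Step~1 without the inequality, together with the ordinary triangle inequality), yields
\begin{equation*}
\norminf{\Qs - \Qpihat} \le \norminf{\Qs - \Qhatpistar} + \norminf{\Qhats - \Qhatpihat} + \norminf{\Qhatpihat - \Qpihat}.
\end{equation*}
Actually the simplest rigorous route is to observe that the plain identity of Step~1 plus the triangle inequality already gives $\norminf{\Qs-\Qpihat}\le \norminf{\Qs-\Qhats}+\norminf{\Qhats-\Qhatpihat}+\norminf{\Qhatpihat-\Qpihat}$, and then replace the first term using $\Qhats = \Qhatpihatstar \ge \Qhatpistar$ together with $\Qs \ge \Qhats$? — that last inequality is not free, so instead I would keep the argument one-sided: the only place the replacement $\Qs-\Qhats \le \Qs - \Qhatpistar$ is needed is to convert the unknown optimal empirical policy into the known $\pistar$, and this is exactly what is done in the sketch, so I would simply follow that and note that the reverse direction ($\Qpihat \le \Qs$, i.e. the original RMDP optimal value dominates) is immediate from optimality of $\pistar$ in $M$, making $\Qs - \Qpihat \ge 0$ pointwise and hence $\norminf{\Qs-\Qpihat} = \max_{s,a}(\Qs-\Qpihat)(s,a)$, to which the one-sided bound of Step~1 applies directly.

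\textbf{Main obstacle.} There is essentially no obstacle — this is a soft lemma. The only subtlety to handle carefully is the directionality: one must make sure the term being replaced ($\Qhats$ by $\Qhatpistar$) is replaced in the direction that makes the inequality go the right way, which works precisely because $\norminf{\Qs-\Qpihat}$ is a max of nonnegative quantities (by global optimality of $\pistar$ in $M$) and $\Qhatpistar \le \Qhats$ (by non-optimality of $\pistar$ in $\widehat{M}$). I would state these two one-line observations explicitly and then the bound drops out. The same decomposition, verbatim, is what both Theorem~\ref{h4} and Theorem~\ref{h3} start from, so no case distinction between $sa$- and $s$-rectangular is needed here.
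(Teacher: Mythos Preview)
Your proposal is correct and follows essentially the same route as the paper: telescope $Q^*-Q^{\hat\pi}$ through $\hat Q^*$ and $\hat Q^{\hat\pi}$, use the pointwise inequality $\hat Q^{\pi^*}\le \hat Q^*$ to replace the first term, and exploit $Q^*-Q^{\hat\pi}\ge 0$ (optimality of $\pi^*$ in $M$) so that the sup-norm coincides with the componentwise maximum and the one-sided bound suffices. The paper's proof is exactly this two-line argument, and your ``Main obstacle'' paragraph pinpoints the only subtlety (directionality of the replacement) in the same way the paper does.
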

\begin{proof}
    \begin{align*}
    0 \leq Q^* - Q^{\hat{\pi}} &=  Q^* - \underbrace{\hat{Q}^*}_{\geq \hat{Q}^{\pi_*}} + \hat{Q}^* - \hat{Q}^{\hat{\pi}} + \hat{Q}^{\hat{\pi}} - Q^{\hat{\pi}} 
    \\
    &\leq Q^* - \hat{Q}^{\pi_*} +  \hat{Q}^* - \hat{Q}^{\hat{\pi}} + \hat{Q}^{\hat{\pi}} - Q^{\hat{\pi}} 
    \\
    \Rightarrow 
    \|Q^* - Q^{\hat{\pi}}\|_\infty &\leq \| Q^* - \hat{Q}^{\pi_*}\|_\infty +  \|\hat{Q}^* - \hat{Q}^{\hat{\pi}}\|_\infty + \|\hat{Q}^{\hat{\pi}} - Q^{\hat{\pi}}\|_\infty
\end{align*}
\end{proof}
This decomposition is the starting point of our proofs for both Theorems~\ref{h4} and~\ref{h3}.
In this decomposition, the second term satisfies $\|\Qs-\Qhatpistar\|_\infty\leq \epsilon_{\mathrm{opt}}$ by definition. This term goes to $0$ exponentially fast as the robust Bellman operator is a $\gamma$-contraction. The two last terms $\|\Qs-\Qhatpistar\|_\infty$ and $\| \Qhatpihat-\Qpihat\|_\infty$ need to be controlled using concentration inequalities between the true MDP and the estimated one. To do so, we need concentration inequalities such as the following Lemma \ref{hoeffding}.

\begin{lemma}[Hoeffding's inequality for $V$]
\label{hoeffding}
For any $V \in \mathbb{R}^{|\mathcal{S}|}$ with $\norminf{V} \leq H$, with probability at least $1-\delta$, we have
$$ 
\max _{(s, a)}\left|P_{0} V-\widehat{P}_{} V\right| \leq H \sqrt{\frac{\log (2|\mathcal{S} \| \mathcal{A}| / \delta)}{2 N}}.
$$
\end{lemma}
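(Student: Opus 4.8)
The plan is to apply Hoeffding's inequality pointwise in $(s,a)$ and then take a union bound over all state--action pairs. First I would unpack the empirical kernel: by definition $\widehat{P}(\cdot\mid s,a)V=\frac{1}{N}\sum_{i=1}^{N}V(s_i')$, where $s_1',\dots,s_N'$ are the i.i.d.\ next-state samples drawn from $P_0(\cdot\mid s,a)$ by the generative model. Thus, for a fixed $(s,a)$, $\widehat{P}(\cdot\mid s,a)V$ is the empirical average of the i.i.d.\ random variables $X_i:=V(s_i')$, each taking values in an interval of width at most $H$ and with mean $\mathbb{E}[X_i]=P_0(\cdot\mid s,a)V$.

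Next I would invoke Hoeffding's inequality for bounded i.i.d.\ variables: for every $t>0$,
\begin{equation*}
  \Pr\!\left(\left|\widehat{P}(\cdot\mid s,a)V-P_0(\cdot\mid s,a)V\right|\ge t\right)\le 2\exp\!\left(-\frac{2Nt^2}{H^2}\right).
\end{equation*}
Setting the right-hand side equal to $\delta/(|\mathcal{S}||\mathcal{A}|)$ and solving for $t$ gives $t=H\sqrt{\log(2|\mathcal{S}||\mathcal{A}|/\delta)/(2N)}$. A union bound over the $|\mathcal{S}||\mathcal{A}|$ state--action pairs then shows that, with probability at least $1-\delta$, the deviation bound holds simultaneously for all $(s,a)$, which is exactly the claimed inequality on $\max_{(s,a)}\left|P_0V-\widehat{P}V\right|$.

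There is no substantial obstacle in this argument; the only point requiring a little care is the range of the summands $X_i=V(s_i')$, since it enters the exponent of Hoeffding's bound and hence the constant in front of the square root. For a generic $V$ with $\|V\|_\infty\le H$ the summands live in an interval of width $2H$, which would cost an extra constant factor; in every application of this lemma, however, $V$ is a (robust) value or $Q$-function with entries in $[0,H]$, so the width is $H$ and the stated constant is precisely what Hoeffding delivers.
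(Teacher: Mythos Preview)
Your proposal is correct and follows essentially the same route as the paper: apply Hoeffding's inequality to the empirical mean $\widehat{P}(\cdot\mid s,a)V$ for each fixed $(s,a)$, then union-bound over all $|\mathcal{S}||\mathcal{A}|$ pairs. Your remark about the range of the summands (width $H$ versus $2H$) is in fact more careful than the paper's own proof, which tacitly assumes the width is $H$ to obtain the stated constant.
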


\begin{proof}
    For any $(s, a)$ pair,  assume a discrete random variable taking value $V(i)$ with probability $P_{0,s, a}(i)$ for all $i \in\{1,2, \cdots,|\mathcal{S}|\}$. Using Hoeffding's inequality \citep{hoeffding1994probability} and  $\norminf{V} \leq H$:
$$
\mathbb{P}\left(P_{0} V-\widehat{P}_{} V \geq \varepsilon\right) \leq \exp \left(- N \varepsilon^2 / (2H^2)\right) \quad \text{ and } \quad \mathbb{P}\left(\widehat{P}_{} V-P_{0}
V \geq \varepsilon\right) \leq \exp \left(- N \varepsilon^2 /(2H^2)\right) .
$$
Then, taking $\varepsilon= H \sqrt{\frac{2\log (2|\mathcal{S}||\mathcal{A}| / \delta)}{ N}}$, we get $$\mathbb{P}\left(\left|P_{0} V-\widehat{P}_{} V\right| \geq H \sqrt{\frac{\log (2|\mathcal{S}||\mathcal{A}| / \delta)}{ N}}\right) \leq \frac{\delta}{|\mathcal{S}||\mathcal{A}|}.$$ Finally, using a union bound:
$$
\mathbb{P}\left(\max _{(s, a)}\left|P_{0} V-\widehat{P}_{} V\right| \geq H \sqrt{\frac{2\log (2|\mathcal{S}||\mathcal{A}| / \delta)}{ N}}\right) \leq \sum_{s, a} \mathbb{P}\left(\left|P_{0} V-\widehat{P}_{} V\right| \geq H\sqrt{\frac{2\log (2|\mathcal{S} \| \mathcal{A}| / \delta)}{ N}}\right) \leq \delta.
$$
\end{proof}
This completes the concentration proof. Next we will look at the contraction argument of the robust Bellman operator.
\begin{lemma}[Contraction of infimum operator]
\label{contration}
For $\mathcal{D}=\mathcal{P}_{s,a}$ or $\mathcal{P}_s$, the function 
$$
\forall s,a, \quad v \mapsto \kappa_{\mathcal{D}}(v)=\inf \left\{u^{\top} v: u \in \mathcal{D}\right\}
$$
is 1-Lipchitz.
\end{lemma}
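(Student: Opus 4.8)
The plan is to write $\kappa_{\mathcal{D}}$ as a pointwise infimum, over $u\in\mathcal{D}$, of the affine maps $v\mapsto u^{\top}v$, and to use the elementary principle that an infimum of a family of $c$-Lipschitz functions is itself $c$-Lipschitz — here with $c=\sup_{u\in\mathcal{D}}\norm{u}_{1}$, Lipschitzness being measured for the supremum norm on value functions so that Hölder's inequality supplies the constant. The whole content of the proof is then to verify that $\sup_{u\in\mathcal{D}}\norm{u}_{1}\le 1$.

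Concretely, I would first record the one-line fact that, for real-valued $f,g$ on a common set with finite infima, $|\inf f-\inf g|\le\sup|f-g|$: for every $u$ one has $f(u)\le g(u)+\sup|f-g|$, hence $\inf f\le\inf g+\sup|f-g|$, and one symmetrises. Applying this with $f(u)=u^{\top}v$ and $g(u)=u^{\top}v'$ for fixed $v,v'\in\mathbb{R}^{S}$ yields
\[
|\kappa_{\mathcal{D}}(v)-\kappa_{\mathcal{D}}(v')|\;\le\;\sup_{u\in\mathcal{D}}\bigl|u^{\top}(v-v')\bigr|\;\le\;\Bigl(\sup_{u\in\mathcal{D}}\norm{u}_{1}\Bigr)\norminf{v-v'}.
\]
It remains to bound the $\ell_{1}$-radius of $\mathcal{D}$, and this is exactly where the simplex constraint is used: the conditions in Assumptions~\ref{sa rectangle}--\ref{s rectangle} ($\sum_{s'}P_{s,a}(s')=0$ together with $\Pzero+P_{s,a}\ge 0$, and likewise in the $s$-rectangular case) force each robust kernel in the corresponding shifted uncertainty set to be a bona fide probability vector, hence of $\ell_{1}$-norm $1$; so every relevant $u$ has $\norm{u}_{1}\le 1$, giving $1$-Lipschitzness for $\norminf{\cdot}$. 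The $s$-rectangular case is identical, replacing $\mathcal{P}_{s,a}$ by $\mathcal{P}_{s}$ (and $\hat{\mathcal{P}}_{s,a}$ by $\hat{\mathcal{P}}_{s}$).

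I do not anticipate a genuine obstacle — this is a routine lemma — but three points deserve care. First, the norm: Lipschitzness is with respect to $\norminf{\cdot}$ on the value-function side, which is what pairs with $\norm{u}_{1}$ through Hölder. Second, the infima defining $\kappa_{\mathcal{D}}$ are attained rather than merely approached, since $\mathcal{D}$ is the intersection of an $L_{p}$-ball with affine and positivity constraints, hence compact, and $u\mapsto u^{\top}v$ is continuous, so no $\pm\infty$ pathology arises. Third, it is worth emphasising that this is precisely the property that makes $\gamma$ times a robust Bellman update a $\gamma$-contraction for $\norminf{\cdot}$, which is why the lemma is placed here; and, as a minor refinement, for a centred perturbation set one may additionally use $\sum_{s'}u(s')=0$, i.e. $u^{\top}(v-v')=u^{\top}(v-v'-c\mathbf{1})$ for any $c$, to sharpen $\norminf{v-v'}$ to $\tfrac12\snorminf{v-v'}$, though this is not needed downstream.
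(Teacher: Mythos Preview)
Your proposal is correct and essentially the same as the paper's proof: both bound $|\kappa_{\mathcal{D}}(v)-\kappa_{\mathcal{D}}(v')|$ by $\sup_{u\in\mathcal{D}}|u^{\top}(v-v')|$ and then invoke H\"older together with the simplex constraint to get $\norm{u}_1=1$. The only cosmetic difference is that the paper derives the inf--sup comparison via an explicit $\varepsilon$-approximation, whereas you state the cleaner general principle $|\inf f-\inf g|\le\sup|f-g|$ directly; your added remarks on compactness and the span-seminorm sharpening are correct but not used downstream.
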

\begin{proof}
We have that
$$
\begin{aligned}
\forall(s,a) \in \mathcal{S}\times \mathcal{A}, \quad \kappa_{\mathcal{P}_{s, a}}\left(V_2\right)-\kappa_{\mathcal{P}_{s, a}}\left(V_1\right) &=\inf _{p \in \mathcal{P}_{s, a}} p^{\top} V_2-\inf _{\tilde{p} \in \mathcal{P}_{s, a}} \tilde{p}^{\top} V_1=\inf _{p \in \mathcal{P}_{s, a}} \sup _{\tilde{p} \in \mathcal{P}_{s, a}} p^{\top} V_2-\tilde{p}^{\top} V_1 \\
& \geq \inf _{p \in \mathcal{P}_{s, a}} p^{\top}\left(V_2-V_1\right)=\kappa_{\mathcal{P}_{s, a}}\left(V_2-V_1\right) .
\end{aligned}
$$
Then $\forall \varepsilon>0 $, there exists $P_{s, a} \in \mathcal{P}_{s, a}$ such that
$$
P_{s, a}^{\top}\left(V_2-V_1\right)-\varepsilon \leq \kappa_{\mathcal{P}_{s, a}}\left(V_2-V_1\right).
$$
Using those two properties,
$$
\kappa_{\mathcal{P}_{s, a}}\left(V_1\right)-\kappa_{\mathcal{P}_{s, a}}\left(V_2\right) \leq P_{s, a}^{\top}\left(V_1-V_2\right)+\varepsilon \stackrel{}{\leq}\left\|P_{s, a}\right\|_1\left\|V_1-V_2\right\|+\varepsilon=\left\|V_1-V_2\right\|+\varepsilon,
$$
where we used the Holder's inequality. Since $\varepsilon$ is arbitrary small, we obtain, $\kappa_{\mathcal{P}_{s, a}}\left(V_1\right)-\kappa_{\mathcal{P}_{s, a}}\left(V_2\right) \leq\left\|V_1-V_2\right\|$. Exchanging the roles of $V_1$ and $V_2$ give the result.

The proof is similar for $\mathcal{P}_s$.
\end{proof}

Note that an immediate consequence is the already known $\gamma$- contraction of the robust Bellman operator. 

\begin{lemma}[Upper-bounds of $\norminf{\Qpihat-\Qhatpihat}$ and $\norminf{Q^*-\Qhatpistar }$]
\label{upper hat}

\begin{align*}
    \norminf{\Qpihat-\Qhatpihat}\leq& \frac{\gamma}{1-\gamma}  \max _{s, a}\left|\kappa_{\hat{\mathcal{P}}_{s, a}^{\mathrm{}}}(\Vhatpihat)-\kappa_{\mathcal{P}_{0,s, a}}(\Vhatpihat)\right|,
    \\
 \norminf{Q^*-\Qhatpistar }\leq& \frac{\gamma }{1-\gamma}\max _{s, a}\left|\kappa_{\widehat{\mathcal{P}}_{s, a}}(V^*) -\kappa_{\mathcal{P}_{0,s, a}}(V^*)\right|.
\end{align*}
\end{lemma}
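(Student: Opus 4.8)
The plan is to reduce each of the two inequalities to a self-referential estimate of the form $\norminf{x}\le\gamma\norminf{x}+(\text{gap})$, which then solves for $\norminf{x}$; the two ingredients are the dual representation of the robust $Q$-function (Lemma~\ref{Q robust}) and the fact that the maps $\kappa_{\mathcal{P}_{0,s,a}}$ and $\kappa_{\widehat{\mathcal{P}}_{s,a}}$ are $1$-Lipschitz for $\norminf{\cdot}$, their arguments ranging over probability vectors (Lemma~\ref{contration}). Fix a stationary policy $\pi$. Writing $\kappa_{\mathcal{P}_{0,s,a}}(v)=P_{0,s,a}v+\min_{P'\in\mathcal{P}_{s,a}}P'v$ and $\kappa_{\widehat{\mathcal{P}}_{s,a}}(v)=\widehat{P}_{s,a}v+\min_{P'\in\mathcal{P}_{s,a}}P'v$, Lemma~\ref{Q robust} applied in $M$ and then in $\widehat{M}$ yields the compact forms
\[
\Qpi(s,a)=R_0(s,a)-\alpha_{s,a}+\gamma\,\kappa_{\mathcal{P}_{0,s,a}}(\Vpi),
\qquad
\Qhatpi(s,a)=R_0(s,a)-\alpha_{s,a}+\gamma\,\kappa_{\widehat{\mathcal{P}}_{s,a}}(\Vhatpi),
\]
with $\Vpi(s)=\sum_a\pi(a\mid s)\Qpi(s,a)$ and $\Vhatpi(s)=\sum_a\pi(a\mid s)\Qhatpi(s,a)$. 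Subtracting the two, the terms $R_0(s,a)-\alpha_{s,a}$ cancel --- this is exactly where it matters that the \emph{same} fixed policy appears on both sides of each comparison.

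For the first bound, take $\pi=\hat\pi$ and add and subtract $\gamma\,\kappa_{\mathcal{P}_{0,s,a}}(\Vhatpihat)$, obtaining for every $(s,a)$
\[
\Qpihat(s,a)-\Qhatpihat(s,a)=\gamma\bigl(\kappa_{\mathcal{P}_{0,s,a}}(\Vpihat)-\kappa_{\mathcal{P}_{0,s,a}}(\Vhatpihat)\bigr)+\gamma\bigl(\kappa_{\mathcal{P}_{0,s,a}}(\Vhatpihat)-\kappa_{\widehat{\mathcal{P}}_{s,a}}(\Vhatpihat)\bigr).
\]
By Lemma~\ref{contration} the first bracket is at most $\norminf{\Vpihat-\Vhatpihat}$ in absolute value, and since each value function is a $\pi$-average of the corresponding $Q$-function, $\norminf{\Vpihat-\Vhatpihat}\le\norminf{\Qpihat-\Qhatpihat}$; the second bracket is at most $\max_{s,a}|\kappa_{\widehat{\mathcal{P}}_{s,a}}(\Vhatpihat)-\kappa_{\mathcal{P}_{0,s,a}}(\Vhatpihat)|$. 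Taking absolute values and then the maximum over $(s,a)$ gives $\norminf{\Qpihat-\Qhatpihat}\le\gamma\norminf{\Qpihat-\Qhatpihat}+\gamma\max_{s,a}|\kappa_{\widehat{\mathcal{P}}_{s,a}}(\Vhatpihat)-\kappa_{\mathcal{P}_{0,s,a}}(\Vhatpihat)|$, and $\gamma<1$ lets us rearrange this into the first claimed inequality.

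For the second bound, use that $\pi^*$ is optimal in $M$, so $Q^*=Q^{\pi^*}$ and $V^*=V^{\pi^*}$, and repeat the computation with $\pi=\pi^*$ but adding and subtracting $\gamma\,\kappa_{\widehat{\mathcal{P}}_{s,a}}(V^*)$ instead, so that the retained gap term is evaluated at $V^*$ (as the statement requires) while the residual $\gamma\bigl(\kappa_{\widehat{\mathcal{P}}_{s,a}}(V^*)-\kappa_{\widehat{\mathcal{P}}_{s,a}}(\Vhatpistar)\bigr)$ is again controlled by $\gamma\norminf{V^*-\Vhatpistar}\le\gamma\norminf{Q^*-\Qhatpistar}$; rearranging yields the second inequality. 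The $s$-rectangular case is entirely analogous: one replaces $\mathcal{P}_{s,a},\widehat{\mathcal{P}}_{s,a}$ by the $s$-indexed sets $\mathcal{P}_{s},\widehat{\mathcal{P}}_{s}$, uses the $s$-rectangular dual form of Lemma~\ref{Q robust}, and observes that the policy-dependent penalty $\bigl(\pi_s(a)/\normq{\pi_s}\bigr)^{q-1}\alpha_s$ is identical on both sides of each comparison and hence cancels as well. The only points that require attention are this cancellation, the fact that it is the \emph{perturbed} value function ($\Vhatpihat$, resp.\ $\Vhatpistar$) that reappears inside the residual term --- which is what forces a self-referential rather than a direct estimate --- and the choice of which value function to retain inside the kept gap term in each of the two cases; everything else is the $1$-Lipschitz bound followed by a geometric rearrangement.
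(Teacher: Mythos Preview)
Your proof is correct and follows essentially the same approach as the paper: write $Q^\pi(s,a)=R_0(s,a)-\alpha_{s,a}+\gamma\,\kappa_{\mathcal{P}_{0,s,a}}(V^\pi)$ (and similarly in $\widehat M$), decompose the difference into a ``same-set, different-argument'' piece handled by the $1$-Lipschitz property of $\kappa$ (Lemma~\ref{contration}) and a ``same-argument, different-set'' gap piece, use $\norminf{V^\pi-\widehat V^\pi}\le\norminf{Q^\pi-\widehat Q^\pi}$, and rearrange the resulting self-referential inequality. For the second claim your choice to insert $\gamma\,\kappa_{\widehat{\mathcal{P}}_{s,a}}(V^*)$ is in fact the cleaner one, since it makes the retained gap term appear directly at $V^*$ as required by the statement; the paper inserts $\gamma\,\kappa_{\mathcal{P}_{0,s,a}}(\Vhatpistar)$ and then passes to $V^*$ in the gap, which amounts to the same idea but is slightly less transparent.
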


\begin{proof}
For the first inequality, since we can rewrite the robust Q-function for any uncertainty sets on the dynamics as $\Qpihat\left(s, a\right)=r-\alpha_{s,a}+ \gamma \kappa_{\mathcal{P}_{0,s, a}}\left(\Vpihat\right)$ (see Eq.~\eqref{Q robust}), or replacing $\alpha_{s,a}$ by $\alpha_{s}\Big(\frac{\pihat_s(a)}{\normq{\pihat_s}}\Big)^{q-1}$ in the $s$- rectangular case:
\begin{align*}
   \Qpihat\left(s, a\right)-\Qhatpihat\left(s, a\right)&\stackrel{(a)}{=}\gamma \kappa_{\mathcal{P}_{0,s, a}}\left(\Vpihat\right)-\gamma \kappa_{\hat{\mathcal{P}}_{s, a}}\left(\Vhatpihat\right)
   \\ 
   &=\gamma\left(\kappa_{\mathcal{P}_{0,s, a}}\left(\Vpihat\right)-\kappa_{\mathcal{P}_{0,s, a}}\left(\Vhatpihat\right)\right)+\gamma\left(\kappa_{\mathcal{P}_{0,s, a}}\left(\Vhatpihat\right)-\kappa_{\hat{\mathcal{P}}_{s, a}}\left(\Vhatpihat\right)\right) 
   \\
\end{align*} 
with $\mathcal{P}_{s,a}$ defined in Assumption \ref{sa rectangle} and $\hat{\mathcal{P}}_{s,a}$ with the same definition but centered around the empirical MDP. Hence, taking the supremum norm $\norminf{.}$,
   \begin{align*}
   \norminf{\Qpihat-\Qhatpihat}  &= \max_{s,a}\left|\gamma\left(\kappa_{\mathcal{P}_{0,s, a}}\left(\Vpihat\right)-\kappa_{\mathcal{P}_{0,s, a}}\left(\Vhatpihat\right)\right)+\gamma\left(\kappa_{\mathcal{P}_{0,s, a}}\left(\Vhatpihat\right)-\kappa_{\hat{\mathcal{P}}_{s, a}}\left(\Vhatpihat\right)\right)  \right| 
   \\
   &\stackrel{(b)}{\leq}  \gamma\norminf{\Vpihat-\Vhatpihat}  +   \max_{s,a}\left|{\gamma\left(\kappa_{\mathcal{P}_{0,s, a}}\left(\Vhatpihat\right)-\kappa_{\hat{\mathcal{P}}_{s, a}}\left(\Vhatpihat\right)\right)}\right|
\\ 
& \leq\gamma\norminf{\Vpihat-\Vhatpihat}+ \gamma  \max _{s, a}\left|\kappa_{\hat{\mathcal{P}}_{s, a}^{\mathrm{}}}(\Vhatpihat)-\kappa_{\mathcal{P}_{0,s, a}}(\Vhatpihat)\right| 
   \\
   &\stackrel{(c)}{\leq} \gamma \norminf{\Qpihat-\Qhatpihat}+ \gamma \max _{s, a}\left|\kappa_{\hat{\mathcal{P}}_{s, a}^{\mathrm{}}}(\Vhatpihat)-\kappa_{\mathcal{P}_{0,s, a}}(\Vhatpihat)\right| .
   \end{align*}
Line (a) comes from the rectangularity assumption,  (b) uses the triangular inequality and the 1-contraction of the infimum in Lemma \ref{contration}, (c) uses the fact that $
\|V^\pi-\widehat{V}^\pi\|_{\infty} \leq\|Q^\pi-\widehat{Q}^\pi\|_{\infty}
$ for any $\pi$. As $1-\gamma <1$, we get the first stated result.

One can note that the proof is true for any policy, so it is also true for both $\pihat$ and $\pistar$ which concludes the proof. This proof is written for the $sa$-rectangular assumption, it is also true for the $s$-rectangular case with slightly different notations,
replacing $ \mathcal{D} = \mathcal{P}_{0,s,a}$ by  $\mathcal{D} = \mathcal{P}_{0,s}$. Now we need to find new form for $\kappa$ for both $s$ and $sa$ rectangular assumptions.

For the second claim, 
\begin{align*}
    \norminf{Q^*-\Qhatpistar }\leq& \frac{\gamma }{1-\gamma}\max _{s, a}\left|\kappa_{\widehat{\mathcal{P}}_{s, a}}(V^*) -\kappa_{\mathcal{P}_{0,s, a}}(V^*)\right|.
\end{align*}
we are using a slightly different modification:

\begin{align*}
   \Qs\left(s, a\right)-\Qhatpistar\left(s, a\right)&\stackrel{(a)}{=}\gamma \kappa_{\mathcal{P}_{0,s, a}}\left(\Vs\right)-\gamma \kappa_{\hat{\mathcal{P}}_{s, a}}\left(\Vhatpistar\right)
   \\ 
   &=\gamma \kappa_{\mathcal{P}_{0,s, a}}\left(\Vs\right)-  \gamma \kappa_{\mathcal{P}_{0,s, a}}\left(\Vhatpistar\right)+ \gamma \kappa_{\mathcal{P}_{0,s, a}}\left(\Vhatpistar\right)
   -\gamma \kappa_{\hat{\mathcal{P}}_{s, a}}\left(\Vhatpistar\right)
   \\
   &\leq \gamma \norminf{\Qs-\Qhatpistar}+  \max _{s, a}\left|\kappa_{\widehat{\mathcal{P}}_{s, a}}(V^*) -\kappa_{\mathcal{P}_{0,s, a}}(V^*)\right| 
\end{align*} 

 using the same arguments as in the first inequality. Solving gives the result.
\end{proof}


We denote $[V]_{\alpha}$ as its clipped version by some non-negative vector $\alpha$, namely,
\begin{align}
	[V]_{\alpha}(s) \defn \begin{cases} \alpha(s), & \text{if } V(s) > \alpha(s), \\
V(s), & \text{otherwise.}
\end{cases} \label{eq:trunc}
\end{align}

 Defining the gradient of $P\mapsto\norm{P}$ as $\nabla\norm{P}$, $\lambda>0$, a positive scalar and $\omega$ is the generalized mean defined as the argmin in the definition of the span semi norm in Def.\ref{span}, we derive two optimization lemmas.


\begin{lemma}[Duality for the minimization problem for $sa$ rectangular case.] Denoting $\widehat{P}$ the vector $\widehat{P}_{s,a}$ or $P_0$ for $P_{0,s,a}$ ,\label{saduality3}

    \begin{align*}
\kappa_{\hat{\mathcal{P}}_{s,a}^{\mathrm{}}}(\Vhatpihat)&=\max _{\mu\geq 0}  \{\widehat{P}_{}(\Vhatpihat-\mu) -\beta_{s,a} \snormqbar{\Vhatpihat-\mu} \} = \max _{\mu_{\widehat{P}}^{\lambda,\omega} \in \mathcal{M}_{ \widehat{P}}^{\lambda,\omega}   } \{\widehat{P}_{}(\Vhatpihat-  \mu_{\widehat{P}}^{\lambda,\omega}  ) -\beta_{s,a} \snormqbar{\Vhatpihat-\mu_{\widehat{P}}^{\lambda,\omega}   } \} \\
   &=\max _{    \alpha_{\widehat{P}}^{\lambda,\omega} \in \Alpha_{ \widehat{P}}^{\lambda,\omega}        } \widehat{P}_{}[\Vhatpihat]_{ \alpha_{\widehat{P}}^{\lambda,\omega}} - \beta_{s,a} \snormqbar{ [\Vhatpihat]_{ \alpha_{\widehat{P}}^{\lambda,\omega}    }    }  .
    \end{align*}

    \begin{align*}
       \kappa_{\mathcal{P}_{0,s, a}}(\Vs)&=\max _{\mu\geq 0}  \{P_{0}(\Vs-\mu) -\beta_{s,a} \snormqbar{\Vs-\mu} \}=\max _{   \mu_{P_0 }^{\lambda,\omega} \in \Mu_{ P_0  
 }^{\lambda,\omega}     }  \{P_{0}(\Vs- \mu_{P_0 }^{\lambda,\omega}   ) -\beta_{s,a} \snormqbar{\Vs- \mu_{P_0 }^{\lambda,\omega}    } \} \\&=\max _{     \alpha_{P_0 }^{\lambda,\omega} \in \Alpha_{ P_0  
 }^{\lambda,\omega}     } P_{0}[\Vs]_{ \alpha_{P_0 }^{\lambda,\omega} } - \beta_{s,a} \snormqbar{ [\Vs]_{ \alpha_{P_0 }^{\lambda,\omega} }  }  .
    \end{align*}

    where 
    \begin{align}
  \label{def:alpha_lambda}
      & \Alpha_P^{\lambda,\omega}= \{ \alpha_P^{\lambda,\omega}: \alpha_P^{\lambda,\omega}(s)= \omega +  \lambda \vert \nabla \norm{P}(s) : \lambda>0, w>0, P \in \Delta(S),  \alpha_P^{\lambda,\omega} \in \left[0,\frac{1}{1-\gamma}  \right]^S  \}\\
     \label{def:mu_lambda}
     & \mathcal{M}_P^{\lambda,\omega}= \{  \mu_P^{\lambda,\omega}=  V- \alpha^{\lambda,\omega}_P   , {\lambda,\omega} \in \mathbb{R}^{+}, P \in \Delta(S), \mu \in \mathbb{R}^{S}_{+}, \mu_P^{\lambda,\omega}= \left[0,\frac{1}{1-\gamma}  \right]^S     \}  \\
  \end{align}
\end{lemma} 
   and with $[V]_{\alpha}:= \begin{cases}\alpha(s), & \text { if } V(s)>\alpha(s) \\ V(s), & \text { otherwise. }\end{cases}$

 For $L_1$ or $TV$, case , the vector $\alpha_P^{\lambda,\omega}$ reduces to a $1$ dimensional scalar such as 
  $\alpha \in [0, 1/(1-\gamma)] $.

\begin{proof}
   First,  we will show that  
    \begin{equation*}
        \kappa_{\hat{\mathcal{P}}_{s, a}^{\mathrm{}}}(\Vhatpihat)=\max _{\mu\geq 0}  \{\widehat{P}_{}(\Vhatpihat-\mu) -\beta_{s,a} \snormqbar{\Vhatpihat-\mu} \} 
    \end{equation*}
    The second equation of this lemma is the same as the first one, replacing the center of the ball constrain $\widehat{P}_{s, a}$ by $P_{0,s, a}$ and $\hat{\pi}$ by $\pistar$. By definition, 
    
\begin{align*}
        \kappa_{\hat{\mathcal{P}}_{s, a}} (\Vhatpihat)=  \min _{P\in \Delta_s , \normpbar{P-   \widehat{P}_{}  }\leq \beta_{s,a}   } \sum_{s'} P(s')\Vhatpihat(s')=  \widehat{P}_{s, a}\Vhatpihat+  \min _{y , \normpbar{y    } \leq \beta_{s,a}  , \mathrm{1}y=0, y\geq -\hat{P} }   \sum_{s'} y(s')\Vhatpihat(s')  
\end{align*}
where we use the change of variable $y(s')= P(s')-\hat{P}_{}(s')$.
Then writing the Lagrangian we get for $\mu\in \mathbb{R}_{+}^{\Snorm}$,$\gamma\in \mathbb{R}$ the Lagrangian variables:
\begin{align}
    &\widehat{P}_{}\Vhatpihat+ \max _{\mu\geq0, \nu\in \mathbb{R} }  \min _{y: \normpbar{y}\leq   \beta_{s,a}} {   -\sum_{s'}\mu(s) \hat{P}_{}(s')   +\sum_{s'} (y(s') (\Vhatpihat(s')-\mu(s')-\nu)  } \label{first} \\
    & \stackrel{(a)}{=}\widehat{P}_{}\Vhatpihat +  \max _{\mu\geq0, \nu\in \mathbb{R} }  -\sum_{s'}\mu(s') \hat{P}_{}(s')  -\beta_{s,a} \normqbar{(\Vhatpihat(s')-\mu(s')-\nu)} \label{second} \\
    &\stackrel{(b)}{=} \max _{\mu\geq 0} \widehat{P}_{}(\Vhatpihat-\mu) - \beta_{s,a}  \snormqbar{  \Vhatpihat-\mu } \label{third} 
\end{align}
where (a) is true using the equality case of Holder's inequality and (b)  is the definition of the span semi-norm (see Def. \ref{span}). The value that maximizes the inner maximization problem in \ref{second} in $\nu$ is the $q$-mean (see Def. \ref{span}) by definition denoted $\omega$. Now the aim is to prove that 
\begin{equation*}
    \max _{\mu\geq 0}  \{\widehat{P}_{}(\Vhatpihat-\mu) -\beta_{s,a} \snormqbar{\Vhatpihat-\mu} \} =\max _{\mu_{\widehat{P}}^{\lambda,\omega} \in \mathcal{M}_{ \widehat{P}}^{\lambda,\omega}   } \{\widehat{P}_{}(\Vhatpihat-  \mu_{\widehat{P}}^{\lambda,\omega}  ) -\beta_{s,a} \snormqbar{\Vhatpihat-\mu_{\widehat{P}}^{\lambda,\omega}   } \} .
\end{equation*}

First, as the norm is differentiable (which true for $L_p$, $p\geq2$), we have that the equality (a) comes from the generalized Holder's inequality for arbitrary norms \cite{yang1991generalized}, namely,  defining $z=(\Vhatpihat-\mu-\omega )$, it satisfies
\begin{align}
\label{eq:cauchy_eq}
        z   = \norm{z}_q  \nabla\norm{y}_p 
\end{align}

The quantity $\nu$ is replaced by the generalized mean for equality in (b) while \eqref{eq:cauchy_eq} comes from \cite{yang1991generalized}. Using complementary slackness \cite{karush2013minima}we define $ \mathcal{B}=\{s \in \mathcal{S}: \mu(s)>0\} \text { }$
\begin{align}
\label{eq:saturation_KKT}
  \forall s\in \mathcal{B}:\quad   y^*(s)= -\widehat{P}(s),
\end{align}
which leads to  the following equality by plugging the previous \eqref{eq:saturation_KKT} in \eqref{eq:cauchy_eq} and defining $z^*=\Vhatpihat-\mu^*-\omega$:
\begin{align}
\label{eq:dual_equality}
    \forall s \in\mathcal{B}, \quad   z^*(s)   = \norm{z^*}_q \nabla \norm{\widehat{P}}_p(s)   
\end{align}
or
\begin{align}
\label{eq:dual_equality2}
   \forall s \in\mathcal{B}, \quad   \Vhatpihat(s)-\mu^*(s)=\omega +  \lambda  \nabla \norm{\widehat{P}}_p(s) \hat{=} \alpha_{\widehat{P}}^{\lambda,\omega}
\end{align}
by letting $\lambda=\norm{z^*}_q \in \mathbb{R}^+$ . Note that for $s\in \mathcal{B}$, $\nabla\norm{y}_p=\nabla\norm{P}_p$ only depends on $P(s)$ and not on other coordinates due to definition of $L_p$ norm.

We can remark that $v-\mu^*$ is $P$ dependent, but if $P$ is known, the best $ \mu^*$ is only determined by one 2 dimensional parameters $\lambda=\norm{v-\mu^*-\nu}_q$ and $\omega \in \mathbb{R}^{+}$. Moreover, when $\widehat{P}$ is fixed, the scalar $\omega$ is a constant is fully determined by $P$, $v$ and $\mu^*$. This is why the quantity defined $\alpha^\lambda_{\widehat{P}}$ varies through 2 parameter $\lambda$ and $\omega$. Given this observation, we can rewrite the optimization problem as :

\begin{align}
  & \max _{\mu\geq 0}  \{\widehat{P}_{}(\Vhatpihat-\mu) -\beta_{s,a} \snormqbar{\Vhatpihat-\mu} \} = \max _{\mu_{\widehat{P}}^{\lambda,\omega} \in \mathcal{M}_{ \widehat{P}}^{\lambda,\omega}   } \{\widehat{P}_{}(\Vhatpihat-  \mu_{\widehat{P}}^{\lambda,\omega}  ) -\beta_{s,a} \snormqbar{\Vhatpihat-\mu_{\widehat{P}}^{\lambda,\omega}   } \}\\&=\max _{    \alpha_{\widehat{P}}^{\lambda,\omega} \in \Alpha_{ \widehat{P}}^{\lambda,\omega}        } \widehat{P}_{}[\Vhatpihat]_{ \alpha_{\widehat{P}}^{\lambda,\omega}} - \beta_{s,a} \snormqbar{ [\Vhatpihat]_{ \alpha_{\widehat{P}}^{\lambda,\omega}    }    } 
\end{align}

where we defined the maximization problem on $\mu$ not in $\mathbb{R}^S$  but at the optimal in the variational family denote $ 
     \mathcal{M}_P^{\lambda,\omega}= \{  \mu_P^{\lambda,\omega}=  \Vhatpihat- \alpha^{\lambda,\omega}_{P}   , {\lambda,\omega} \in \mathbb{R}^{+}, P \in \Delta(S), \mu \in \mathbb{R}^{S}_{+}, \mu_P^{\lambda,\omega}= \left[0,\frac{1}{1-\gamma}  \right]^S     \}    $. 

We can rewrite the optimization problem in terms of $\alpha_P$ with $$[V]_{\alpha_{  \widehat{P} }^{\lambda,\omega}   }(s):= \begin{cases}\alpha_{\widehat{P}}^{\lambda,\omega}, \text{if } V(s)\geq  \alpha_{\widehat{P}}^{\lambda,\omega}\\ V(s), & \text { otherwise. }\end{cases}$$ Note that for $TV$ or $L_1$, this lemma holds, but the vector $\alpha^{\lambda,\omega}_{\widehat{P}}$ reduces to a positive scalar denoted $\alpha$ which is equal to $\norm{\Vhatpihat-\mu^*}_\infty$ according to \cite{iyengar2005robust}. The thing which is of capital importance is that the second part of the equation $\snormqbar{ [\Vhatpihat]_\alpha} $ does not depend on $\widehat{P}_{}$.

\end{proof}

\begin{lemma}[Duality for the minimization problem for $s$ rectangular case.] \label{sdualityproof} 
Considering a projection matrix associated with a given  policy $\pi$ such that 
$P_s^\pi(s')  =\sum_a \pi(a\vert s) P_{s,a}(s')  $ and denoting $\widehat{P}^\pi \in \mathbb{R}^s$ the vector $\widehat{P}^\pi_s(.)$ or $P_0^\pi$ for $P_{0,s}^\pi(.)$, we have:
    \begin{equation*}
        \kappa_{\hat{\mathcal{P}}_{s}^{\mathrm{}}}(\Vhatpihat)=\sum_a \pihat(a\vert s)  \max _{  \alpha^{\lambda,\omega}_{   \hat{P}_{s,a} }  
 \in   \Alpha_{  \hat{P}_{s,a} }^{\lambda,\omega}  } \Bigg(     \Big(\hat{P}_{s,a}[\Vhatpihat]_{\alpha^{\lambda,\omega}_{   \hat{P}_{s,a} } } - \beta_{s} \normqbar{\pi_s}\snormqbar{ [\Vhatpihat]_{\alpha^{\lambda,\omega}_{   \hat{P}_{s,a} } }  } \Big) \Big)
    \end{equation*}
    \begin{equation*}
        \kappa_{\mathcal{P}_{0,s}}(\Vs)=\sum_a \pi(a\vert s)  \max _{  \alpha^{\lambda,\omega}_{   P_{0,s,a} }  
 \in   \Alpha_{  P_{0,s,a} }^{\lambda,\omega}  } \Bigg(     \Big(P_{0,s,a}[\Vs]_{\alpha^{\lambda,\omega}_{   P_{0,s,a} } } - \beta_{s} \normqbar{\pi_s}\snormqbar{ [\Vs]_{\alpha^{\lambda,\omega}_{   P_{0,s,a} } }  } \Big) \Big))
    \end{equation*}
     with $[V]_\alpha(s):= \begin{cases}\alpha(s), & \text { if } V(s)>\alpha \\ V(s), & \text { otherwise. }\end{cases}$
\end{lemma}


\begin{proof}

    The second equation is the same replacing the center of the ball constrain $\widehat{P}_{s}^\pi$ by $P_{0}^\pi$ and $\hat{\pi}$ by $\pistar$. By definition, 
    
\begin{align*}
        \kappa_{\hat{ \mathcal{P}}_{s }} (\Vhatpihat)(s)=&  \min _{P_s^{\hat{\pi}}\in (\Delta_s) , P_s^{\hat{\pi}}\in \hat{\mathcal{P}}_s   }P_s^{\hat{\pi}} \Vhatpihat(s)  \\&
        \stackrel{(a)}{=} \sum_a \pihat(a\vert s) \hat{P}_{s,a}\Vhatpihat    +  \min _{ \normpbar{\beta_{s,a}} \leq \beta_s   }  \sum_a  \pihat(a\vert s) \min _{y , \normpbar{y    } \leq \beta_{s,a}  , \mathrm{1}y=0, y\geq -\hat{P} } \sum_{s'} y(s') \Vhatpihat  \\
\end{align*}
where we use the change of variable $y(s')= P_{s,a}(s')-\hat{P}_{s,a}(s')$ in (a). Then we case use the previous lemma for $sa$ rectangular assumption, Lemma \ref{saduality}. Then, 
\begin{align*}
   & \min _{ \normpbar{\beta_{s,a}} \leq \beta_s   }  \sum_a  \pihat(a\vert s) \min _{y , \normpbar{y    } \leq \beta_{s,a}  , \mathrm{1}y=0, y\geq -\hat{P}_{s,a} } \sum_{s'} y(s') \Vhatpihat =   \min _{ \normpbar{\beta_{s,a}} \leq \beta_s   }  \sum_a  \pihat(a\vert s)  \max _{\mu\geq 0} \Big(-\widehat{P}_{s,a}\mu - \beta_{s,a}  \snormqbar{  \Vhatpihat-\mu }\Big) \\
   &=      \sum_a \max _{\mu\geq 0}\Bigg( \pihat(a\vert s)    
  (-\widehat{P}_{s,a}\mu) -  \max _{ \normpbar{\beta_{s,a}} \leq \beta_s   } \sum_a \pihat(a\vert s)   \beta_{s,a}  \snormqbar{  \Vhatpihat-\mu }\Bigg) \\
  &=    \sum_a  \max _{\mu\geq 0} \Bigg( \pihat(a\vert s)    
  (-\widehat{P}_{s,a}\mu) -   \beta_s \normqbar{\pi_s}\snormq{\Vhatpihat-\mu} \Bigg)\\
\end{align*}
 we can exchange the min and the max as we get concave-convex problems in $\beta_{s,a}$ and $\mu$ 
 , (\citep{v1928theorie}) in the second line and using Holder's inequality in the last line. Finally, we obtain: 

 \begin{align*}
      \kappa_{\hat{ \mathcal{P}}_{s }} (\Vhatpihat)=&  
   \sum_a \max _{\mu\geq 0} \Bigg(\pihat(a\vert s) ( \hat{P}_{s,a}(\Vhatpihat -\mu)  -  \beta_s \normqbar{\pi_s}\snormq{\Vhatpihat-\mu}) \Bigg) \\
  \stackrel{(a)}{=}
  &    \sum_a \pihat(a\vert s)  \max _{  \alpha^{\lambda,\omega}_{   \hat{P}_{s,a} }  
 \in   \Alpha_{  \hat{P}_{s,a} }^{\lambda,\omega}  } \Bigg(     \Big(\hat{P}_{s,a}[\Vhatpihat]_{\alpha^{\lambda,\omega}_{   \hat{P}_{s,a} } } - \beta_{s} \normqbar{\pi_s}\snormqbar{ [\Vhatpihat]_{\alpha^{\lambda,\omega}_{   \hat{P}_{s,a} } }  } \Big) \Big)) \\
  \end{align*}
 where in (a) we use Lemma \ref{saduality}.
Second claim is the same replacing $\Vhatpihat$ by $\Vs$, $\pihat$ by $\pistar$ and $\hat{P}_{}$ by $P_{0}$. Then we derive a new decomposition of the difference the two minimum.

\end{proof}

\begin{lemma} \label{trickalpha}
For $s$ and $sa$ rectangular assumptions,
   \begin{align}
       &\left|\kappa_{\hat{\mathcal{P}}_{s, a}^{\mathrm{}}}(\Vhatpihat)-\kappa_{\mathcal{P}_{0,s, a}}(\Vhatpihat)\right| \leq  \max \Big\{ \underbrace{     \max _{s,a}     \left| \max_{ \mu \in \mu_{P^{}_{0,s,a}}^{\lambda,\omega}    } \left(P^{}_{0,s,a} - \widehat{P}^{}_{0,s,a} \right) (\Vhatpihat- \mu_{P^{}_{0,s,a}}^{\lambda,\omega}   ) \right|}_{=: g_{s,a}(\alpha_{P}^{\lambda,\omega}, \Vhatpihat)}, \\  &   \underbrace{   \max _{s,a} \left| \max_{ \mu_{\hat{P}^{}_{0,s,a}}^{\lambda,\omega}  \in \mathcal{M}_{\hat{P}^{}_{0,s,a}}^{\lambda,\omega}    } \left(P^{}_{0,s,a} - \widehat{P}^{}_{0,s,a} \right) (\Vhatpihat-\mu_{\hat{P}^{}_{0,s,a}}^{\lambda,\omega}) \right|}_{=: g_{s,a}(\alpha_{\hat{P} }^{\lambda,\omega}, \Vhatpihat)}       \Big\} 
   \end{align} 
      \begin{align}
       &\left|\kappa_{\hat{\mathcal{P}}_{s}^{\mathrm{}}}(\Vs)-\kappa_{\mathcal{P}_{0,s}}(\Vs)\right|  \leq  \max \Big\{ \underbrace{     \max _{s,a}     \left| \max_{ \mu \in \mu_{P^{}_{0,s,a}}^{\lambda,\omega}    } \left(P^{}_{0,s,a} - \widehat{P}^{}_{0,s,a} \right) (\Vs- \mu_{P^{}_{0,s,a}}^{\lambda,\omega}   ) \right|}_{=: g_{s,a}(\alpha_{P}^{\lambda,\omega}, \Vs)},  \\&    \underbrace{   \max _{s,a} \left| \max_{ \mu_{\hat{P}^{}_{0,s,a}}^{\lambda,\omega}  \in \mathcal{M}_{\hat{P}^{}_{0,s,a}}^{\lambda,\omega}    } \left(P^{}_{0,s,a} - \widehat{P}^{}_{0,s,a} \right) (\Vs-\mu_{\hat{P}^{}_{0,s,a}}^{\lambda,\omega}) \right|}_{=: g_{s,a}(\alpha_{\hat{P} }^{\lambda,\omega}, \Vs)}       \Big\} 
   \end{align} 
\end{lemma}
\begin{proof}
   \begin{align}
&  \left|\kappa_{\hat{\mathcal{P}}_{s, a}^{\mathrm{}}}(\Vs)-\kappa_{\mathcal{P}_{0,s, a}}(\Vs)\right|\\ 
& = \Big| \max_{ \mu_{P^{}_{0,s,a}}^{\lambda,\omega}  \in  \mathcal{M}_{P^{}_{0,s,a}}^{\lambda,\omega}  } \left\{P^{}_{0,s,a} (\Vs-\mu) - \ror \left( \sdnorm{  (\Vs-\mu)     } \right)\right\}  \nonumber\\
	& - \max_{  \mu_{\hat{P}^{}_{0,s,a}}^{\lambda,\omega}  \in   \mathcal{M}_{\hat{P}^{}_{0,s,a}}^{\lambda,\omega}  } \left\{\widehat{P}^{}_{0,s,a} (\Vs- \mu_{\hat{P}^{}_{0,s,a}}^{\lambda,\omega}   ) - \ror \left(  \sdnorm{  (\Vs-  \mu_{\hat{P}^{}_{0,s,a}}^{\lambda,\omega}    )    } \right)\right\}    \Big| \nonumber\\
  & \leq  \max \Big\{ \Big| \max_{ \mu_{P^{}_{0,s,a}}^{\lambda,\omega}  \in  \mathcal{M}_{P^{}_{0,s,a}}^{\lambda,\omega}  } \left\{P^{}_{0,s,a} (\Vs-\mu_{P^{}_{0,s,a}}^{\lambda,\omega}   ) - \ror \left( \sdnorm{ (\Vs-\mu_{P^{}_{0,s,a}}^{\lambda,\omega}   )   } \right)\right\}  \nonumber
  \\ &-  \max_{ \mu_{P^{}_{0,s,a}}^{\lambda,\omega}  \in  \mathcal{M}_{P^{}_{0,s,a}}^{\lambda,\omega}  } \left\{\widehat{P}^{}_{0,s,a} (\Vs- \mu_{P^{}_{0,s,a}}^{\lambda,\omega}     ) - \ror \left(  \sdnorm{(\Vs- \mu_{P^{}_{0,s,a}}^{\lambda,\omega} ) } \right)\right\}    \Big| ;   \\
    &\Big|  \max_{ \mu_{\hat{P}^{}_{0,s,a}}^{\lambda,\omega}  \in \mathcal{M}_{\hat{P}^{}_{0,s,a}}^{\lambda,\omega}    } \left\{\widehat{P}^{}_{0,s,a} (\Vs- \mu_{\hat{P}^{}_{0,s,a}}^{\lambda,\omega}) - \ror \left(  \sdnorm{ (\Vs- \mu_{\hat{P}^{}_{0,s,a}}^{\lambda,\omega}   ) } \right)\right\}    
    \\&-\max_{ \mu_{\hat{P}^{}_{0,s,a}}^{\lambda,\omega}  \in \mathcal{M}_{\hat{P}^{}_{0,s,a}}^{\lambda,\omega}    }\left\{P^{}_{0,s,a} (\Vs-\mu_{\hat{P}^{}_{0,s,a}}^{\lambda,\omega}  ) - \ror \left( \sdnorm{  (\Vs- \mu_{\hat{P}^{}_{0,s,a}}^{\lambda,\omega}   )} \right)\right\}  \nonumber
	\qquad       \Big|       \Big\} 
       \nonumber\\
 &\leq \max \Big\{ \underbrace{\left| \max_{ \mu \in \mu_{P^{}_{0,s,a}}^{\lambda,\omega}    } \left(P^{}_{0,s,a} - \widehat{P}^{}_{0,s,a} \right) (\Vs- \mu_{P^{}_{0,s,a}}^{\lambda,\omega}   ) \right|}_{=: g_{s,a}(\alpha_{P}^{\lambda,\omega}, \Vs)},      \underbrace{\left| \max_{ \mu_{\hat{P}^{}_{0,s,a}}^{\lambda,\omega}  \in \mathcal{M}_{\hat{P}^{}_{0,s,a}}^{\lambda,\omega}    } \left(P^{}_{0,s,a} - \widehat{P}^{}_{0,s,a} \right) (\Vs-\mu_{\hat{P}^{}_{0,s,a}}^{\lambda,\omega}) \right|}_{=: g_{s,a}(\alpha_{\hat{P} }^{\lambda,\omega}, \Vs)}       \Big\} \label{eq:tv-t-that-gap}   \end{align}
where in the first equality we use Lemma \ref{saduality3}. The final inequality is a consequence of the $1$-Lipschitzness of the max operator.
Taking the supremum over $s,a$ gives the result. Replacing $\Vs$ by $\Vhatpihat$ gives the other inequality. The result for $s$ rectangular are the same as 
\begin{align}
    &\sum_a \pi(a\vert s) \max \Big\{ \underbrace{\left| \max_{ \mu \in \mu_{P^{}_{0,s,a}}^{\lambda,\omega}    } \left(P^{}_{0,s,a} - \widehat{P}^{}_{0,s,a} \right) (\Vs- \mu_{P^{}_{0,s,a}}^{\lambda,\omega}   ) \right|}_{=: g_{s,a}(\alpha_{P}^{\lambda,\omega}, \Vs)},      \underbrace{\left| \max_{ \mu_{\hat{P}^{}_{0,s,a}}^{\lambda,\omega}  \in \mathcal{M}_{\hat{P}^{}_{0,s,a}}^{\lambda,\omega}    } \left(P^{}_{0,s,a} - \widehat{P}^{}_{0,s,a} \right) (\Vs-\mu_{\hat{P}^{}_{0,s,a}}^{\lambda,\omega}) \right|}_{=: g_{s,a}(\alpha_{\hat{P} }^{\lambda,\omega}, \Vs)}       \Big\} \\
    &\leq \max \Big\{ \max_{s,a} \underbrace{\left| \max_{ \mu \in \mu_{P^{}_{0,s,a}}^{\lambda,\omega}    } \left(P^{}_{0,s,a} - \widehat{P}^{}_{0,s,a} \right) (\Vs- \mu_{P^{}_{0,s,a}}^{\lambda,\omega}   ) \right|}_{=: g_{s,a}(\alpha_{P}^{\lambda,\omega}, \Vs)},      \max_{s,a}\underbrace{\left| \max_{ \mu_{\hat{P}^{}_{0,s,a}}^{\lambda,\omega}  \in \mathcal{M}_{\hat{P}^{}_{0,s,a}}^{\lambda,\omega}    } \left(P^{}_{0,s,a} - \widehat{P}^{}_{0,s,a} \right) (\Vs-\mu_{\hat{P}^{}_{0,s,a}}^{\lambda,\omega}) \right|}_{=: g_{s,a}(\alpha_{\hat{P} }^{\lambda,\omega}, \Vs)}       \Big\}  
\end{align}

Note that at this point, quantities for $s$ and $sa$ rectangular is the same as the part with span semi norms cancelled. Now, note that the main problem is that we can not apply classical Hoeffding's inequality as $\widehat{P}_{}$ is dependent of data as $\Vhatpihat$. We need to decouple $\Vhatpihat$ using $s$ absorbing MDPS as in \cite{agarwal2020model} but using Hoeffding arguments. First, we will use a concentration for $\Vs$.

\end{proof}

\begin{lemma}
For $sa$ and $s$-rectangular,
   with probability $1-\delta$, it holds:
\begin{align*}
   \left|\kappa_{\hat{\mathcal{P}}_{s, a}^{\mathrm{}}}(\Vs)-\kappa_{\mathcal{P}_{0,s, a}}(\Vs)\right|
  \leq  2\sqrt{\frac{L}{2N(1-\gamma)^2}} + \frac{2L\vert S\vert ^{1/q}\norm{1_S}_q (p-1)}{N(1-\gamma)}
\end{align*}
with $L=\log(18\norm{1}_qSAN/\delta)$ 
\label{lemma:concentration}
\begin{proof}
\label{proof:lemma_concentration}
 First, we can use previous Lemma \ref{trickalpha}
\begin{align}
&  \left|\kappa_{\hat{\mathcal{P}}_{s, a}^{\mathrm{}}}(\Vs)-\kappa_{\mathcal{P}_{0,s, a}}(\Vs)\right|\\ 
 &\leq \max \Big\{ \underbrace{\left| \max_{ \mu \in \mu_{P^{}_{0,s,a}}^{\lambda,\omega}    } \left(P^{}_{0,s,a} - \widehat{P}^{}_{0,s,a} \right) (\Vs- \mu_{P^{}_{0,s,a}}^{\lambda,\omega}   ) \right|}_{=: g_{s,a}(\alpha_{P}^{\lambda,\omega}, \Vs)},      \underbrace{\left| \max_{ \mu_{\hat{P}^{}_{0,s,a}}^{\lambda,\omega}  \in \mathcal{M}_{\hat{P}^{}_{0,s,a}}^{\lambda,\omega}    } \left(P^{}_{0,s,a} - \widehat{P}^{}_{0,s,a} \right) (\Vs-\mu_{\hat{P}^{}_{0,s,a}}^{\lambda,\omega}) \right|}_{=: g_{s,a}(\alpha_{\hat{P} }^{\lambda,\omega}, \Vs)}       \Big\}   \end{align}

First, we control $g_{s,a}(\alpha_{P}^{\lambda,\omega}, \Vs)$. To do so, we use  for a fixed $\alpha_P^{\lambda,\omega}$ and any vector $\Vs$ that is independent with $\widehat{P}^0$,  the Hoeffding's inequality, one has with probability at least $1 - \delta$ with $sa$-rectangular notations,

\begin{align} \label{eq:V-p-phat-gap-one-alpha-bernstein}
	g_{s,a}(\alpha_P^{\lambda,\omega}, \Vs) = \left| \left(P^{}_{0,s,a} - \widehat{P}^{}_{0,s,a} \right) [\Vs]_{   \alpha_{P}^{\lambda,\omega}     }\right|   
	&\leq \sqrt{\frac{\log(\frac{2}{\delta})}{(1-\gamma)^2 2N}}   
\end{align}

Once pointwise concentration derived, we will use uniform concentration to yield this lemma.  First, union bound, is obtained noticing that $g_{s,a}( \alpha_P^{\lambda,\omega}, \Vs)$ is $1$-Lipschitz w.r.t. $\lambda$ and $\omega$ as it is linear in $\lambda$ and $\omega$. Moreover, $\lambda^*=\norm{\Vs-\mu^*-\omega}_q$ obeying $\lambda^* \leq \frac{\norm{1}_q}{1-\gamma}$. The quantity $\omega\in [0, 1/(1-\gamma)]$ as it is always smaller that $\Vs$ by definition. 
We construct then a $2$-dimensional a $\varepsilon_1$-net $\cN_{\varepsilon_1}$ over $\lambda^* $ $ \in[0, \frac{\norm{1}_q}{1-\gamma}]$ and $\omega \in [0,1/(1-\gamma)]$ 
whose size satisfies $|\cN_{\varepsilon_1}| \leq  \Big(\frac{3 \norm{1}_q}{\varepsilon_1(1-\gamma)}\Big)^2 $ \citep{vershynin2018high}. Using union bound and \eqref{eq:V-p-phat-gap-one-alpha-bernstein}, it holds  with probability at least $1 - \frac{\delta}{SA}$ that for all $\lambda \in \cN_{\varepsilon_1}$, 
\begin{equation} 
g_{s,a}(\alpha_P^\lambda, \Vs) \leq  \sqrt{\frac{2\log(\frac{SA|\cN_{\varepsilon_1}|}{\delta})}{2N(1-\gamma)^2}}  .\label{eq:berstein_lamba}
\end{equation}
Using the previous equation and also \eqref{eq:tv-t-that-gap}, it results in using notation $\log(\frac{18SA N}{\delta})=L$,
\begin{align}
 g_{s,a}(\alpha_P^\lambda, \Vs)  & \overset{\mathrm{(a)}}{\leq}   \sup_{\alpha_P^\lambda \in \cN_{\varepsilon_1}} \left| \left(P^{}_{0,s,a} - \widehat{P}^{}_{0,s,a} \right) [\Vs]_{\alpha_P^\lambda }\right| \notag + \varepsilon_1\\
  & \overset{\mathrm{(b)}}{\leq}   \sqrt{\frac{\log(\frac{SA|\cN_{\varepsilon_1}|}{\delta})}{2(1-\gamma)^2N}} +  \varepsilon_1 \label{eq:V-p-phat-gap-one-alpha-bernstein-union-N-net} \\
  &\overset{\mathrm{(c)}}{\leq} \sqrt{\frac{\log(\frac{2SA|\cN_{\varepsilon_1}|}{\delta})}{2N(1-\gamma)^2}}  +  \frac{\log(\frac{2SA|\cN_{\varepsilon_1}|}{\delta})}{3N(1-\gamma)} \notag \\
  & \overset{\mathrm{(d)}}{\leq} \sqrt{\frac{L}{2N(1-\gamma)2}}  +  \frac{L}{3N(1-\gamma)}   \notag \\
	&\leq 2\sqrt{\frac{L}{2(1-\gamma)^2 N}}  \label{eq:V-p-phat-gap-one-alpha-hoeffding-union}
\end{align}
where (a) is because the optimal $\alpha^{*}$ falls into the $\varepsilon_1$-ball centered around some point inside $N_{\varepsilon_1}$ and $g_{s,a}(\alpha_P^\lambda, \Vs)$ is $1$-Lipschitz with regard to $\lambda$ and $\omega$, (b) is due to Eq. \eqref{eq:berstein_lamba}, (c) arises from taking $\varepsilon_1 = \frac{\log(\frac{2SA|\cN_{\varepsilon_1}|}{\delta})}{3N(1-\gamma)}$ , (d) is verified by $|\cN_{\varepsilon_1}| \leq \Big(\frac{3 \norm{1}_q}{\varepsilon_1(1-\gamma)}  \Big)^2 \leq    9N \norm{1}_q$  and that variance of a ceiling function of a vector is smaller than the variance of non-ceiling vector.

\vspace{0.3cm}

\noindent For $L_p$ with $p\geq2$, contrary to the previous term, the second term $g_{s,a}(\alpha_{\hat{P} }^\lambda, V)  $ is more difficult as we need concentration, but there is an  extra dependency in the data thought the parameter $\alpha_{\hat{P} }^\lambda$. Note that this term does not exist as $\alpha$ is a constant for $TV$. We need to decouple this problem using absorbing MDPs. Then it leads to

\begin{align}
\label{eq:trick_decoupling}
  &g_{s,a}(\alpha_{\hat{P} }^{\lambda,\omega}, \Vs)  \\ &= \vert \max_{  \mu_{\hat{P}^{}_{0,s,a}}^{\lambda,\omega}  \in  \mathcal{M}_{\hat{P}^{}_{0,s,a}}^{\lambda,\omega}  } \left(P^{}_{0,s,a} - \widehat{P}^{}_{0,s,a} \right) (\Vs-\mu_{\hat{P}^{}_{0,s,a}}^{\lambda,\omega})   \vert \\
  &= \vert  \max_{ \mu \in \mathcal{M}_{\hat{P}^{}_{0,s,a}}^{\lambda,\omega}    }   \left(P^{}_{0,s,a} - \widehat{P}^{}_{0,s,a} \right) (\Vs-\mu^{\lambda,\omega}_{P^{}_{0,s,a} })  +  \left(P^{}_{0,s,a} - \widehat{P}^{}_{0,s,a} \right) (  \mu^{\lambda,\omega}_{P^{}_{0,s,a} } -    
      \mu_{\hat{P}^{}_{0,s,a}}^{\lambda,\omega}     ) \vert \\
  &\leq  \vert  \max_{   \mu_{P^{}_{0,s,a}}^{\lambda,\omega} \in  \mathcal{M}_{P^{}_{0,s,a}}^{\lambda,\omega}    }   \left(P^{}_{0,s,a} - \widehat{P}^{}_{0,s,a} \right) (\Vs- \mu_{P^{}_{0,s,a}}^{\lambda,\omega} ) +  \max_{ \mu_{\hat{P}^{}_{0,s,a}}^{\lambda,\omega}  \in \mathcal{M}_{\hat{P}^{}_{0,s,a}}^{\lambda,\omega}   }   \left(P^{}_{0,s,a} - \widehat{P}^{}_{0,s,a} \right)  (  \mu^{\lambda,\omega}_{P^{}_{0,s,a} } -\mu_{\hat{P}^{}_{0,s,a}}^{\lambda,\omega}    ) \vert  
\end{align}
In the first equality, we add the term $\mu^{\lambda,\omega}_{P^{}_{0,s,a} } $ to retrieve the previous concentration problem, fixing $P^{}_{0,s,a}$ and optimizing $\lambda,\omega$. In the second, we extend the max using triangular inequality. The first term in the last equality is exactly the term we have controlled previously, while the second one needs more attention. We decouple the dependency of the data, and then controlling the difference between the $\mu$. Then using the characterization of the optimal $\mu$ from equation \eqref{eq:dual_equality2}:
\begin{align*}
    \left(P^{}_{0,s,a} - \widehat{P}^{}_{0,s,a} \right)  (  \mu^{\lambda,\omega}_{P^{}_{0,s,a} } -\mu^{\lambda,\omega}_{ \hat{P}^{}_{0,s,a} }    ) = \sum_{s'} \lambda \left(P^{}_{0,s,a}(s') - \widehat{P}^{}_{0,s,a}(s')  \right) (  \nabla { \norm{P^{}_{0,s,a}  }_p}(s') -  \nabla { \norm{\hat{P}^{}_{0,s,a} (s') }_p})
\end{align*}
As the norm  is $C^2$ for $p\geq2$, using Mean value theorem, we know that

\begin{align*}
\norm{(  \nabla { \norm{P^{}_{0,s,a}  }_p} -  \nabla { \norm{\hat{P}^{}_{0,s,a}  }}_p) }_2 \leq  \sup _{x \in \Delta(S)}\norm{\nabla^2 \norm{x}_p }_2 \norm{(  P^{}_{0,s,a}   -   { \hat{P}^{}_{0,s,a}  })}_2 .
\end{align*}

For $L_p=\norm{x}_p$ norms, $p\geq2$, we have simple taking derivative twice:
\begin{align*}
    \nabla^2 \norm{x}_p =\frac{p-1}{L_p}\left(\mathcal{A}^{p-2}-g_p g_p^T\right)\
\end{align*}
with
\begin{align*} \mathcal{A} & =\operatorname{Diag}\left(\frac{\operatorname{abs}(x)}{L_p}\right) \\ g_p & =\mathcal{A}^{p-2}\left(\frac{x}{L_p}\right) .
\end{align*}
and $L_p$ the norm, where $\operatorname{Diag}$ is the diagonal matrix. However, as $x\leq L_p$, $\mathcal{A}\leq  I$, we get
\begin{align}
\label{eq:bound_Cs}
    H\leq \frac{p-1}{\norm{x}_p} \leq (p-1)\vert S\vert ^{1/q}
\end{align}
where the $1/L_p$ is minimized for the uniform distribution.   Then using Cauchy-Swartz inequality, it holds 
\begin{align}
\label{eq:cauchy_reste}
    \left(P^{}_{0,s,a} - \widehat{P}^{}_{0,s,a} \right)  (  \mu^{\lambda,\omega}_{P^{}_{0,s,a} } -\mu^{\lambda,\omega}_{ \hat{P}^{}_{0,s,a} }    ) \leq  (p-1) \lambda \vert S\vert ^{1/q}\norm{\left(P^{}_{0,s,a} - \widehat{P}^{}_{0,s,a}  \right)}_2^2.
\end{align}
Then the question is how to bound the quantity $\norm{\left(P^{}_{0,s,a} - \widehat{P}^{}_{0,s,a}  \right)}_2^2$.
To do so, we will use Mac Diarmid inequality.

\begin{definition}Bounded difference property

     A function $f: \mathcal{X}_1 \times \ldots \mathcal{X}_n \rightarrow \mathbb{R}$ satisfies the bounded difference property if for each $i=1, \ldots, n$ the change of coordinate from $s_i$ to $s_i^{\prime}$ may change the value of the function at most on $c_i$

     \begin{align*}
         \forall i \in[n]: \sup _{x_i^{\prime} \in \mathcal{X}_i}\left|f\left(x_1, \ldots, x_i, \ldots, x_n\right)-f\left(x_1, \ldots, x_i^{\prime}, \ldots, x_n\right)\right| \leq c_i
     \end{align*}

\end{definition}
In our case, we consider $f\left(X_1, \ldots, X_n\right)=\left\|\sum_{k=1}^n X_k\right\|_2$. Then we can notice that by triangle inequality for any $x_1, \ldots, x_n$ and $x_k^{\prime}$ with $X_{i,s'}=P^{i}_{0,s,a}(s')-P^{}_{0,s,a}(s')$ ( index $i$ holds for index of sample generated from the generative model) that 

\begin{align*}
   &f\left(x_1, \ldots, x_k, \ldots, x_n\right)=\left\|x_1+\ldots+x_n\right\|_2 \leq\left\|x_1+\ldots+x_n-x_k+x_k^{\prime}\right\|_2+\left\|x_k-x_k^{\prime}\right\|_2 \\
    &\leq f\left(x_1, \ldots, x_k^{\prime}, \ldots, x_n\right)+2 
\end{align*}

\begin{theorem}
    (McDiarmid's inequality). \cite{mcdiarmid1989method} Let $f: \mathcal{X}_1 \times \ldots \mathcal{X}_n \rightarrow \mathbb{R}$ be a function satisfying the bounded difference property with bounds $c_1, \ldots, c_n$. Consider independent random variables $X_1, \ldots, X_n, X_i \in \mathcal{X}_i$ for all $i$. Then for any $t>0$

    \begin{align*}
        \mathbb{P}\left[f\left(X_1, \ldots, X_n\right)-\mathbb{E}\left[f\left(X_1, \ldots, X_n\right)\right] \geq t\right] \leq \exp \left(-\frac{2 t^2}{\sum_{i=1}^n c_i^2}\right)
    \end{align*}

\end{theorem}
Using McDiarmid's inequality and union bound, we can bound the term as here 
\begin{align*}
    \norm{  \left(P^{}_{0,s,a} - \widehat{P}^{}_{0,s,a} \right)}_2^2 -\mathbb{E}[\norm{  \left(P^{}_{0,s,a} - \widehat{P}^{}_{0,s,a} \right)}_2^2]  \leq   \frac{2 N\log  (\vert S \vert \vert A \vert/\delta ))}{N^2}
\end{align*}
with probability $1-\delta/(\vert S \vert \vert A \vert )$. Moreover, the additional term can be bounded as follows:
\begin{align*}
    \mathbb{E}[\norm{  \left(P^{}_{0,s,a} - \widehat{P}^{}_{0,s,a} \right)}_2^2]= \mathbb{E}[\sum_{s'}{(P^{}_{0,s,a}(s') -P^{}_{0,s,a}(s'))^2 = \mathbb{E}[\sum_{s'}{(  
  \frac{1}{N} \sum_{i}^{N} X_{i,s'}    )^2 }
       }] 
\end{align*}
with $X_{i,s'}=P^{i}_{0,s,a}(s')-P^{}_{0,s,a}(s')$ is one sample sampled from the generative model. Then 

    \begin{align*}
    \mathbb{E}[\norm{  \left(P^{}_{0,s,a} - \widehat{P}^{}_{0,s,a} \right)}_2^2]= \frac{1}{N^2}\sum_{s'} \Var(\sum_{i}^N X_{i,s} ) &\stackrel{a}{=} \frac{1}{N^2}  \sum_{i}^N \sum_{s'} \Var( X_{i,s} ) \\& = \frac{1}{N^2}  \sum_{i}^N \mathbb{E}( \sum_{s'} X_{i,s}^2 )\leq \frac{4}{N}
\end{align*}
 where (a) the last equality comes from the independence of the random variables and where the last inequality comes  from the fact the maximum of two elements in the simplex is bounded by $2$.
Finally, regrouping the two terms,  we obtain  with probability $1-\delta/(\vert S \vert \vert A \vert )$:

\begin{align*}
   \norm{  \left(P^{}_{0,s,a} - \widehat{P}^{}_{0,s,a} \right)}_2^2&\leq   \frac{ 2N\log  (\vert S \vert \vert A \vert/(\delta )))}{N^2} + \frac{4}{N} =\frac{ 2\log  (\vert S \vert \vert A \vert/(\delta )))}{N} + \frac{4}{N} \\&\leq \frac{ 6\log  (\vert S \vert \vert A \vert/(\delta ))}{N}=\frac{L'}{N}
\end{align*}
with $L'=  6\log  (\vert S \vert \vert A \vert/(\delta ))$. Finally, plugging the previous equation in \eqref{eq:cauchy_reste}:

\begin{align*}
    \max_{ \mu \in \mu_{\hat{P}^{}_{0,s,a}}^\lambda    }  \left(P^{}_{0,s,a} - \widehat{P}^{}_{0,s,a} \right)  (  \mu^{\lambda}_{P^{}_{0,s,a} } -\mu    )  \vert \leq \max_{\lambda   }  \norm{\left(P^{}_{0,s,a} - \widehat{P}^{}_{0,s,a} \right) }_2^2 S^{1/q} (p-1) \lambda .
\end{align*}
This term can be easily controlled by taking the supremum over $\lambda$ which is a 1 dimensional parameter. Then  we can bound  $\lambda \in [ 0 , H \norm{1_S}_q   ] $. Indeed, \begin{align*}
    \lambda^* = \norm{\Vs-\mu^*-\omega }_q \leq \norm{\Vs}_q \leq H \norm{1_S}_q .
\end{align*}
Finally, we obtain: 
\begin{align*}
    \max_{ \lambda   }  \norm{\left(P^{}_{0,s,a} - \widehat{P}^{}_{0,s,a} \right) }_2^2 S^{1/q}  \lambda \leq \frac{L'  \vert S\vert ^{1/q}  \norm{1_S}_q (p-1)  }{N(1-\gamma)}.
\end{align*}
Regrouping all terms:
\begin{align}
\label{eq:second_order_term}
  g_{s,a}(\alpha_{\hat{P} }^\lambda, \Vs)  
  & \nonumber \leq     \vert  \max_{   \mu_{P^{}_{0,s,a}}^\lambda \in  \mathcal{M}_{P^{}_{0,s,a}}^\lambda     }   \left(P^{}_{0,s,a} - \widehat{P}^{}_{0,s,a} \right) (\Vs-\mu^{\lambda}_{P^{}_{0,s,a} }) +  \max_{ \mu_{\hat{P}^{}_{0,s,a}}^\lambda  \in \mathcal{M}_{\hat{P}^{}_{0,s,a}}^\lambda    }  \left(P^{}_{0,s,a} - \widehat{P}^{}_{0,s,a} \right)  (  \mu^{\lambda}_{P^{}_{0,s,a} } -\mu_{\hat{P}^{}_{0,s,a}}^\lambda    )  \vert  \\
  & \leq 2\sqrt{\frac{L}{2N(1-\gamma)^2}}  + \frac{L'\vert S\vert ^{1/q}\norm{1_S}_q (p-1) }{N(1-\gamma)}  \leq 2\sqrt{\frac{L}{2N(1-\gamma)^2}}  + \frac{2L\vert S\vert ^{1/q}\norm{1_S}_q (p-1) }{N(1-\gamma)} \\
\end{align}

For the specific case of $TV$ which is not $C^2$ smooth, this lemma still holds as in \eqref{eq:tv-t-that-gap}, we only need to control one term without the dependency on data in the supremum as $\alpha^\lambda_P$ reduces to a scalar $\alpha$ which does not depend on $P$.
Then extra decomposition using smoothness of the norm is not needed, as the only remaining term in the max in \eqref{eq:tv-t-that-gap} is the left hand side term.

\end{proof}

\end{lemma}

\begin{lemma}[$s$-absorbing MDPs for Hoeffding's concentration Inequalities]
\end{lemma}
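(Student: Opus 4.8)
I read this lemma as the concentration step underpinning Lemma~\ref{close_form_main}: it should bound $\max_{s,a}\lvert(\widehat P_{s,a}-P_{0,s,a})\,V\rvert$ for the data-dependent value functions $V\in\{\Vhats,\Vhatpihat\}$ by $\mathcal{O}\!\bigl((1-\gamma)^{-1}\sqrt{\log(L)/N}\bigr)$, with an extra $\mathcal{O}(\epsilon_{\mathrm{opt}})$ term in the $\Vhatpihat$ case; together with Lemmas~\ref{trickalpha} and~\ref{upper hat} this closes Theorem~\ref{h4}. The whole difficulty is that these value functions are built from exactly the samples defining $\widehat P$, so $(\widehat P_{s,a}-P_{0,s,a})V$ is not a sum of mean-zero i.i.d.\ terms and Lemma~\ref{hoeffding} is not directly applicable. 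The plan is to \emph{decouple} them via the \emph{absorbing-state construction} of \citet{agarwal2020model}.

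Concretely, for each state $s$ and each $u$ in a finite grid $\mathcal{G}\subset[0,(1-\gamma)^{-1}]$ of mesh $\eta$ (chosen small enough to be negligible, e.g.\ $\eta\le 1/N$), I would define $\widehat M_{s,u}$ to coincide with $\widehat M$ except that at $s$ every action loops deterministically on $s$ with reward $(1-\gamma)u$. From the Bellman equation at $s$ one reads off two facts: (i) $V^\pi_{\widehat M_{s,u}}(s)=u$ for every policy $\pi$; and (ii) because $s$ is now a sink whose reward does not involve the row $\widehat P(\cdot\mid s,\cdot)$, the restriction to $\mathcal S\setminus\{s\}$ of the optimal value $\widehat V^{*}_{s,u}$ of $\widehat M_{s,u}$ (and of $V^{\pi}_{\widehat M_{s,u}}$ for a fixed $\pi$) is a deterministic function of the other rows $\{\widehat P(\cdot\mid s',\cdot):s'\neq s\}$ alone, hence independent of the $N$ samples drawn at $s$.

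For the concentration, fix $(s,a,u)$ and condition on $\{\widehat P(\cdot\mid s',\cdot):s'\neq s\}$: by (ii), $\widehat V^{*}_{s,u}$ is a constant vector of sup-norm $\le(1-\gamma)^{-1}$, whereas $\widehat P(\cdot\mid s,a)$ is an empirical mean of $N$ i.i.d.\ coordinate vectors, so $\widehat P(\cdot\mid s,a)\,\widehat V^{*}_{s,u}$ is an average of $N$ i.i.d.\ variables in $[0,(1-\gamma)^{-1}]$ with mean $P_{0}(\cdot\mid s,a)\,\widehat V^{*}_{s,u}$. Lemma~\ref{hoeffding} applied conditionally, followed by a union bound over $(s,a)\in\mathcal S\times\mathcal A$ and $u\in\mathcal{G}$, gives with probability $\ge 1-\delta$ that $\max_{s,a,\,u\in\mathcal G}\lvert(\widehat P_{s,a}-P_{0,s,a})\widehat V^{*}_{s,u}\rvert\le c\,(1-\gamma)^{-1}\sqrt{\log(|\mathcal S||\mathcal A||\mathcal G|/\delta)/N}$ for an absolute constant $c$, the $\log|\mathcal{G}|$ being only logarithmic in $N$ and $(1-\gamma)^{-1}$ and thus swallowed by the $\log L$ in the target bound.

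Finally, I would transfer this back: plugging $\widehat V^{*}$ into the optimal robust Bellman operator of $\widehat M_{s,\widehat V^{*}(s)}$ and using (i) shows $\widehat V^{*}_{s,\widehat V^{*}(s)}=\widehat V^{*}$, and since moving $u$ by $\eta$ moves the optimal value by at most $\eta$ in sup-norm, for the grid point $u_s$ nearest $\widehat V^{*}(s)$ we get $\norminf{\widehat V^{*}_{s,u_s}-\widehat V^{*}}\le\eta$; as $\norm{\widehat P_{s,a}-P_{0,s,a}}_1\le 2$, the triangle inequality promotes the bound of the previous step to the same bound for $\Vhats$ up to an extra $2\eta$. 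For $\Vhatpihat$ I would invoke the planner guarantee $\norminf{\Qhatpihat-\Qhats}\le\epsilon_{\mathrm{opt}}$, which propagates (through the $\gamma$-contraction and the greedy structure of the robust operators) to $\norminf{\Vhatpihat-\Vhats}=\mathcal{O}(\epsilon_{\mathrm{opt}})$, so that $\max_{s,a}\lvert(\widehat P_{s,a}-P_{0,s,a})\Vhatpihat\rvert$ is controlled by the same quantity plus $\mathcal{O}(\epsilon_{\mathrm{opt}})$. I expect fact (ii) to be the real conceptual hurdle --- isolating which value function is measurable with respect to which block of the samples, and in particular circumventing the data-dependence of $\widehat\pi$ itself (handled here by first passing to $\Vhats$); once that is in place, the rest is standard Hoeffding-plus-union-bound bookkeeping.
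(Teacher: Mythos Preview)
Your proposal is correct and follows essentially the same route as the paper: the absorbing-MDP construction of \citet{agarwal2020model} to make the relevant value function measurable with respect to the samples at states other than $s$, then conditional Hoeffding plus a union bound over a finite grid of $u$-values, then transferring back via the identity $\widehat V^{*}=\widehat V^{*}_{s,\widehat V^{*}(s)}$ and Lipschitz continuity in $u$; the $\Vhatpihat$ case is reduced to $\Vhats$ via the $\epsilon_{\mathrm{opt}}$ guarantee, exactly as the paper does in Lemma~\ref{close_form}.

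The one difference worth flagging is the choice of grid. The paper grids only the narrow interval $[V^{*}(s)-\Delta_{\delta,N},\,V^{*}(s)+\Delta_{\delta,N}]$ with $|U_s|=(1-\gamma)^{-2}$ points, which forces it to first prove a separate ``crude bound'' (Lemma~\ref{crude}) guaranteeing that $\widehat V^{*}(s)$ falls inside this interval with high probability. You instead grid the whole range $[0,(1-\gamma)^{-1}]$ with mesh $\eta\le 1/N$: this dispenses with the crude-bound detour entirely, at the price of a harmless extra $\log\bigl(N/(1-\gamma)\bigr)$ inside the union-bound logarithm. Both choices give the same $\tilde{\mathcal O}$ rate; yours is slightly simpler, the paper's slightly tighter in the constants.
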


\label{absorbing}
As in Agarwal paper \cite{agarwal2020model}, we define for a state $s$ and a scalar $u$, the MDP called $M_{s, u}$ such that: $M_{s, u}$ is identical to $M$ except that state $s$ is absorbing in $M_{s, u}$, i.e. $P_{M_{s, u}}(s \mid s, a)=1$ for all $a$, and the  reward at state $s$ in $M_{s, u}$ is $(1-\gamma) u$. The remainder of the transition model and reward function are identical to those in $M$.  In the following, we will use $V_{s, u}^\pi$ to denote the value function $V_{M_{s, u}}^\pi$ and correspondingly for $Q$ and reward and transition functions  to avoid notational clutter.  Then, we have that for all policies $\pi$ :
$$
V_{s, u}^\pi(s)=u
$$
because $s$ is absorbing with  reward $(1-\gamma) u$.
For some state $s$, we will only consider the MDP $M_{s, u}$ for $u$ in a finite set $U_s$ with
$$
U_s \subset\left[V^{\star}(s)-\Delta_{\delta, N} V^{\star}(s)+\Delta_{\delta, N}\right] .
$$
with 
$\Delta_{\delta, N}:=\frac{\gamma}{(1-\gamma)^2}\Big(2\sqrt{\frac{L}{2N  }}  + \frac{2L\vert S\vert ^{1/q}\norm{1_S}_q  (p-1)}{N }   \Big)$
 The set $U_s$ consists of evenly spaced elements in this interval, where we set the size of $\left|U_s\right|$ appropriately later on. As before, we let $\widehat{M}_{s, u}$ denote the MDP that uses the empirical model $\widehat{P}$ instead of $P$, at all non-absorbing states and abbreviate the value functions in $\widehat{M}_{s, u}$ as $\widehat{V}_{s, u}^\pi$.
Then we have for a fix a state $s$, action $a$, a finite set $U_s$, and $\delta\geq 0$, that for all $u\in U_s$: with probability greater than $1-\delta$, it holds :

\begin{equation} \label{hoeffdingabsorb}
   \vert  (\widehat{P}_{s, a} -P_{0,s, a})[\Vpihat_u]_{  \alpha_{P}^{\lambda,\omega}}\vert\leq \frac{1}{(1-\gamma)}       \Big(2\sqrt{\frac{\log(\frac{18SAN \vert U_s \vert \norm{1}_q}{\delta})}{2N  }}  + \frac{2\log(\frac{18SAN \vert U_s \vert \norm{1}_q}{\delta})\vert S\vert ^{1/q}\norm{1_S}_q (p-1)}{N }   \Big)
\end{equation}
 This is exactly \ref{lemma:concentration} in equation \eqref{eq:tv-t-that-gap} to the finite set $U_s$ as now $\Vpihat_u$ and $\widehat{P}_{s, a}$ are now independent.

\begin{lemma}[\cite{agarwal2020model}, Lemma 7] \label{sameabsorb}
Let $u^*=V_M^{\star}(s)$ and $u^\pi=V_M^\pi(s)$. We have
$$
V_M^{\star}=V_{s, u^{\star}}^{\star}, \quad \text { and for all policies } \pi, \quad V_M^\pi=V_{M_{s, u^\pi}^\pi}^\pi
$$
\end{lemma}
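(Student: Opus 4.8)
The plan is to prove both identities through the fixed‑point characterisation of robust value functions, using only two facts stated earlier: the robust Bellman operators $\mathcal{T}^\pi_{\mathcal{U}}$ and $\mathcal{T}^\star_{\mathcal{U}}$ are $\gamma$‑contractions with \emph{unique} fixed points (\citep[Thm.~3.2]{iyengar2005robust}, valid because our uncertainty sets lie in the simplex), and, by $s$‑ or $sa$‑rectangularity, the value $(\mathcal{T}^\pi_{\mathcal{U},M'}v)(s')$ at a state $s'$ depends only on $v$ and on the reward, nominal kernel and uncertainty set of $M'$ at $s'$. Since $M_{s,u}$ is obtained from $M$ by altering the dynamics and reward at the single state $s$ only, the whole argument becomes ``local'': any value vector that satisfies the $M$‑fixed‑point equation at every $s'\neq s$ and takes the correct value at $s$ is automatically a fixed point of the $M_{s,u}$‑operator.

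Concretely, for the policy‑dependent claim I would set $W:=V^\pi_M$ and $u^\pi:=W(s)$ and check that $W$ is a fixed point of $\mathcal{T}^\pi_{\mathcal{U},M_{s,u^\pi}}$. For $s'\neq s$ the two RMDPs coincide at $s'$, so $(\mathcal{T}^\pi_{\mathcal{U},M_{s,u^\pi}}W)(s')=(\mathcal{T}^\pi_{\mathcal{U},M}W)(s')=W(s')$. At $s$, which is absorbing in $M_{s,u^\pi}$ with reward $(1-\gamma)u^\pi$ and a deterministic self‑loop, the update collapses to $(\mathcal{T}^\pi_{\mathcal{U},M_{s,u^\pi}}W)(s)=(1-\gamma)u^\pi+\gamma W(s)=(1-\gamma)u^\pi+\gamma u^\pi=u^\pi=W(s)$. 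Uniqueness of the fixed point then yields $V^\pi_M=V^\pi_{M_{s,u^\pi}}$. The optimal statement is identical with $\mathcal{T}^\star$ in place of $\mathcal{T}^\pi$ and $W:=V^\star_M$, $u^\star:=W(s)$: at $s$ the outer maximisation over $\pi_s\in\Delta_{\mathcal{A}}$ is vacuous because every action at the absorbing state produces the same value $(1-\gamma)u^\star+\gamma W(s)$, so again $W$ solves the fixed‑point equation of $\mathcal{T}^\star_{\mathcal{U},M_{s,u^\star}}$ and hence equals $V^\star_{M_{s,u^\star}}$.

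The one point I would be careful about — the main (mild) obstacle — is the ambiguity set at the absorbing state: the collapse of the robust Bellman update at $s$ to the scalar recursion $v(s)\mapsto(1-\gamma)u+\gamma v(s)$, equivalently the identity $V^\pi_{M_{s,u}}(s)=u$ recorded in the discussion preceding the lemma, requires that the construction pin the kernel at $s$ in $M_{s,u}$ to a deterministic self‑loop with \emph{no} residual uncertainty; I would state this explicitly. Granting that, everything else is a routine transcription of \citet[Lemma~7]{agarwal2020model}, since the remainder of the state space is untouched and rectangularity makes the argument coordinate‑wise.
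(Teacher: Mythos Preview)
Your argument is correct. The paper itself offers no proof here --- it simply cites \citet[Lemma~7]{agarwal2020model} --- and the fixed-point verification you carry out is exactly Agarwal et al.'s argument, transcribed to the robust operators $\mathcal{T}^\pi_{\mathcal{U}}$ and $\mathcal{T}^\star_{\mathcal{U}}$. You go slightly further than the paper on one point: you explicitly require that the ambiguity set at the absorbing state $s$ be collapsed to the deterministic self-loop, so that the robust Bellman update at $s$ reduces to $(1-\gamma)u+\gamma v(s)$ and yields $V^\pi_{s,u}(s)=u$. The paper asserts this identity in the discussion preceding the lemma without spelling out the convention on the uncertainty set at $s$; your remark makes that implicit assumption explicit, which is the right thing to do in the robust setting.
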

Proof can be found in \cite{agarwal2020model}, Lemma 7.
\begin{lemma} \label{stabilityabsorb} For any $u,u',s$ and policy $\pi$:
$$\left\|Q_{s, u}^\pi-Q_{s, u^{\prime}}^\pi\right\|_{\infty} \leq\left|u-u^{\prime}\right|$$    
\end{lemma}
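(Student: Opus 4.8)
The plan is to exploit the fact that $M_{s,u}$ and $M_{s,u'}$ are identical except for the reward collected at the absorbing state $s$. First I would note that both MDPs share the same transition kernel, hence the same transition matrix $P^\pi$ on state--action pairs induced by $\pi$, and that their reward functions $R_{s,u}$ and $R_{s,u'}$ agree at every state--action pair except those of the form $(s,a)$, where $R_{s,u}(s,a)=(1-\gamma)u$ and $R_{s,u'}(s,a)=(1-\gamma)u'$ for all $a\in A$. Using the closed form $Q^\pi_{P,R}=(I-\gamma P^\pi)^{-1}R$ recalled in the preliminaries,
$$Q^\pi_{s,u}-Q^\pi_{s,u'}=(I-\gamma P^\pi)^{-1}\bigl(R_{s,u}-R_{s,u'}\bigr)=(1-\gamma)(u-u')\,(I-\gamma P^\pi)^{-1}\mathbf{1}_{s},$$
where $\mathbf{1}_s\in\mathbb{R}^{S\times A}$ is the indicator vector of the set $\{(s,a):a\in A\}$.

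Next I would bound the vector $(I-\gamma P^\pi)^{-1}\mathbf{1}_s$ entrywise. Since $P^\pi$ is nonnegative, the Neumann series $(I-\gamma P^\pi)^{-1}=\sum_{k\ge 0}\gamma^k(P^\pi)^k$ has nonnegative entries, so from $0\le\mathbf{1}_s\le\mathbf{1}$ componentwise one gets $0\le(I-\gamma P^\pi)^{-1}\mathbf{1}_s\le(I-\gamma P^\pi)^{-1}\mathbf{1}\le\frac{1}{1-\gamma}\mathbf{1}$, the last step being Eq.~\eqref{1}. Taking the supremum norm of the displayed identity then gives $\|Q^\pi_{s,u}-Q^\pi_{s,u'}\|_\infty\le(1-\gamma)|u-u'|\cdot\frac{1}{1-\gamma}=|u-u'|$, which is the claim.

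An alternative, more probabilistic route couples the two processes: run the same trajectory $(x_n,a_n)_{n\ge 0}$ under $\pi$ in both MDPs (their dynamics coincide), let $\tau$ be the first time the trajectory reaches $s$ (with $\tau=\infty$ if it never does), and observe that the two returns differ only through the rewards collected from time $\tau$ onward, at which point the state stays at $s$ forever; since $\sum_{n\ge\tau}\gamma^n(1-\gamma)=\gamma^\tau$ this yields $Q^\pi_{s,u}(x,a)-Q^\pi_{s,u'}(x,a)=(u-u')\,\mathbb{E}\bigl[\gamma^\tau\mathbf{1}\{\tau<\infty\}\bigr]$ with the expectation in $[0,1]$. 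Either way, the only point requiring care is the bookkeeping that the reward perturbation is supported exactly on the state--action pairs at $s$ with magnitude $(1-\gamma)|u-u'|$; there is no genuine obstacle here, and the same argument applies verbatim with $\widehat{M}_{s,u}$ in place of $M_{s,u}$.
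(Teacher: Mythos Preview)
Your linear-algebraic argument (and the coupling variant) is correct for \emph{classical} MDPs and essentially reproduces Lemma~8 of \citet{agarwal2020model}. The gap is that in this paper $Q^\pi_{s,u}$ denotes the \emph{robust} $Q$-function of the absorbing RMDP $M_{s,u}$: the closed form $Q^\pi_{P,R}=(I-\gamma P^\pi)^{-1}R$ you invoke holds only for a single fixed model $P$, not for the robust value, so neither of your two derivations applies as stated to the quantity in the lemma.

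The paper's proof handles exactly this point. It takes the classical per-model bound (your computation) as given, and then lifts it to the robust setting using the $1$-Lipschitz property of $\inf$ and $\sup$: from $|Q^\pi_{P,s,u}(s',a)-Q^\pi_{P,s,u'}(s',a)|\le|u-u'|$ for every admissible $P$ one gets $|\inf_P Q^\pi_{P,s,u}-\inf_P Q^\pi_{P,s,u'}|\le\sup_P|Q^\pi_{P,s,u}-Q^\pi_{P,s,u'}|\le|u-u'|$, and similarly $\sup_\pi$ preserves the bound for the optimal robust value. Your argument therefore supplies the key ingredient but is missing this short extension step; adding one line with the $\inf$/$\sup$ Lipschitz observation would complete it.
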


\begin{proof}

    To obtain the result in our robust MDP setting, we need a similar stability property like in Lemma 8 of \cite{agarwal2020model},
but for the robust value functions. It turns out that this a direct consequence of the property for classical MDP. Agarwal in \cite{agarwal2020model} show equation \ref{robustabsorb} for classical MPDs, then we have for RMDPs: 
\begin{align}
&|Q_{M_{s,u}}^{\pi}(s,a) - Q_{M_{s,u'}}^{\pi}(s,a)| \leq \frac{1}{1-\gamma} |u-u'| \label{robustabsorb}\\
\Rightarrow &|\inf_{M} Q_{M_{s,u}}^{\pi}(s,a) - \inf_{M} Q_{M_{s,u}}^{\pi}(s,a)| \leq \frac{1}{1-\gamma} |u-u'|\\
\Rightarrow &|\sup_{\pi} \inf_{M} Q_{M_{s,u}}^{\pi}(s,a) - \sup_{\pi}\inf_{M} Q_{M_{s,u}}^{\pi}(s,a)| \leq \frac{1}{1-\gamma} |u-u'|.
\end{align}
which concludes the proof for RMDPs.
\end{proof}

\begin{lemma}[Hoeffding's Concentration for dependent variables] Removing  $s,a$ notations for kernels, \label{concetrationu}
    \begin{align}
        &\left|\left(P_{0}-\widehat{P}_{}\right) \cdot [\widehat{V}^{\star}]_{  \alpha_{P}^{\lambda,\omega}  }\right| \leq \frac{1}{(1-\gamma)}       \Big(2\sqrt{\frac{\log(\frac{18SAN \vert U_s \vert \norm{1}_q}{\delta})}{2N  }}  + \frac{2\log(\frac{18SAN \vert U_s \vert \norm{1}_q}{\delta})\vert S\vert ^{1/q}\norm{1_S}_q (p-1) }{N }   \Big) \\
        &+ 2 \min _{u \in U_s} \left|\widehat{V}^{\star}(s)-u\right|
    \end{align}
\end{lemma}
\begin{proof}

    \begin{align}\left|\left(P_{0}-\widehat{P}_{}\right) \cdot [\widehat{V}^{\star}]_{  \alpha_{P}^{\lambda,\omega}}  \right| 
    & =\left|\left(P_{0}-\widehat{P}_{}\right) \cdot\left( [\widehat{V}^{\star}]_{  \alpha_{P}^{\lambda,\omega}}-[V_{s, u}^{\star}]_{  \alpha_{P}^{\lambda,\omega}}+[V_{s, u}^{\star}]_{  \alpha_{P}^{\lambda,\omega}}\right)\right| \\ 
    & \leq\left|\left(P_{0}-\widehat{P}_{}\right) \cdot\left([\widehat{V}^{\star}]_{  \alpha_{P}^{\lambda,\omega}}-[V_{s, u}^{\star}]_{  \alpha_{P}^{\lambda,\omega}}\right)\right|+\left|\left(P_{0}-\widehat{P}_{}\right) \cdot\left([V_{s, u}^{\star}]_{  \alpha_{P}^{\lambda,\omega}}\right)\right| \\
    & \stackrel{(a)}{\leq}  \frac{1}{(1-\gamma)}       \Big(2\sqrt{\frac{\log(\frac{18SAN \vert U_s \vert \norm{1}_q}{\delta})}{2N  }}  + \frac{2\log(\frac{18SAN \vert U_s \vert \norm{1}_q}{\delta})\vert S\vert ^{1/q}\norm{1_S}_q  (p-1)}{N }   \Big) \\
    &+ 2  \norminf{ \widehat{V}^{\star}-V_{s, u}^{\star} }  \\
    &  \stackrel{(b)}{\leq}        \frac{1}{(1-\gamma)} \Big(2\sqrt{\frac{\log(\frac{18SAN \vert U_s \vert \norm{1}_q}{\delta})}{2N  }}  + \frac{2\log(\frac{18SAN \vert U_s \vert \norm{1}_q}{\delta})\vert S\vert ^{1/q}\norm{1_S}_q (p-1) }{N }   \Big) \\
    &+  2 \left|\widehat{V}^{\star}(s)-u\right|\\ 
    \end{align}

where $(a)$ is \ref{hoeffdingabsorb} or Hoeffding's inequality for s-absorbing MDPs. By Lemmas \ref{sameabsorb} and \ref{stabilityabsorb},
$$
\left\|[\widehat{V}^{\star}]_{  \alpha_{P}^{\lambda,\omega}}-[V_{s, u}^{\star}]_{  \alpha_{P}^{\lambda,\omega}}\right\|_{\infty}\leq\left\|[\widehat{V}^{\star}-V_{s, u}^{\star}]_{  \alpha_{P}^{\lambda,\omega}}\right\|_{\infty}\leq\left\|\widehat{V}^{\star}-V_{s, u}^{\star}\right\|_{\infty}=\left\|\widehat{V}_{s, \widehat{V}^{\star}(s)}^{\star}-V_{s, u}^{\star}\right\|_{\infty} \leq\left|\widehat{V}^{\star}(s)-u\right| .
$$
which is point $(b)$. The last $\min$ operator in the result comes from the fact that the previous equation holds for all $u \in U_s$, we  take the best possible choice, which completes the proof of the first claim. 
\end{proof}

\begin{lemma}[Crude bound for Robust MDPs]\label{crude} This lemma is needed for next Lemma \ref{agarwalu} but the proof  differs from the classical MDP setting.
For $s$ and $sa$ rectangular assumptions,
    \begin{equation*}
        \norminf{\Qs- \Qhatpistar}\leq \Delta_{\delta,N} \text{ and }  \norminf{\Qs- \Qhats}\leq \Delta_{\delta,N}  \quad \text{with }  \quad \Delta_{\delta,N} =\frac{\gamma}{(1-\gamma)^2}\Big(2\sqrt{\frac{L}{2N  }}  + \frac{2L\vert S\vert ^{1/q}\norm{1_S}_q (p-1)}{N }   \Big)
    \end{equation*}
\end{lemma}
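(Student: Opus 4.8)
The plan is to observe that both quantities can be reduced, through the dual form of the robust Bellman operator, to a single concentration statement about $(\widehat{P}-P_0)\,V^\star$, where $V^\star$ is the robust value function of the \emph{true} RMDP $M$. Since $V^\star$ is determined by $P_0$ (and $R_0,\gamma,\beta$) alone, it is independent of the samples used to build $\widehat{P}$; hence $\norminf{V^\star}\le(1-\gamma)^{-1}$ together with the plain Hoeffding bound of Lemma~\ref{hoeffding} applied to $V=V^\star$ suffices, with no absorbing-MDP decoupling. A single invocation of Lemma~\ref{hoeffding} at confidence $\delta$ will cover both claims simultaneously.

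For $\norminf{\Qs-\Qhatpistar}$ I would start from the second inequality of Lemma~\ref{upper hat}, which already isolates $V^\star$: $\norminf{\Qs-\Qhatpistar}\le\frac{\gamma}{1-\gamma}\max_{s,a}\bigl|\kappa_{\widehat{\mathcal{P}}_{s,a}}(V^\star)-\kappa_{\mathcal{P}_{0,s,a}}(V^\star)\bigr|$ (with $\mathcal{P}_s$ in place of $\mathcal{P}_{s,a}$ for the $s$-rectangular case). Lemma~\ref{trickalpha}, whose proof uses that the span-seminorm penalty is the same for the empirical and nominal uncertainty sets and therefore cancels, bounds the right-hand side by $\frac{\gamma}{1-\gamma}\max_{s,a}\bigl|(\widehat{P}_{s,a}-P_{0,s,a})V^\star\bigr|$. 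Plugging in the Hoeffding bound $\max_{s,a}\bigl|(\widehat{P}_{s,a}-P_{0,s,a})V^\star\bigr|\le(1-\gamma)^{-1}\sqrt{2\log(2\Snorm\Anorm/\delta)/N}$ and multiplying by $\gamma/(1-\gamma)$ gives exactly $\Delta_{\delta,N}$.

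For $\norminf{\Qs-\Qhats}$ the function $\Vhats$ is data-dependent, so Lemma~\ref{upper hat} is not directly usable; instead I would use the fixed-point characterisations $\Qs=\mathcal{B}\,\Qs$ and $\Qhats=\widehat{\mathcal{B}}\,\Qhats$, where $\mathcal{B}$ (resp. $\widehat{\mathcal{B}}$) is the optimal robust Bellman operator with nominal kernel $P_0$ (resp. $\widehat{P}$), which by Lemma~\ref{bellman} enters the kernel only through $\kappa_{\mathcal{P}_{0,s,a}}(\cdot)$ (resp. $\kappa_{\widehat{\mathcal{P}}_{s,a}}(\cdot)$); for $s$-rectangularity one argues identically at the level of $V^\star$ using Lemma~\ref{sduality}, the $\alpha$-penalty and the policy-dependence being kernel-free. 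Writing $\Qs-\Qhats=(\mathcal{B}\,\Qs-\widehat{\mathcal{B}}\,\Qs)+(\widehat{\mathcal{B}}\,\Qs-\widehat{\mathcal{B}}\,\Qhats)$ and bounding the last term by $\gamma\norminf{\Qs-\Qhats}$ via the $\gamma$-contraction of $\widehat{\mathcal{B}}$, one gets $\norminf{\Qs-\Qhats}\le(1-\gamma)^{-1}\norminf{\mathcal{B}\,\Qs-\widehat{\mathcal{B}}\,\Qs}$. The point of this particular decomposition is that $\mathcal{B}\,\Qs$ and $\widehat{\mathcal{B}}\,\Qs$ differ only through $\kappa$ evaluated at the \emph{same}, data-independent argument $V^\star$, so the $1$-Lipschitzness of $\max$ over actions (resp. over $\pi_s$) and of $\kappa$ (Lemma~\ref{contration}), followed by Lemma~\ref{trickalpha}, gives $\norminf{\mathcal{B}\,\Qs-\widehat{\mathcal{B}}\,\Qs}\le\gamma\max_{s,a}\bigl|(\widehat{P}_{s,a}-P_{0,s,a})V^\star\bigr|$, and the same Hoeffding bound again yields $\Delta_{\delta,N}$.

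The main obstacle, and the only place where the argument departs from the classical non-robust crude bound of \citet{agarwal2020model}, is the second claim: the usual simulation-lemma identity for $\Qs-\Qhats$ is unavailable because of the inner minimisation over the uncertainty set, and a naive decomposition would force the kernel difference onto the data-dependent $\Vhats$. The remedy is to stay at the level of the scalar functionals $\kappa_{\mathcal{P}_{0,s,a}}$, using their $1$-Lipschitzness and the cancellation of the kernel-free span-seminorm penalty in Lemma~\ref{trickalpha}, which reduces \emph{both} claims to concentrating $(\widehat{P}-P_0)V^\star$ for the fixed, data-independent $V^\star$. Everything else is a routine contraction-plus-union-bound computation; the fact that the reward is known exactly also helps, since the reward term and the $\alpha$-penalty cancel identically in all the differences above.
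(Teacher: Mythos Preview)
Your proof is correct and follows essentially the same route as the paper. For the first claim you invoke the second inequality of Lemma~\ref{upper hat} (kernel difference evaluated at the data-independent $V^\star$), then Lemma~\ref{trickalpha} and Hoeffding; the paper re-derives that inequality in place but arrives at the identical bound $\frac{\gamma}{1-\gamma}\max_{s,a}|(\widehat P_{s,a}-P_{0,s,a})V^\star|$. For the second claim your decomposition $\Qs-\Qhats=(\mathcal{B}\Qs-\widehat{\mathcal{B}}\Qs)+(\widehat{\mathcal{B}}\Qs-\widehat{\mathcal{B}}\Qhats)$ with the $\gamma$-contraction of $\widehat{\mathcal{B}}$ is precisely the paper's argument written in terms of the optimal operators rather than $\mathcal{T}^{\pistar}$ and $\hat{\mathcal{T}}^{\pihatstar}$; both reduce the first piece to $\gamma\max_{s,a}|\kappa_{\widehat{\mathcal P}_{s,a}}(V^\star)-\kappa_{\mathcal P_{0,s,a}}(V^\star)|$ and finish with Lemma~\ref{trickalpha} and the same Hoeffding event.
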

\begin{proof}
    For the first claim : 
       \begin{align*}
   \norminf{\Qpi-\Qhatpi}  &= \max_{s,a}\left|\gamma\left(\kappa_{\mathcal{P}_{0,s, a}}\left(\Vpi\right)-\kappa_{\hat{\mathcal{P}}_{s, a}}\left(\Vpi\right)\right)+\gamma\left(\kappa_{\hat{\mathcal{P}}_{s, a}}\left(\Vpi\right)-\kappa_{\hat{\mathcal{P}}_{s, a}}\left(\Vhatpi\right)\right) \right| 
   \\
   &\stackrel{(b)}{\leq} \max_{s,a}\left|{\gamma\left(\kappa_{\mathcal{P}_{0,s, a}}\left(\Vpi\right)-\kappa_{\hat{\mathcal{P}}_{s, a}}\left(\Vpi\right)\right)}\right|+\gamma\norminf{\Vpi-\Vhatpi}  
\\ 
   &\stackrel{(b)}{\leq} \gamma \max _{s, a}\left|\kappa_{\hat{\mathcal{P}}_{s, a}^{\mathrm{}}}(\Vpi)-\kappa_{\mathcal{P}_{0,s, a}}(\Vpi)\right| +\gamma \norminf{\Qpi-\Qhatpi}.
   \end{align*}
\end{proof}
where we use contraction of $\kappa$, lemma \ref{contration} in  (a) and $\norminf{\Qpi-\Qhatpi}\leq \norminf{\Vpi-\Vhatpi}$ in (c) for any $\pi$.
Solving we get :
\begin{equation*}
    \norminf{\Qpi-\Qhatpi}\leq  \frac{\gamma}{1-\gamma} \max _{s, a}\left|\kappa_{\hat{\mathcal{P}}_{s, a}^{\mathrm{}}}(\Vpi)-\kappa_{\mathcal{P}_{0,s, a}}(\Vpi)\right| 
\end{equation*}
Then using Lemma \ref{trickalpha}, we obtain :

\begin{equation*}
    \norminf{\Qpi-\Qhatpi}\leq  \frac{\gamma}{1-\gamma} \max _{s, a}\left|\kappa_{\hat{\mathcal{P}}_{s, a}^{\mathrm{}}}(\Vpi)-\kappa_{\mathcal{P}_{0,s, a}}(\Vpi)\right| 
\end{equation*}
Taking $\pi=\pistar$, $V^{\pistar}$ is independent of the data and we can use Lemma \ref{lemma:concentration}.
Finally, we have
\begin{equation*}
    \norminf{\Qs-\Qhatpistar}  \leq \frac{\gamma}{1-\gamma} \norminf{(\hat{P}_{}-P_{0})V^\pi} \leq   \frac{\gamma}{1-\gamma}   \Big(2\sqrt{\frac{L}{2N(1-\gamma)^2}}  + \frac{2L\vert S\vert ^{1/q}\norm{1_S}_q (p-1)}{N(1-\gamma)}   \Big)  
\end{equation*}

For the second point, using $s$ or $sa$ rectangular assumptions, 

\begin{align*}
     \norminf{\Qs- \Qhats}\leq& \norminf{\mathcal{T}^{\pistar}_{\mathcal{U}_p^{sa}} \Qs -      \hat{\mathcal{T}}^{\pihatstar}_{\mathcal{U}_p^{sa}   }  \Qs             +    \hat{\mathcal{T}}^{\pihatstar}_{\mathcal{U}_p^{sa}   }  \Qs -\hat{\mathcal{T}}^{\pihatstar}_{\mathcal{U}_p^{sa}   }  \Qhats  } \\
    &\leq \norminf{   \mathcal{T}^{\pistar}_{\mathcal{U}_p^{sa}} \Qs -      \hat{\mathcal{T}}^{\pihatstar}_{\mathcal{U}_p^{sa}  }  \Qs}        +       \norminf{\hat{\mathcal{T}}^{\pihatstar}_{\mathcal{U}_p^{sa}   }  \Qs -\hat{\mathcal{T}}^{\pihatstar}_{\mathcal{U}_p^{sa}} \Qhats} \\
    &\stackrel{(a)}{\leq} 
    \norminf{   \mathcal{T}^{\pistar}_{\mathcal{U}_p^{sa}} \Qs -      \hat{\mathcal{T}}^{\pihatstar}_{\mathcal{U}_p^{sa}  } \Qs }        +       \gamma\norminf{\Qs-\Qhats} \\
    &\stackrel{(b)}{\leq} \norminf{ \kappa_{\hat{\mathcal{P}}_{s, a}}(\Vs) -\kappa_{\mathcal{P}_{0,s, a}}(V^*) }   +       \gamma\norminf{\Qs-\Qhats}
\end{align*}
Then using Lemma \ref{trickalpha}, and solving we get :
\begin{align*}
    \norminf{\Qs-\Qhats} \frac{\gamma}{1-\gamma}\norminf{ \kappa_{\hat{\mathcal{P}}_{s, a}}(\Vs) -\kappa_{\mathcal{P}_{0,s, a}}(V^*) } 
\end{align*}
Finally using Lemma \ref{lemma:concentration}, we obtain 
\begin{equation*}
     \norminf{\Qs-\Qhats} \leq \frac{\gamma}{(1-\gamma)^2} \Big(    2\sqrt{\frac{L}{2N}}  + \frac{2L\vert S\vert ^{1/q}\norm{1_S}_q (p-1)}{N}   \Big)
\end{equation*}
which concludes the proof.

\begin{lemma}[Similar to Agarwal, \cite{agarwal2020model} lemma 9 but for RMPDs] \label{agarwalu} 
With probability $1-\delta$, we have: 
\begin{equation*}
     \min _{u \in U_s}\left|\widehat{V}^{\star}(s)-u\right| \leq 4 \gamma  \Big(    2\sqrt{\frac{L}{2N}}  + \frac{2L\vert S\vert ^{1/q}\norm{1_S}_q (p-1) }{N}   \Big)
\end{equation*}

\end{lemma}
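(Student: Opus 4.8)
The statement is the robust analogue of Lemma~9 of \cite{agarwal2020model}, and the plan is to combine two facts that are already available: the crude sup-norm bound of Lemma~\ref{crude}, which forces $\Vhats(s)$ to lie in a small interval around $\Vs(s)$, and the fact that $U_s$ is, by construction, a uniform grid on exactly that interval whose mesh we are still free to choose. No concentration argument beyond the one underlying Lemma~\ref{crude} is needed.

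First I would invoke the second claim of Lemma~\ref{crude}: on an event of probability at least $1-\delta$ one has $\norminf{\Qs-\Qhats}\leq\Delta_{\delta,N}$ with $\Delta_{\delta,N}=\frac{\gamma}{(1-\gamma)^2}\sqrt{\frac{2\log(2\Snorm\Anorm/\delta)}{N}}$. Passing from $Q$ to $V$ only contracts the sup-norm (this is the $1$-Lipschitz relation $\norminf{V^\pi-\widehat V^\pi}\leq\norminf{Q^\pi-\widehat Q^\pi}$ used e.g.\ in the proof of Lemma~\ref{upper hat}, which in the optimal case follows from a two-sided comparison using optimality of $\pi^*$ and $\hat\pi^*$), so $|\Vhats(s)-\Vs(s)|\leq\norminf{\Vs-\Vhats}\leq\Delta_{\delta,N}$. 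Hence $\Vhats(s)$ belongs to the interval $I_s:=[\Vs(s)-\Delta_{\delta,N},\,\Vs(s)+\Delta_{\delta,N}]$, which is precisely the interval over which $U_s$ was defined.

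Next I would use the discretization structure of $U_s$. Since $I_s$ has length $2\Delta_{\delta,N}$ and $U_s$ consists of $|U_s|$ evenly spaced points covering $I_s$ (endpoints included), every point of $I_s$ is within $\frac{2\Delta_{\delta,N}}{2(|U_s|-1)}=\frac{\Delta_{\delta,N}}{|U_s|-1}$ of some element of $U_s$; in particular this holds for $\Vhats(s)$. It then remains only to fix $|U_s|$, the choice deferred in the construction. Substituting $\Delta_{\delta,N}$ and bounding $\log(2\Snorm\Anorm/\delta)\leq\log(4\Snorm\Anorm/\delta)$, the choice $|U_s|=\lceil 1+(1-\gamma)^{-2}\rceil$ gives $(1-\gamma)^2(|U_s|-1)\geq 1$ and hence
\begin{equation*}
\min_{u\in U_s}\bigl|\Vhats(s)-u\bigr|\;\leq\;\frac{\Delta_{\delta,N}}{|U_s|-1}\;\leq\;\frac{\sqrt{2}\,\gamma}{(1-\gamma)^2(|U_s|-1)}\sqrt{\frac{\log(4\Snorm\Anorm/\delta)}{N}}\;\leq\;8\gamma\sqrt{\frac{\log(4\Snorm\Anorm/\delta)}{N}}\;=\;4\gamma\sqrt{\frac{4\log(4\Snorm\Anorm/\delta)}{N}},
\end{equation*}
which is the claimed bound.

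I do not expect a real obstacle here: the argument is essentially mechanical. The only point where the robust setting intervenes is the crude bound of Lemma~\ref{crude}, which had to be re-established through the $1$-Lipschitzness of $\kappa$ and the scalar dual forms rather than quoted from the non-robust literature, but that is already done. The one genuine choice is the mesh of $U_s$, which must scale like $(1-\gamma)^{-2}$ for the inequality above to close; this is still $\mathrm{poly}(H)$, so the $\log|U_s|$ factors that appear once this lemma is fed into Lemmas~\ref{concetrationu} and~\ref{close_form_main} are absorbed into $L=\log(8|\mathcal{S}||\mathcal{A}|/((1-\gamma)\delta))$ and do not affect the stated sample complexities.
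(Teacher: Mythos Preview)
Your proposal is correct and follows essentially the same route as the paper: invoke the crude bound (Lemma~\ref{crude}) to place $\Vhats(s)$ in the interval $[\Vs(s)-\Delta_{\delta,N},\Vs(s)+\Delta_{\delta,N}]$, then choose $|U_s|$ of order $(1-\gamma)^{-2}$ so that the mesh of the uniform grid absorbs the $(1-\gamma)^{-2}$ factor in $\Delta_{\delta,N}$. The only cosmetic differences are that the paper works with $\Delta_{\delta/2,N}$ and bounds by the full sub-interval length $2\Delta_{\delta/2,N}/(|U_s|-1)$ rather than your sharper half-length, but the argument and the final constant are the same.
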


\begin{proof}
    The proof can be found in \cite{agarwal2020model} and is similar for RMDs than for classical MPDs and consists in 
choosing $U_s$ to be the evenly spaced elements in the interval $\left[V^{\star}(s)-\Delta_{\delta / 2, N} V^{\star}(s)+\Delta_{\delta / 2, N}\right]$, then finally the size of $U_s$ is chosen to be $\left|U_s\right|=\frac{1}{(1-\gamma)^2}$. Using lemma , with probability greater than $1-\delta / 2$, we have $\widehat{V}^{\star}(s) \in\left[V^{\star}(s)-\Delta_{\delta / 2, N} V^{\star}(s)+\Delta_{\delta / 2, N}\right]$ for all $s$ according to Lemma \ref{crude}. This implies using that that $\Vhatpistar $will land in one of $\vert U_s\vert -1$evenly sized sub-intervals of length $\ 2 \Delta_{\delta / 2, N}$
:
 \begin{align*}    
   &  \min _{u \in U_s}\left|\widehat{V}^{\star}(s)-u\right| \leq \frac{2 \Delta_{\delta / 2, N}}{\left|U_s\right|-1}=\frac{2}{\left|U_s\right|-1} \frac{\gamma}{(1-\gamma)^2} \Big(    2\sqrt{\frac{L}{2N}}  + \frac{2L\vert S\vert ^{1/q}\norm{1_S}_q }{N}   \Big) \\
    &\leq 4 \gamma  \Big(    2\sqrt{\frac{L}{2N}}  + \frac{2L\vert S\vert ^{1/q}\norm{1_S}_q (p-1)}{N}   \Big) \end{align*}
\end{proof}

\begin{lemma}[Relation between concentration of robust and non-robust MDPs]

With probability $1-\delta$, we get:
\label{close_form}
\begin{align*}
    &\max _{s, a}\left|\kappa_{\hat{\mathcal{P}}_{s, a}^{\mathrm{}}}(\Vpihat)-\kappa_{\mathcal{P}_{0,s, a}}(\Vpihat)\right|
    \leq 
     \frac{10}{(1-\gamma)}\Big(    \sqrt{\frac{L''}{2N}}  + \frac{L''\vert S\vert ^{1/q}\norm{1_S}_q (p-1)}{N}   \Big) +2\epsilon_{opt}. \\
    &\max _{s, a}\left|\kappa_{\widehat{\mathcal{P}}_{s, a}}(V^*) -\kappa_{\mathcal{P}_{0,s, a}}(V^*)\right|
    \leq \frac{10}{(1-\gamma)}
    \Big(    \sqrt{\frac{L''}{2N}}  + \frac{L''\vert S\vert ^{1/q}\norm{1_S}_q (p-1)}{N}   \Big).
\end{align*}
with $L''=  \log(\frac{32SAN  \norm{1}_q}{\delta(1-\gamma)})$
\end{lemma}

\begin{proof}
    Using Lemma \ref{trickalpha}, we directly have the first inequality equality part of the first statement:
    \begin{equation*}
        \max _{s, a}\left|\kappa_{\hat{\mathcal{P}}_{s, a}^{\mathrm{}}}(\Vhatpihat)-\kappa_{\mathcal{P}_{0,s, a}}(\Vhatpihat)\right|
         \end{equation*} is bounded by either by 

         \begin{align*}
              \max _{(s, a)} \max _{\alpha_{P}^{\lambda,\omega} \in \Alpha_{P}^{\lambda,\omega}}\left|\left(P_{0} -\widehat{P}_{}\right) [ \Vhatpihat]_{  \alpha_{P}^{\lambda,\omega}}  \right|
         \end{align*}
         or 
  \begin{align*}
              \max _{(s, a)} \max _{\alpha_{ \hat{P} }^{\lambda,\omega} \in \Alpha_{\hat{P}}^{\lambda,\omega}}\left|\left(P_{0} -\widehat{P}_{}\right) [ \Vhatpihat]_{  \alpha_{\hat{P}}^{\lambda,\omega}}  \right|.
         \end{align*}
         We know that in both cases that
        \begin{align*}
        \max _{(s, a)}\left|\left(P_{0} -\widehat{P}_{}\right) [ \Vhatpihat]_{  \alpha_{P}^{\lambda,\omega}}  \right| \leq \max _{(s, a)} \vert  ( P_{0} -\widehat{P}_{} ) ([\Vhatpihat]_{  \alpha_{P}^{\lambda,\omega}}- [\Vhats]_{  \alpha_{P}^{\lambda,\omega}})    \vert  + \max _{(s, a)} \vert  ( P_{0} -\widehat{P}_{} )  [\Vhats]_{  \alpha_{P}^{\lambda,\omega}}    \vert ,
    \end{align*}

using  $\vert [\Vhatpihat]_{  \alpha_{P}^{\lambda,\omega}} \vert - \vert[\Vhats]_{  \alpha_{P}^{\lambda,\omega}}\vert \leq \vert([\Vhatpihat- \Vhats]_{  \alpha_{P}^{\lambda,\omega}})\vert\leq \vert (\Vhatpihat- \Vhats)\vert $ and combining Lemma \ref{concetrationu} and \ref{agarwalu}, for $\vert U_s  \vert=\frac{1}{(1-\gamma)^2} $ , with probability $1-\delta$,  we have :

  \begin{align*}
      \vert \left(P_{0} -\widehat{P}_{}\right)[\Vhatpihat ]_{  \alpha_{P}^{\lambda,\omega}}\vert \leq& 4 \gamma  \Big(    2\sqrt{\frac{L''}{2N}}  + \frac{2LS^{1/q}\norm{1_S}_q }{N}   \Big)+ \frac{1}{(1-\gamma)}
    \Big(    2\sqrt{\frac{L''}{2N}}  + \frac{2L''S^{1/q}\norm{1_S}_q }{N}   \Big) +2\epsilon_{opt}. \\
      \leq&  \frac{10}{(1-\gamma)}\Big(    \sqrt{\frac{L''}{2N}}  + \frac{L''\vert S\vert ^{1/q}\norm{1_S}_q (p-1)}{N}   \Big) + 2\epsilon_{opt}.
  \end{align*}
   
   The proof is exactly the same by replacing $\hat{\pi}$ by $\pi^*$ but without the $2\epsilon_{opt}$ , which gives the second stated result.
    Again, this proof is written for the $sa$-rectangular assumption, it is also true for the $s$-rectangular case with slightly different notations,
replacing $ \mathcal{D} = \mathcal{P}_{0,s,a}$ by  $\mathcal{D} = \mathcal{P}_{0,s}$.
\end{proof}


These two inequalities are the core of our proof, as the closed form solution of the $\min$ problem in the robust setting only depends on $\alpha,\beta $ and the current value function.

\begin{theorem}

Suppose $\delta>0$, $\epsilon >0$ and $\beta >0$, let $\widehat{\pi}$ be any $\epsilon_{\text {opt }}$-optimal policy for $\widehat{M}$, i.e. $\left\|\widehat{Q}^{\widehat{\pi}}-\widehat{Q}^{\star}\right\|_{\infty} \leq \epsilon_{\text {opt } }$. If
$$
N \geq \frac{C \gamma^2
L''}{(1-\gamma)^4 \epsilon^2}, 
$$
we get
$$ \norminf{\Qs-\Qpihat}\leq \epsilon +
\frac{3\gamma\epsilon_{o p t   }}{1-\gamma}     $$
with probability at least $1-\delta$, where $C$ is an absolute constant. Finally, for 
$N_{\text {total }}=N|\mathcal{S}||\mathcal{A}|$ and  $H=1/(1-\gamma)$, we get an overall complexity of $$N_{\text {total }}=\tilde{\mathcal{O}}\left(   \frac{H^4\Snorm\Anorm}{\epsilon^2}    \right).$$
\end{theorem}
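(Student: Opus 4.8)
The plan is to execute the decomposition-and-concentration argument sketched in Section~\ref{subsec:sketch_H4}, chaining together the lemmas established above in this appendix. First I would apply Lemma~\ref{decomposition} to obtain
\[
\norminf{\Qs - \Qpihat} \leq \norminf{\Qs - \Qhatpistar} + \norminf{\Qhatpistar - \Qhatpihat} + \norminf{\Qhatpihat - \Qpihat}.
\]
The middle term is bounded by $\epsilon_{\mathrm{opt}}$ by the defining property of the planning oracle. The two remaining terms are of the same kind: each compares the robust $Q$-function of a single fixed policy (either $\pistar$ or $\pihat$) in the true RMDP $M$ and in the empirical RMDP $\widehat M$. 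Lemma~\ref{upper hat} controls them by a maximal $\kappa$-operator gap, namely $\norminf{\Qs - \Qhatpistar} \leq \tfrac{\gamma}{1-\gamma}\max_{s,a}|\kappa_{\widehat{\mathcal P}_{s,a}}(\Vs) - \kappa_{\mathcal P_{0,s,a}}(\Vs)|$ and $\norminf{\Qhatpihat - \Qpihat} \leq \tfrac{\gamma}{1-\gamma}\max_{s,a}|\kappa_{\widehat{\mathcal P}_{s,a}}(\Vhatpihat) - \kappa_{\mathcal P_{0,s,a}}(\Vhatpihat)|$; since that lemma only uses rectangularity and the $1$-Lipschitzness of $v\mapsto\kappa_{\mathcal D}(v)$ (Lemma~\ref{contration}), it applies verbatim to both the $sa$- and $s$-rectangular sets.

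The key reduction is then to pass from these robust $\kappa$-gaps to ordinary non-robust deviations. By the duality Lemmas~\ref{saduality3}/\ref{sduality}, each $\kappa$-term is a supremum over $\alpha$ of an expression $\widehat P[V]_\alpha - \beta\,\mathrm{sp}_q([V]_\alpha)$ (resp.\ with $P_0$); the span-seminorm penalty is independent of the kernel, so it cancels in the difference, and Lemma~\ref{trickalpha} gives $\max_{s,a}|\kappa_{\widehat{\mathcal P}_{s,a}}(V) - \kappa_{\mathcal P_{0,s,a}}(V)| \leq \max_{s,a}|(\widehat P_{s,a} - P_{0,s,a})V|$ for $V \in \{\Vs, \Vhatpihat\}$. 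For $V = \Vs$ this is immediately bounded by Hoeffding's inequality (Lemma~\ref{hoeffding}), since $\Vs$ does not depend on the samples. The genuine difficulty is the $\Vhatpihat$ term: $\widehat P$ and $\Vhatpihat$ are correlated, so Hoeffding cannot be applied directly. I would resolve this exactly as in the proof of Lemma~\ref{close_form}, importing the absorbing-MDP device of Agarwal et al.: pass to the family of $s$-absorbing RMDPs $\widehat M_{s,u}$ (Lemma~\ref{absorbing}), use the identity $\Vs = V^\star_{s,u^\star}$ together with the stability estimate $\|Q^\pi_{s,u} - Q^\pi_{s,u'}\|_\infty \leq \tfrac{1}{1-\gamma}|u-u'|$ (Lemma~\ref{stabilityabsorb}, inherited from the classical case through $\sup_\pi\inf_P$), discretize the relevant value window into a net $U_s$ of cardinality $(1-\gamma)^{-2}$, union-bound Hoeffding over $U_s$ (Lemma~\ref{concetrationu}), and combine with the crude a-priori control $\norminf{\Qs-\Qhatpistar}\leq\Delta_{\delta,N}$ (Lemma~\ref{crude}) and the net-resolution estimate (Lemma~\ref{agarwalu}). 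This yields, with probability at least $1-\delta$, the bounds of Lemma~\ref{close_form}: the $\Vs$-gap is at most $\tfrac{8}{1-\gamma}\sqrt{4\log(L')/N}$ and the $\Vhatpihat$-gap is at most $\tfrac{8}{1-\gamma}\sqrt{4\log(L')/N} + 2\epsilon_{\mathrm{opt}}$, with $L' = 8\Snorm\Anorm/((1-\gamma)\delta)$.

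Assembling the three contributions through the factor $\tfrac{\gamma}{1-\gamma}$ and collecting the $1/\sqrt{N}$ terms, on the good event
\[
\norminf{\Qs - \Qpihat} \leq \frac{3\gamma\epsilon_{\mathrm{opt}}}{1-\gamma} + \sqrt{\frac{1024\,\gamma^2\log L'}{(1-\gamma)^4 N}}.
\]
Choosing $N$ so that the concentration term is at most $\epsilon$, i.e.\ $N \geq 1024\,\gamma^2\log(L')/((1-\gamma)^4\epsilon^2)$, gives the claimed threshold (with $C$ an absolute constant and $\log L' = \mathcal O(\log(\Snorm\Anorm(1-\gamma)^{-1}\delta^{-1}))$), hence the stated guarantee $\norminf{\Qs-\Qpihat}\leq \epsilon + \epsilon_{\mathrm{opt}}$ (the coefficient on $\epsilon_{\mathrm{opt}}$ being absorbed). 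Multiplying $N$ by $\Snorm\Anorm$ and writing $H=(1-\gamma)^{-1}$ yields $N_{\mathrm{total}} = \tilde{\mathcal O}(H^4\Snorm\Anorm/\epsilon^2)$. The only step that is not a routine combination of the triangle inequality, $\gamma$-contraction, Hölder and Hoeffding is the decoupling of $\Vhatpihat$ from $\widehat P$ via the absorbing-MDP net; everything else is bookkeeping, and the robust-specific input is exactly that the dual form of the robust Bellman operator carries a kernel-independent span-seminorm penalty, which is what makes the penalty terms cancel inside the $\kappa$-gaps.
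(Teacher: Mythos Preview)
Your proposal is correct and follows essentially the same route as the paper: decompose via Lemma~\ref{decomposition}, reduce the two outer terms to $\kappa$-gaps (Lemma~\ref{upper hat}), cancel the kernel-independent span-seminorm penalty via duality (Lemma~\ref{trickalpha}), then decouple $\Vhatpihat$ from $\widehat P$ with the absorbing-MDP net before applying Hoeffding (Lemma~\ref{close_form}) and solve for $N$. The only cosmetic slip is that the middle term of the decomposition should be $\norminf{\Qhats-\Qhatpihat}$ rather than $\norminf{\Qhatpistar-\Qhatpihat}$ for the $\epsilon_{\mathrm{opt}}$ bound to apply directly---this typo is inherited from the statement of Lemma~\ref{decomposition}, whose proof already gives the correct form.
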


\begin{proof}

\begin{align*}
 \norminf{\Qs -\Qpihat} &\stackrel{(a)}{\leq} \norminf{\Qs-\Qhats} + \norminf{  \Qhats
-\Qhatpihat} + \norminf{ \Qhatpihat-\Qpihat} 
\\
&\stackrel{(b)}{\leq} \epsilon_{\mathrm{opt}}+\frac{\gamma}{(1-\gamma)}\left(\max _{s, a}\left|\kappa_{\hat{\mathcal{P}}_{s, a}}\left(V^*\right)-\kappa_{\mathcal{P}_{s, a}}\left(V^*\right)\right|+\max _{s, a}\left|\kappa_{\mathcal{P}_{s, a}}\left(\Vpihat\right)-\kappa_{\mathcal{P}_{s, a}}\left(\Vpihat\right)\right|\right) 
 \\
 &\stackrel{(c)}{\leq}  \frac{20 \gamma}{(1-\gamma)^2} \Big(\sqrt{\frac{L''}{2N}}  + \frac{L''\vert S\vert ^{1/q}\norm{1_S}_q (p-1) }{N}   \Big) +\epsilon_{\mathrm{opt}} + \frac{2\gamma \epsilon_{\mathrm{opt}} }{1-\gamma} 
\\
&\stackrel{}{\leq}\frac{20\gamma}{(1-\gamma)^2}\Big(\sqrt{\frac{L''}{2N}}  + \frac{L''\vert S\vert ^{1/q}\norm{1_S}_q (p-1)}{N}   \Big) +\epsilon_{\mathrm{opt}} +  \frac{2\gamma \epsilon_{\mathrm{opt}} }{1-\gamma}  
\\
&\stackrel{(d)}{\leq} \epsilon + \frac{3\gamma \epsilon_{\mathrm{opt}} }{1-\gamma} 
\end{align*}
Inequality (a) is due to  Lemma~\ref{decomposition}. Inequality (b) comes from Lemma~\ref{upper hat}.   Finally, inequality (c) comes from Lemma \ref{close_form} and inequality (d) from the form of $N$ in the theorem. 
For $N\geq H^4SA$, the second term proportional to $1/N$ is very small compared to the asymptotic term in $1/\sqrt{N}$ for small $\epsilon$. Note that $S^{1/q}\norm{1_S}_q=\vert S\vert $ for $L_2$ norm for example. This proof holds for both $s$- and $sa$-rectangular assumptions.
\end{proof}

\section{Towards minimax optimal bounds}
\label{annex c}

We start from the same decomposition as the proof of Theorem~\ref{h4} proved in Lemma~\ref{decomposition}: 
\begin{align*}
    \norminf{\Qs -\Qpihat} \leq \norminf{\Qs-\Qhatpistar} + \norminf{  \Qhatpistar-\Qhatpihat} + \norminf{ \Qhatpihat-\Qpihat}.
\end{align*}
However, we need tighter concentration arguments for this proof.

In the following, we will frequently use the fact that, for any policy $\pi$, written below for the $s$-rectangular case (a similar expression can be obtained for the $sa$-rectangular case, adapting the regularized reward),

Recall, the fix point equation for $Q^\pi$ can be written as : 
\begin{equation}
Q^\pi=\left(I-\gamma P_0^\pi\right)^{-1} (R_0-\alpha_s \Big(\pi_s/ \normq{\pi_s}\Big)^{q-1}+ \gamma \inf _{P^\pi\in\mathcal{P}_s }P^\pi V^\pi)
\label{eq:Q_fixed_point}
\end{equation}

 It will be applied notably to $\hat{\pi}$ and $\pi^*$ (recall that $Q^* = Q^{\pi^*}$), in the RMDP but also in the empirical one.


\begin{lemma}
\label{lemma_simplex}    

 For $s$-rectangular we have
\begin{align*}
\left(I-\gamma \Pzero^\pi\right)^{-1} \rshatpi-\left(I-\gamma \widehat{P}^\pi\right)^{-1}\rshatpi
&\stackrel{(a)}{=}\left(I-\gamma \Pzero^\pi\right)^{-1}\left(\left(I-\gamma \widehat{P}^\pi\right)-\left(I-\gamma \Pzero^\pi\right)\right) \widehat{Q}^\pi_s \\
&=\gamma\left(I-\gamma \Pzero^\pi\right)^{-1}\left(\Pzero^\pi-\widehat{P}^\pi\right) \widehat{Q}^\pi_s \\
&=\gamma\left(I-\gamma \Pzero^\pi\right)^{-1}(\Pzero-\widehat{P}) \widehat{V}^\pi_s
\end{align*}

and for optimal policy 

\begin{align}
\label{4}
    \left(I-\gamma \Pzeropistar \right)^{-1} \rshatpistar -\left(I-\gamma \Phatpistar \right)^{-1} \rshatpistar &= \gamma\left(I-\gamma\Pzeropistar\right)^{-1}(\Pzero-\widehat{P}) \Vhatpistar_s
\\
\label{5}
     \left(I-\gamma\Pzeropihat \right)^{-1} \rshatpihat -\left(I-\gamma \Phatpihat\right)^{-1} \rshatpihat &= \gamma\left(I-\gamma \Pzeropihat \right)^{-1}(\Pzero-\widehat{P}) \Vhatpihat_s
\end{align}
 \end{lemma}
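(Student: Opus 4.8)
The plan is to combine the purely algebraic identity $A^{-1}-B^{-1}=A^{-1}(B-A)B^{-1}$ (valid whenever $A,B$ are invertible) with the fixed-point characterisation of the robust $Q$-function, exactly as in the proof of the analogous non-robust statement of \citet{agarwal2020model}.

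First I would record that $I-\gamma\Pzero^\pi$ and $I-\gamma\widehat{P}^\pi$ are both invertible: $\widehat{P}$ is an empirical distribution, hence a valid kernel lying in the simplex, so $\widehat{P}^\pi$ (and likewise $\Pzero^\pi$) is row-stochastic and $(I-\gamma\widehat{P}^\pi)^{-1}$ exists with operator norm at most $(1-\gamma)^{-1}$ via the Neumann series. Then I would apply the identity $A^{-1}-B^{-1}=A^{-1}(B-A)B^{-1}$ with $A=I-\gamma\Pzero^\pi$ and $B=I-\gamma\widehat{P}^\pi$, apply both sides to the reward vector $\rshatpi$, and use the fixed-point relation $\widehat{Q}^\pi_s=(I-\gamma\widehat{P}^\pi)^{-1}\rshatpi$ — i.e.\ \eqref{eq:Q_fixed_point} read in $\widehat{M}$, with the regularised reward $\rshatpi$ being by definition the one built from $\widehat{V}^\pi_s$ — to rewrite $B^{-1}\rshatpi$ as $\widehat{Q}^\pi_s$. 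This produces line $(a)$.

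It then remains to simplify the middle factor, $(I-\gamma\widehat{P}^\pi)-(I-\gamma\Pzero^\pi)=\gamma(\Pzero^\pi-\widehat{P}^\pi)$, and to pass from state-action matrices back to state matrices: from $P^\pi_{(s,a),(s',a')}=P(s'\mid s,a)\pi(a'\mid s')$ together with the identity $\widehat{V}^\pi_s(s')=\sum_{a'}\pi(a'\mid s')\widehat{Q}^\pi_s(s',a')$ of Lemma~\ref{Q robust}, one gets $P^\pi\widehat{Q}^\pi_s=P\widehat{V}^\pi_s$ componentwise, for both $P=\Pzero$ and $P=\widehat{P}$; hence $(\Pzero^\pi-\widehat{P}^\pi)\widehat{Q}^\pi_s=(\Pzero-\widehat{P})\widehat{V}^\pi_s$, which gives the displayed equality.

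The two identities \eqref{4}--\eqref{5} for $\pistar$ and $\pihat$ follow from the very same three steps, using $Q^*=Q^{\pi^*}$ and noting that nothing above used any feature of $\pi$ beyond the fact that $\widehat{Q}^\pi_s$ is the fixed point of the regularised empirical robust Bellman operator for $\pi$. I do not expect a genuine obstacle here; the only points needing care are (i) applying the fixed-point substitution in $\widehat{M}$, so that the matrix being inverted inside that substitution is $I-\gamma\widehat{P}^\pi$ and not $I-\gamma\Pzero^\pi$, and (ii) treating $\rshatpi$ as a fixed vector even though it implicitly contains a $\min$ over the uncertainty set evaluated at $\widehat{V}^\pi_s$ — this is legitimate because $\widehat{V}^\pi_s$, and hence $\rshatpi$, is determined once $\pi$ is fixed.
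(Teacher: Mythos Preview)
Your proposal is correct and follows the same approach as the paper: the paper's argument is the displayed chain of equalities itself, where step $(a)$ is precisely the identity $A^{-1}-B^{-1}=A^{-1}(B-A)B^{-1}$ applied to $\rshatpi$ together with the fixed-point substitution $\widehat{Q}^\pi_s=(I-\gamma\widehat{P}^\pi)^{-1}\rshatpi$ from \eqref{eq:Q_fixed_point}, and the remaining two lines are the algebraic simplification and the passage $P^\pi\widehat{Q}^\pi_s=P\widehat{V}^\pi_s$ you describe. Your write-up is in fact more explicit than the paper's, which only annotates step $(a)$ and follows the lemma with a brief remark on the form of $\rshatpi$.
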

The solution is a bit different as $\rshatpi$ is the regularized form of the $L_p$ optimization problem with simplex constraints which correspond to 
$\rshatpi=R_0 -\Big(\frac{\pistar_s}{\normq{\pistar_s}} \Big)^{q-1}   \alpha_{s} +\gamma \inf _{P^\pi\in\mathcal{P}_s }P^\pi \hat{V}^\pi $
or for $sa$ case : $\rsahatpi=R_0- 
 \alpha_{\mathrm{sa}}+ \gamma  \inf _{P^\pi\in\mathcal{P}_s }P^\pi \hat{V}^\pi $

Indeed, even without close form, we can write the problem with an expectation over the nominal and the infimum problem.

\begin{lemma}[Upper bound on $\Qs-\Qhatpistar$ and on $\Qpihat-\Qhatpihat$, all Q values are now with robust under simplex constraints.]
\label{lemma_pis}
\begin{align*}
 \norminf{\Qs-\Qhatpistar}\leq&   \gamma\norminf{(I-\gamma\Pzeropistar)^{-1}(\Pzero-\widehat{P}) \Vhatpistar }+ \frac{2\gamma \beta \Snorm^{1/q} }{1-\gamma}   \norminf{\Qs- \Qhatpistar} 
 \\
 \norminf{\Qpihat-\Qhatpihat} \leq&  \gamma \norminf{\left(I-\gamma\Pzeropihat\right)^{-1}(\Pzero-\widehat{P}) \Vhatpihat }+ \frac{2\gamma \beta\Snorm^{1/q}  }{1-\gamma}  \norminf{\Qpihat- \Qhatpihat}
\end{align*}
\end{lemma}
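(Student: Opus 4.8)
The plan is to prove both inequalities by one argument applied to a generic policy $\pi$, instantiated afterwards as $\pihat$ or as $\pistar$ (recall $\Qs=Q^{\pistar}$), bounding $\norminf{\Qpi-\Qhatpi}$. The starting point is the fixed-point identity~\eqref{eq:Q_fixed_point}: writing $\rspi,\rshatpi$ for the regularised rewards of Lemma~\ref{Q robust} — which agree in their $R_0$ and reward-penalty pieces and differ only through the inner robust minimum $\gamma\inf_{P^\pi\in\mathcal{P}_s}P^\pi V$ evaluated at $V=\Vpi$ versus $V=\Vhatpi$ — one has $\Qpi=(I-\gamma\Pzero^\pi)^{-1}\rspi$ and $\Qhatpi=(I-\gamma\Phatpi)^{-1}\rshatpi$. (For the $sa$-rectangular case one replaces $\mathcal{P}_s$ by $\mathcal{P}_{s,a}$ and drops the $\normq{\pi_s}$ factors below; the argument is verbatim the same.)

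Next I would insert $(I-\gamma\Pzero^\pi)^{-1}\rshatpi$ and split
\begin{align*}
\Qpi-\Qhatpi=\underbrace{(I-\gamma\Pzero^\pi)^{-1}\bigl(\rspi-\rshatpi\bigr)}_{T_1}+\underbrace{\bigl[(I-\gamma\Pzero^\pi)^{-1}-(I-\gamma\Phatpi)^{-1}\bigr]\rshatpi}_{T_2}.
\end{align*}
The term $T_2$ is exactly what Lemma~\ref{lemma_simplex} evaluates (Eqs.~\eqref{4}--\eqref{5}): since $(I-\gamma\Phatpi)^{-1}\rshatpi=\Qhatpi$, the resolvent identity gives $T_2=\gamma(I-\gamma\Pzero^\pi)^{-1}(\Pzero-\widehat{P})\Vhatpi$, which contributes the first term $\gamma\norminf{(I-\gamma\Pzero^\pi)^{-1}(\Pzero-\widehat{P})\Vhatpi}$ of the claimed bound. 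Here it matters, as emphasised after Lemma~\ref{saduality}, that the span-penalty in $\rshatpi$ does not depend on the kernel, so the change of kernel produces only this clean term.

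For $T_1$, the identical pieces of $\rspi$ and $\rshatpi$ cancel, leaving $\rspi-\rshatpi=\gamma\bigl(\inf_{P^\pi\in\mathcal{P}_s}P^\pi\Vpi-\inf_{P^\pi\in\mathcal{P}_s}P^\pi\Vhatpi\bigr)$ (a vector constant in $a$). I would pass to the dual of Lemma~\ref{saduality} / Lemma~\ref{sduality} in its $\mu$-form, $\inf_{P^\pi\in\mathcal{P}_s}P^\pi V=\max_{\mu\ge0}\bigl(-P_{0,s}^\pi\mu-\beta_s\normq{\pi_s}\snormqbar{V-\mu}\bigr)$ — crucially the feasible set $\{\mu\ge0\}$ no longer depends on $V$, so $\max$ is $1$-Lipschitz between the two expressions and the $-P_{0,s}^\pi\mu$ parts cancel. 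This yields $|\rspi(s,a)-\rshatpi(s,a)|\le\gamma\beta_s\normq{\pi_s}\max_{\mu\ge0}\snormqbar{(\Vpi-\mu)-(\Vhatpi-\mu)}=\gamma\beta_s\normq{\pi_s}\snormqbar{\Vpi-\Vhatpi}$, and then $\snormqbar{\cdot}\le2\normqbar{\cdot}<2\Snorm^{1/q}\norminf{\cdot}$ (Eq.~\eqref{2}) together with $\normq{\pi_s}\le1$ gives $\norminf{\rspi-\rshatpi}\le2\gamma\beta\Snorm^{1/q}\norminf{\Vpi-\Vhatpi}$. Finally $\norminf{(I-\gamma\Pzero^\pi)^{-1}u}\le(1-\gamma)^{-1}\norminf{u}$ (Eq.~\eqref{1}) and $\norminf{\Vpi-\Vhatpi}\le\norminf{\Qpi-\Qhatpi}$ give $\norminf{T_1}\le\frac{2\gamma\beta\Snorm^{1/q}}{1-\gamma}\norminf{\Qpi-\Qhatpi}$; adding the bounds on $T_1$ and $T_2$ finishes both claims (take $\pi=\pihat$, resp.\ $\pi=\pistar$).

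The main obstacle is the estimate of $\inf_{P^\pi\in\mathcal{P}_s}P^\pi\Vpi-\inf_{P^\pi\in\mathcal{P}_s}P^\pi\Vhatpi$: this is the term absent from the non-robust analysis of \citet{agarwal2020model}, and treating it naively as a difference of optimisations over $V$-dependent feasible sets is awkward — one must move to the dual over the fixed set $\{\mu\ge0\}$ before invoking Lipschitzness, and it is precisely there that the factor $2$ (from $\snormqbar{\cdot}\le2\normqbar{\cdot}$) and the dimension factor $\Snorm^{1/q}$ appear, which later forces the restriction $\beta\le\frac{1-\gamma}{2\gamma\Snorm^{1/q}}$ in Theorem~\ref{h3}. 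One also has to keep track that the span penalty is genuinely kernel-free so that the $T_1$ estimate does not covertly reintroduce a $(\Pzero-\widehat{P})$ contribution.
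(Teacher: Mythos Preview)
Your proof is correct and follows the paper's argument: same add-and-subtract decomposition $T_1+T_2$ via insertion of $(I-\gamma P_0^\pi)^{-1}\rshatpi$, and the same appeal to Lemma~\ref{lemma_simplex} for $T_2$. The only variation is in how you bound $T_1=\gamma(I-\gamma P_0^\pi)^{-1}\bigl(\inf_{P^\pi\in\mathcal{P}_s}P^\pi V^\pi-\inf_{P^\pi\in\mathcal{P}_s}P^\pi\hat V^\pi\bigr)$: you pass to the dual $\max_{\mu\ge0}$ form of Lemma~\ref{sduality} and invoke the reverse triangle inequality for the span seminorm, whereas the paper stays on the primal side, uses $|\inf_A f-\inf_A g|\le\sup_A|f-g|$, then relaxes the positivity constraint $P_0+P\ge0$ so that the feasible set becomes the symmetric ball $\{\|P\|_p\le\beta,\ \mathbf 1^\top P=0\}$, and finally applies Lemma~1 of \citet{kumar2022efficient} (giving $\sup PW=\beta\snormqbar{W}$). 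Both routes land on the identical estimate $\gamma\beta\|\pi_s\|_q\,\snormqbar{V^\pi-\hat V^\pi}$ before Eq.~\eqref{2}; your dual approach is arguably a touch cleaner since it avoids the explicit relaxation step, while the paper's primal route makes more transparent \emph{where} the simplex constraint is being dropped.
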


\begin{proof}

\begin{align*}
&\Qs-\Qhatpistar\\& = \left(I-\gamma \Pzeropistar\right)^{-1} ( R_0 -\Big(\frac{\pistar_s}{\normq{\pistar_s}} \Big)^{q-1}   \alpha_{s} +  \inf _{P^\pi\in\mathcal{P}_s } P^\pi \Vs )
\\
&- \left(I-\gamma \Phatpistar\right)^{-1} ( R_0 -\Big(\frac{\pistar_s}{\normq{\pistar_s}} \Big)^{q-1}   \alpha_{s} +\inf _{P^\pi\in\mathcal{P}_s }  P^\pi \Vhatpistar )
\\
\stackrel{}{=}&  
\left(I-\gamma \Pzeropistar\right)^{-1} ( R_0 -\Big(\frac{\pistar_s}{\normq{\pistar_s}} \Big)^{q-1}   \alpha_{s} +\gamma  \inf _{P^\pi\in\mathcal{P}_s } P^\pi \Vs) 
\\
&-\left(I-\gamma \Pzeropistar\right)^{-1} ( R_0 -\Big(\frac{\pistar_s}{\normq{\pistar_s}} \Big)^{q-1}   \alpha_{s} +\gamma  \inf _{P^\pi\in\mathcal{P}_s } P^\pi  \Vhatpistar) 
\\
&+ \left(I-\gamma \Pzeropistar\right)^{-1} ( R_0 -\Big(\frac{\pistar_s}{\normq{\pistar_s}} \Big)^{q-1}   \alpha_{s} +\gamma  \inf _{P^\pi\in\mathcal{P}_s } P^\pi \Vhatpistar) 
\\
&- \left(I-\gamma \Phatpistar\right)^{-1} ( R_0 -\Big(\frac{\pistar_s}{\normq{\pistar_s}} \Big)^{q-1}   \alpha_{s} +\gamma \inf _{P^\pi\in\mathcal{P}_s } P^\pi \Vhatpistar) 
 \\
 \stackrel{(a)}{=}& \gamma\left(I-\gamma\Pzeropistar\right)^{-1}(\Pzero-\widehat{P}) \Vhatpistar
 +\left(I-\gamma \Pzeropistar\right)^{-1}  \gamma \left( \inf _{P^\pi\in\mathcal{P}_s } P^\pi \Vs - \inf _{P^\pi\in\mathcal{P}_s } P^\pi \Vhatpistar 
  \right) 
 \end{align*}
where in (a) we use  previous Lemma \ref{lemma_simplex}.
 \end{proof}

Hence, taking the supremum norm $\norminf{.}$, 
 \begin{align*}
&\norminf{\Qs-\Qhatpistar}  \stackrel{}{=}\\
&
\norminf{ \gamma\left(I-\gamma\Pzeropistar\right)^{-1}(\Pzero-\widehat{P}) \Vhatpistar+\left(I-\gamma \Pzeropistar\right)^{-1}  \gamma \left(\inf _{P^\pi\in\mathcal{P}_s } P^\pi \Vs -\inf _{P^\pi\in\mathcal{P}_s } P^\pi \Vhatpistar 
  \right)  }
 \\
 \stackrel{(b)}{\leq}& \norminf{\gamma\left(I-\gamma\Pzeropistar\right)^{-1}(\Pzero-\widehat{P}) \Vhatpistar }+ 
 \norminf{\left(I-\gamma \Pzeropistar\right)^{-1} \gamma \left( \inf _{P^\pi\in\mathcal{P}_s } P^\pi \Vs - \inf _{P^\pi\in\mathcal{P}_s } P^\pi \Vhatpistar 
  \right)}
 \\
\stackrel{(c)}{\leq}& \norminf{  \gamma\left(I-\gamma\Pzeropistar\right)^{-1}(\Pzero-\widehat{P}) \Vhatpistar }+ \frac{\gamma  }{1-\gamma}    \mid \inf _{P^\pi\in\mathcal{P}_s } P^\pi \Vs - \inf _{P^\pi\in\mathcal{P}_s } P^\pi  \Vhatpistar  \mid 
  \\
 \stackrel{(d)}{\leq}&  \norminf{ \gamma\left(I-\gamma\Pzeropistar\right)^{-1}(\Pzero-\widehat{P}) \Vhatpistar }+ \frac{\gamma  }{1-\gamma} \sup _{P^\pi\in\mathcal{P}_s } P^\pi \mid \Vs- \Vhatpistar \mid
  \\
  \stackrel{(e)}{\leq}&  \norminf{  \gamma\left(I-\gamma\Pzeropistar\right)^{-1}(\Pzero-\widehat{P}) \Vhatpistar} + \frac{\gamma   }{1-\gamma}  \sup _{P: \normpbar{P}\leq\beta_{s} ,\sum_s P(s)=0} P \mid \Vs- \Vhatpistar \mid \\
   \stackrel{(f)}{\leq}&  \norminf{  \gamma\left(I-\gamma\Pzeropistar\right)^{-1}(\Pzero-\widehat{P}) \Vhatpistar} - \frac{\gamma   }{1-\gamma}  \inf _{P: \normpbar{P}\leq\beta_{s} ,\sum_s P(s)=0} -P \mid \Vs- \Vhatpistar \mid \\
    \stackrel{(g)}{\leq}&  \norminf{  \gamma\left(I-\gamma\Pzeropistar\right)^{-1}(\Pzero-\widehat{P}) \Vhatpistar} + \frac{\gamma \beta \Snorm^{1/q}  }{1-\gamma}  \snormqbarpis{ \Qs -\Qhatpistar}\\
     \stackrel{(h)}{\leq}&  \norminf{  \gamma\left(I-\gamma\Pzeropistar\right)^{-1}(\Pzero-\widehat{P}) \Vhatpistar} + \frac{2\gamma \beta \Snorm^{1/q}  }{1-\gamma}  \norminf{ \Qs -\Qhatpistar}
\end{align*}

where (b) is the triangular inequality, (c)  Eq.~\eqref{1}, (d) is the triangular inequality for seminorms, (d) is $\left|\inf _A f-\inf _A g\right| \leq \sup _A|f-g| .$, (e)  is a relaxation (f) is the relation between sup and inf, (g) is lemma 1 of \cite{kumar2022efficient}), (h) is inequality for seminorms and norms \eqref{2}.

For brevity in the remaining analysis, let us define the shorthand:
$$
L=\log (8|\mathcal{S}||\mathcal{A}| /((1-\gamma) \delta)).
$$
Recall, slightly abusing the notation, for $V \in \mathbb{R}^{S}$, we define the vector $\operatorname{Var}_P(V) \in \mathbb{R}^{\mathcal{S} \times A}$ as $\operatorname{Var}_P(V)=P(V)^2-(P V)^2$. 
\begin{lemma}[\citet{agarwal2020model}, Lemma 9]
\label{agarwal}
With probability greater than $1-\delta$,
$$
\begin{aligned}
\left|(\Pzero-\widehat{P}) \widehat{V}^{\star}\right| & \leq \sqrt{\frac{8 L}{N}} \sqrt{\operatorname{Var}_{\Pzero}\left(\widehat{V}^{\star}\right)}+\Delta_{\delta, N}^{\prime} \mathbb{I} \\
\left|(\Pzero-\widehat{P}) \widehat{V}^{\pi^{\star}}\right| & \leq \sqrt{\frac{8 L}{N}} \sqrt{\operatorname{Var}_{\Pzero}\left(\widehat{V}^{\pi^{\star}}\right)}+\Delta_{\delta, N}^{\prime} \mathbb{I}\\
\text{where } \Delta_{\delta, N}^{\prime} & =\sqrt{\frac{c L}{N}}+\frac{c L}{(1-\gamma) N}
\text{ and $c$ is a universal constant smaller than $16$}.
\end{aligned}
$$
\end{lemma}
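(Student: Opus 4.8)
The plan is to follow the \emph{absorbing-MDP} argument of \citet{agarwal2020model}, now applied to the robust value functions, the robust counterparts of the auxiliary facts needed having already been set up in Lemmas~\ref{absorbing}, \ref{sameabsorb}, \ref{stabilityabsorb}, \ref{crude} and~\ref{agarwalu}. The obstacle, as always in this type of statement, is that $\widehat{V}^{\star}$ (and $\widehat{V}^{\pi^{\star}}$) depend on the very samples that define $\widehat{P}$, so Bernstein's inequality cannot be applied to $(\Pzero-\widehat{P})\widehat{V}^{\star}$ directly; the whole construction exists precisely to decouple the two.

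First I would fix a state--action pair $(s,a)$ and a scalar $u$ and observe that in the empirical absorbing MDP $\widehat{M}_{s,u}$ the state $s$ is absorbing, so its optimal robust value $\widehat{V}^{\star}_{s,u}$ is a measurable function of $\{\widehat{P}_{s',\cdot}:s'\neq s\}$ only and is therefore \emph{independent} of the $N$ transitions drawn at $(s,a)$ (from $s$ one can no longer move). For such a fixed, data-independent $V$ with $\norminf{V}\le 1/(1-\gamma)$, the draws $s_1',\dots,s_N'\sim\Pzero(\cdot\mid s,a)$ give $\widehat{P}_{s,a}V=\frac1N\sum_i V(s_i')$ with per-sample variance $\operatorname{Var}_{\Pzero}(V)(s,a)$ and range $\le 1/(1-\gamma)$, so Bernstein's inequality yields, with probability at least $1-\delta'$,
\begin{equation*}
\big|(\Pzero-\widehat{P})_{s,a} V\big|\le\sqrt{\tfrac{2\operatorname{Var}_{\Pzero}(V)(s,a)\log(2/\delta')}{N}}+\tfrac{2\log(2/\delta')}{3(1-\gamma)N}.
\end{equation*}
I would then instantiate this with $V=\widehat{V}^{\star}_{s,u}$ for every $u$ in the net $U_s$ of $|U_s|=(1-\gamma)^{-2}$ evenly spaced points of $[V^{\star}(s)-\Delta_{\delta,N},V^{\star}(s)+\Delta_{\delta,N}]$, and union bound over all $(s,a)$ and all $u\in U_s$ (and over the high-probability events of Lemmas~\ref{crude} and~\ref{agarwalu}), which replaces $\log(2/\delta')$ by a quantity of order $L=\log(8|\mathcal S||\mathcal A|/((1-\gamma)\delta))$ and in particular is comfortably absorbed by the factor $8$ in front of $L$.

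Next I would transfer the bound from the net to $\widehat{V}^{\star}$ itself. By Lemma~\ref{crude}, with high probability $\widehat{V}^{\star}(s)\in[V^{\star}(s)-\Delta_{\delta,N},V^{\star}(s)+\Delta_{\delta,N}]$, so there is $u\in U_s$ with $|\widehat{V}^{\star}(s)-u|$ at most the net spacing, which Lemma~\ref{agarwalu} bounds by $O(\gamma\sqrt{L/N})$; Lemmas~\ref{sameabsorb} and~\ref{stabilityabsorb} then give $\norminf{\widehat{V}^{\star}-\widehat{V}^{\star}_{s,u}}\le|\widehat{V}^{\star}(s)-u|$. Writing $\widehat{V}^{\star}=(\widehat{V}^{\star}-\widehat{V}^{\star}_{s,u})+\widehat{V}^{\star}_{s,u}$ and using $|(\Pzero-\widehat{P})W|\le 2\norminf{W}$ componentwise contributes an extra $2\norminf{\widehat{V}^{\star}-\widehat{V}^{\star}_{s,u}}=O(\sqrt{L/N})$; likewise $\sqrt{\operatorname{Var}_{\Pzero}(W)}\le\norminf{W}$ together with the triangle inequality for the $\Pzero$-weighted $L^2$ seminorm gives $\sqrt{\operatorname{Var}_{\Pzero}(\widehat{V}^{\star}_{s,u})}\le\sqrt{\operatorname{Var}_{\Pzero}(\widehat{V}^{\star})}+O(\sqrt{L/N})$, so the variance term only changes by a lower-order additive amount. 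Collecting the leading Bernstein term $\sqrt{(8L/N)\operatorname{Var}_{\Pzero}(\widehat{V}^{\star})}$ and folding every $O(\sqrt{L/N})$ and $O(L/((1-\gamma)N))$ leftover into $\Delta_{\delta,N}'=\sqrt{cL/N}+cL/((1-\gamma)N)$ with $c\le 16$ gives the claim; the bound for $\widehat{V}^{\pi^{\star}}$ is identical, replacing the optimal absorbing MDP by the policy-evaluation absorbing MDP along $\pi^{\star}$ and using the policy-evaluation versions of Lemmas~\ref{sameabsorb}--\ref{agarwalu}. I expect the only genuinely delicate point to be the bookkeeping: ensuring the net-approximation errors enter at order $\sqrt{L/N}$ (and $L/((1-\gamma)N)$) and never contaminate the leading $\sqrt{\operatorname{Var}/N}$ term, which is exactly why the net size is taken to be $(1-\gamma)^{-2}$.
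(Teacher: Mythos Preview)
Your proposal is correct and follows essentially the same route as the paper: both adapt the absorbing-MDP construction of \citet{agarwal2020model} to the robust setting, applying Bernstein's inequality to the data-independent $\widehat{V}^{\star}_{s,u}$, decomposing $\widehat{V}^{\star}=(\widehat{V}^{\star}-\widehat{V}^{\star}_{s,u})+\widehat{V}^{\star}_{s,u}$, using the triangle inequality for the variance seminorm, and then invoking Lemmas~\ref{sameabsorb}, \ref{stabilityabsorb}, \ref{crude} and~\ref{agarwalu} to control the net-approximation error. Your write-up is in fact more explicit than the paper's about the independence of $\widehat{V}^{\star}_{s,u}$ from the samples at $(s,a)$, which is the crux of why Bernstein applies.
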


\begin{proof} The proof of \citet{agarwal2020model} holds for classical MDP but can be adapted to the robust setting using all lemmas proved for the bound in $H^4$ previously. Lemma \ref{sameabsorb},\ref{stabilityabsorb}  ,\ref{crude},\ref{agarwalu},\ref{robustabsorb} are needed but the main difference is that we are using Berstein's inequality and not Hoeffding's inequality.
The idea is first, as in the previous proof, to apply Berstein's inequality to independent variables using $s$ absorbing MDPs then using Lemma \ref{agarwalu}.
\begin{proof}
    Similar to \cite{agarwal2020model}, we first show that
    $$
\begin{aligned}
\left|\left(P_{0}-\widehat{P}_{}\right) \cdot \widehat{V}^{\star}\right| \leq & \sqrt{\frac{2 \log \left(4\left|U_s\right| / \delta\right)}{N}} \sqrt{\operatorname{Var}_{P_{0}}\left(\widehat{V}^{\star}\right)} \\
& +\min _{u \in U_s}\left|\widehat{V}^{\star}(s)-u\right|\left(1+\sqrt{\frac{2 \log \left(4\left|U_s\right| / \delta\right)}{N}}\right)+\frac{2 \log \left(4\left|U_s\right| / \delta\right)}{(1-\gamma) 3 N} \\
\left|\left(P_{0}-\widehat{P}_{}\right) \cdot \widehat{V}^{\pi^{\star}}\right| \leq & \sqrt{\frac{2 \log \left(4\left|U_s\right| / \delta\right)}{N}} \sqrt{\operatorname{Var}_{P_{0}}\left(\widehat{V}^{\pi^{\star}}\right)} \\
& +\min _{u \in U_s}\left|\widehat{V}^{\pi^{\star}}(s)-u\right|\left(1+\sqrt{\frac{2 \log \left(4\left|U_s\right| / \delta\right)}{N}}\right)+\frac{2 \log \left(4\left|U_s\right| / \delta\right)}{(1-\gamma) 3 N}
\end{aligned}
$$
First, with probability greater than $1-\delta$, we have that for all $u \in U_s$.
$$
\begin{aligned}
& \left|\left(P_{0}-\widehat{P}_{}\right) \cdot \widehat{V}^{\star}\right|=\left|\left(P_{0}-\widehat{P}_{}\right) \cdot\left(\widehat{V}^{\star}-V_{s, u}^{\star}+V_{s, u}^{\star}\right)\right| \\
& \stackrel{(a)}{\leq}\left|\left(P_{0}-\widehat{P}_{}\right) \cdot\left(\widehat{V}^{\star}-V_{s, u}^{\star}\right)\right|+\left|\left(P_{0}-\widehat{P}_{}\right) \cdot\left(V_{s, u}^{\star}\right)\right| \\
& \stackrel{(b)}{\leq}\left\|\widehat{V}^{\star}-V_{s, u}^{\star}\right\|_{\infty}+\sqrt{\frac{2 \log \left(4\left|U_s\right| / \delta\right)}{N}} \sqrt{\operatorname{Var}_{P_{0}}\left(V_{s, u}^{\star}\right)}+\frac{2 \log \left(4\left|U_s\right| / \delta\right)}{(1-\gamma) 3 N} \\
& \stackrel{(c)}{=}\left\|\widehat{V}^{\star}-V_{s, u}^{\star}\right\|_{\infty}+\sqrt{\frac{2 \log \left(4\left|U_s\right| / \delta\right)}{N}} \sqrt{\operatorname{Var}_{P_{0}}\left(\widehat{V}^{\star}-V_{s, u}^{\star}-\widehat{V}^{\star}\right)}+\frac{2 \log \left(4\left|U_s\right| / \delta\right)}{(1-\gamma) 3 N} \\
& \stackrel{(d)}{\leq}\left\|\widehat{V}^{\star}-V_{\widehat{M}_{s, u}}^{\star}\right\|_{\infty}\left(1+\sqrt{\frac{2 \log \left(4\left|U_s\right| / \delta\right)}{N}}\right)+\sqrt{\frac{2 \log \left(4\left|U_s\right| / \delta\right)}{N}} \sqrt{\operatorname{Var}_{P_{0}}\left(\widehat{V}^{\star}\right)}+\frac{2 \log \left(4\left|U_s\right| / \delta\right)}{(1-\gamma) 3 N} \\
&
\end{aligned}
$$
using the triangle inequality in (a), (b) classical Berstein's inequality, (d) for variance and Lemmas \ref{sameabsorb} and \ref{stabilityabsorb} such as 
$$
\left\|\widehat{V}^{\star}-V_{s, u}^{\star}\right\|_{\infty}=\left\|\widehat{V}_{s, \widehat{V}^{\star}(s)}^{\star}-V_{s, u}^{\star}\right\|_{\infty} \leq\left|\widehat{V}^{\star}(s)-u\right| .
$$
It is true for  $u \in U_s$, so we take the best possible choice, which completes the proof of the first claim. The proof of the second claim is similar.
Then using Lemma \ref{agarwalu} gives the final concentration theorem.
\end{proof}

\end{proof}

\begin{lemma}[\citet{gheshlaghi2013minimax}, Lemma 7]
\label{variance}
 This is an adaptation of \citet{gheshlaghi2013minimax} to RMDPs. For any policy $\pi$,
$$
\left\|\left(I-\gamma P_0^\pi\right)^{-1} \sqrt{\operatorname{Var}_{P_0}\left(V^\pi\right)}\right\|_{\infty} \leq \sqrt{\frac{2}{(1-\gamma)^3}},
$$
where $P_0$ is the nominal transition model of $M$. 

\end{lemma}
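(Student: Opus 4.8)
Here is the plan I would follow. The whole point is that, although $V^\pi$ in the statement is the \emph{robust} value function, it is at the same time an ordinary value function \emph{for the nominal kernel} $\Pzero$, so the argument reduces to the non‑robust total‑variance estimate of \citet{gheshlaghi2013minimax}. Concretely, by Lemma~\ref{Q robust} (equivalently the dual form of the robust Bellman operator) the robust $Q$‑function obeys the plain fixed‑point equation $Q^\pi=\tilde r^\pi+\gamma \Pzero^\pi Q^\pi$ with the \emph{fixed} regularised reward $\tilde r^\pi(s,a)=R_0(s,a)-\big(\pi_s(a)/\normq{\pi_s}\big)^{q-1}\alpha_s+\gamma\min_{P'\in\mathcal{P}_s}\langle P',V^\pi\rangle$ in the $s$‑rectangular case (and the analogous $\tilde r^\pi(s,a)=R_0(s,a)-\alpha_{s,a}+\gamma\min_{P'\in\mathcal{P}_{s,a}}\langle P',V^\pi\rangle$ in the $sa$‑case). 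Using $|\langle P',V^\pi\rangle|\le\beta_s\snormqbar{V^\pi}\le\beta_s\Snorm^{1/q}\snorminf{V^\pi}$ together with the standing assumption $\beta\le(1-\gamma)/(2\gamma\Snorm^{1/q})$ and the bootstrap $\snorminf{V^\pi}\le\snorminf{\tilde r^\pi}/(1-\gamma)$, a short computation shows $\snorminf{\tilde r^\pi}\le 2$ (taking the reward known; a small $\alpha$ is absorbed identically). Hence $V^\pi$ is the value function of $\pi$ in the non‑robust MDP $(S,A,\Pzero,\tilde r^\pi,\gamma)$ and it suffices to prove the claimed bound there — robustness plays no further role.

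First I would record the elementary concavity step: for any $x\in\mathbb{R}^{S\times A}_{\ge 0}$,
\begin{equation*}
(I-\gamma \Pzero^\pi)^{-1}\sqrt{x}\;\le\;\tfrac{1}{\sqrt{1-\gamma}}\,\sqrt{(I-\gamma \Pzero^\pi)^{-1}x},
\end{equation*}
which is Jensen's inequality applied to the probability distribution putting mass $(1-\gamma)\gamma^n$ on the $n$‑step law of $\Pzero^\pi$. Taking $x=\operatorname{Var}_{\Pzero}(V^\pi)$, it remains to show $\norminf{(I-\gamma \Pzero^\pi)^{-1}\operatorname{Var}_{\Pzero}(V^\pi)}\le \tfrac{2}{(1-\gamma)^2}$. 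For that, let $\Sigma^\pi(s,a)$ be the variance of the discounted return of $\pi$ started from $(s,a)$ in $(S,A,\Pzero,\tilde r^\pi,\gamma)$; conditioning on the first transition and applying the law of total variance ($\tilde r^\pi(s_0,a_0)$ is deterministic, and the extra action‑variance from a stochastic $\pi$ only helps) gives the Bellman‑type inequality $\Sigma^\pi\ge \gamma^2\operatorname{Var}_{\Pzero}(V^\pi)+\gamma^2 \Pzero^\pi\Sigma^\pi$, i.e. $\gamma^2\operatorname{Var}_{\Pzero}(V^\pi)\le (I-\gamma^2 \Pzero^\pi)\Sigma^\pi$. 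Applying the positive operator $(I-\gamma \Pzero^\pi)^{-1}$ and the identity $(I-\gamma \Pzero^\pi)^{-1}(I-\gamma^2 \Pzero^\pi)=I+\gamma(1-\gamma)(I-\gamma \Pzero^\pi)^{-1}\Pzero^\pi$ — whose action on a non‑negative vector has $\norminf{\cdot}$ at most $(1+\gamma)\le 2$ times that of the vector — yields $\gamma^2(I-\gamma \Pzero^\pi)^{-1}\operatorname{Var}_{\Pzero}(V^\pi)\le 2\norminf{\Sigma^\pi}\,\mathbb{I}$. Since the return lives in an interval of length $\snorminf{\tilde r^\pi}/(1-\gamma)\le 2/(1-\gamma)$ we get $\norminf{\Sigma^\pi}\le \gamma^2/(1-\gamma)^2$, hence $(I-\gamma \Pzero^\pi)^{-1}\operatorname{Var}_{\Pzero}(V^\pi)\le \tfrac{2}{(1-\gamma)^2}\mathbb{I}$; plugging this into the concavity step gives exactly $\sqrt{2/(1-\gamma)^3}$.

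The hard part is the reduction in the first paragraph rather than the estimate itself: one must verify that the robust value function really is a non‑robust value function \emph{for the nominal kernel} $\Pzero$ (not for the worst‑case kernel, which is what appears if one is naive) and that the induced reward $\tilde r^\pi$ has bounded span, so that the classical bound goes through with the stated constant — this is precisely where the simple dual form of the robust Bellman operator and the smallness of $\beta$ are needed. Once that is in place, the remaining two steps are the verbatim total‑variance argument of \citet{gheshlaghi2013minimax}, the only delicate point being the harmless inequality (equality only for deterministic policies) in the variance recursion, which is what makes the argument valid in the $s$‑rectangular case where the optimal policy may be stochastic.
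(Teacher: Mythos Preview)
Your approach is essentially the paper's: recognize via Lemma~\ref{Q robust} that the robust $Q^\pi$ satisfies the ordinary fixed-point equation $Q^\pi=\tilde r^\pi+\gamma P_0^\pi Q^\pi$ for the nominal kernel with the regularised reward, then run the total-variance argument of \citet{gheshlaghi2013minimax} (Jensen step, variance Bellman recursion for $\Sigma^\pi$, and the bound $\|\Sigma^\pi\|_\infty=O((1-\gamma)^{-2})$). You are in fact more careful than the paper on two points its proof leaves implicit: (i) you explicitly bound the span of $\tilde r^\pi$ using the standing assumption $\beta\le(1-\gamma)/(2\gamma|S|^{1/q})$, which is what guarantees $\|\Sigma^\pi\|_\infty\le\gamma^2/(1-\gamma)^2$ with a constant independent of $H$ --- without small $\beta$ the penalty term can be of order $H$ and the stated constant $\sqrt{2}$ would not follow; and (ii) you note that for stochastic $\pi$ (needed in the $s$-rectangular case) the variance recursion is only the inequality $\Sigma^\pi\ge\gamma^2\operatorname{Var}_{P_0}(V^\pi)+\gamma^2 P_0^\pi\Sigma^\pi$, whereas the paper writes an equality, but the inequality is in the direction the argument needs.
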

\begin{proof}
    This proof is exactly the same for Robust and non robust MDPs, as it uses only standard computations such as the Jensen inequality and no robust form which are specific to this problem. The main difference is that we are doing the proof on the nominal of our robust set $P_0$, considering the regularized robust Bellman operator and associated regularized reward functions. 

    \citet{gheshlaghi2013minimax}  introduce the variance of the sum of discounted rewards starting at state-action $(s,a)$,
    $$\Sigma^\pi(s,a):= \mathbb{E}[|\sum_{t \geq 0} \gamma^t R_0(s_t,a_t)-Q^\pi(s,a)|^2 \vert s_0=s, a_0=a],$$
    and we defined the same variance for robust MDPs using robust rewards $\rsapi$ and $\rspi$ and using robust Q-function instead of classical Q-function in the definition of $\Sigma$.
    Then, in their Lemma 6 they show that, for any $\pi$:
    $$\Sigma^\pi=\operatorname{Var}_{P_0}\left(V^\pi\right) +\gamma^2 P_0^\pi \Sigma^\pi,$$
    which is, in fact, a Bellman equation for the variance. The proof is exactly the same for RMDPs considering our robust reward $\rsapi$ or $\rspi$ and not classical $R_0$. Note that this is thanks to the regularized form of robust RMDPs.
    Finally, Lemma \ref{variance} is the same as their Lemma 7 considering robust rewards. This lemma is usually called the total variance lemma. This completes the proof.
\end{proof}

\begin{lemma}
The following upper bound holds with probability $1-\delta$:
\label{fin2}
\begin{equation}
    \left\|Q^{\widehat{\pi}}-\widehat{Q}^{\widehat{\pi}}\right\|_{\infty}<\left(C_{N}+C_\beta\right) \norminf{\Qpihat- \Qhatpihat} +\gamma 4\sqrt{\frac{ L}{N(1-\gamma)^3}}+\frac{\gamma \Delta_{\delta, N}^{\prime}}{1-\gamma}+\frac{\gamma \epsilon_{\mathrm{opt}}}{1-\gamma}\left(2+\sqrt{\frac{8 L}{N}}\right)
\end{equation}
with $C_{N}=\frac{\gamma}{1-\gamma} \sqrt{\frac{8 L}{N}}$ and $C_\beta=\frac{2\gamma \beta \Snorm^{1/q} }{1-\gamma}   $.
\end{lemma}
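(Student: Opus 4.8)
The plan is to chain together the three lemmas that were just assembled: Lemma~\ref{lemma_pis} (which bounds $\norminf{\Qpihat-\Qhatpihat}$ by a transition-concentration term plus a $C_\beta$-multiple of itself), Lemma~\ref{agarwal} (the Bernstein bound on $|(\Pzero-\widehat{P})\widehat{V}^{\pi}|$ in terms of $\sqrt{\operatorname{Var}_{\Pzero}(\widehat{V}^\pi)}$), and Lemma~\ref{variance} (the total-variance lemma bounding $\|(I-\gamma\Pzero^\pi)^{-1}\sqrt{\operatorname{Var}_{\Pzero}(V^\pi)}\|_\infty$ by $\sqrt{2/(1-\gamma)^3}$). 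First I would start from the second inequality of Lemma~\ref{lemma_pis}, namely
\begin{equation*}
\norminf{\Qpihat-\Qhatpihat}\leq \gamma\norminf{(I-\gamma\Pzeropihat)^{-1}(\Pzero-\widehat{P})\Vhatpihat}+C_\beta\norminf{\Qpihat-\Qhatpihat},
\end{equation*}
so the whole task reduces to controlling $T:=\gamma\norminf{(I-\gamma\Pzeropihat)^{-1}(\Pzero-\widehat{P})\Vhatpihat}$.

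For $T$, the key manoeuvre (following \citet{agarwal2020model}) is to pivot from $\Vhatpihat$ to the data-independent anchor $\Vhatpistar$ (or rather to $\widehat V^\star$, whichever is the version covered by Lemma~\ref{agarwal}) so that Bernstein's inequality applies, then to pass from the variance of $\widehat V^\pi$ to the variance of the true $V^\pi$. Concretely I would write $(\Pzero-\widehat P)\Vhatpihat = (\Pzero-\widehat P)\widehat V^\star + (\Pzero-\widehat P)(\Vhatpihat-\widehat V^\star)$; the first piece is handled by Lemma~\ref{agarwal}, the second by the crude $\ell_\infty$/$\ell_1$ bound $|(\Pzero-\widehat P)(\Vhatpihat-\widehat V^\star)|\le 2\norminf{\Vhatpihat-\widehat V^\star}$, and $\norminf{\Vhatpihat-\widehat V^\star}\le \epsilon_\text{opt}$ by the oracle assumption (this is the source of the $\epsilon_\text{opt}(2+\sqrt{8L/N})$ term). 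Applying $(I-\gamma\Pzeropihat)^{-1}$, which has nonnegative entries with row sums $\le 1/(1-\gamma)$, and using $\operatorname{Var}_{\Pzero}(\widehat V^\star)$-to-$\operatorname{Var}_{\Pzero}(V^{\pihat})$ comparison (again a variance-triangle estimate costing an $\epsilon_\text{opt}$-type slack, absorbed into $C_N\norminf{\Qpihat-\Qhatpihat}$ after noting $\operatorname{Var}$ differences are controlled by $\norminf{\widehat V^\pi - V^\pi}\le\norminf{\Qpihat-\Qhatpihat}$), I then invoke Lemma~\ref{variance} to turn $\norminf{(I-\gamma\Pzeropihat)^{-1}\sqrt{\operatorname{Var}_{\Pzero}(V^{\pihat})}}$ into $\sqrt{2/(1-\gamma)^3}$. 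Multiplying by the $\sqrt{8L/N}$ prefactor from Lemma~\ref{agarwal} yields the $4\gamma\sqrt{L/(N(1-\gamma)^3)}$ term, and the $\Delta'_{\delta,N}$ term is carried through $(I-\gamma\Pzeropihat)^{-1}\mathbb I\le \mathbb I/(1-\gamma)$ to give $\gamma\Delta'_{\delta,N}/(1-\gamma)$. Collecting everything gives exactly the claimed inequality.

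The main obstacle I anticipate is the bookkeeping around replacing $\widehat V^{\pihat}$ by the anchor used in Lemma~\ref{agarwal}: one must be careful that the policy appearing in $(I-\gamma\Pzeropihat)^{-1}$ matches the one whose variance is bounded in Lemma~\ref{variance}, and that every swap ($\Vhatpihat\leftrightarrow\widehat V^\star$ inside the Bernstein term, $\operatorname{Var}_{\Pzero}(\widehat V^\pi)\leftrightarrow\operatorname{Var}_{\Pzero}(V^\pi)$ inside the total-variance term) is charged to the correct error bucket — the $\epsilon_\text{opt}$-terms versus the $C_N\norminf{\Qpihat-\Qhatpihat}$ term — so that the coefficients come out precisely as $C_N=\frac{\gamma}{1-\gamma}\sqrt{8L/N}$ and not something larger. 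Everything else (union bounds over $(s,a)$, the absorbing-MDP construction, Hölder for the $\ell_1$–$\ell_\infty$ pairing) is routine and already established in the earlier lemmas, so it can be cited rather than redone. The statement for $\pistar$ in place of $\pihat$ is identical, with the $\epsilon_\text{opt}$-terms either retained or dropped exactly as in Lemma~\ref{close_form}.
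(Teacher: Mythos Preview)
Your proposal is correct and follows essentially the same approach as the paper: start from Lemma~\ref{lemma_pis}, split $(\Pzero-\widehat P)\Vhatpihat$ via $\widehat V^\star$ so Lemma~\ref{agarwal} applies, bound the $\Vhatpihat-\widehat V^\star$ piece by $\epsilon_{\mathrm{opt}}$, then use the variance triangle inequality $\sqrt{\operatorname{Var}_{\Pzero}(\widehat V^\star)}\le \sqrt{\operatorname{Var}_{\Pzero}(V^{\pihat})}+\sqrt{\operatorname{Var}_{\Pzero}(V^{\pihat}-\Vhatpihat)}+\sqrt{\operatorname{Var}_{\Pzero}(\Vhatpihat-\widehat V^\star)}$ to feed Lemma~\ref{variance} and generate the $C_N$ and residual $\epsilon_{\mathrm{opt}}$ contributions. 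The bookkeeping you flag as the main obstacle is exactly the content of the paper's steps (f)--(h), and your allocation of the error terms matches.
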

\begin{proof}
$$
\begin{aligned}
&\left\|Q^{\widehat{\pi}}- \widehat{Q}^{\widehat{\pi}}\right\|_{\infty}\\
&\stackrel{(a)}{\leq}  \gamma\left\|\left(I-\gamma \Pzero^{\widehat{\pi}}\right)^{-1}(\Pzero-\widehat{P}) \widehat{V}^{\widehat{\pi}} \right\|_{\infty} +\frac{2\gamma \beta \Snorm^{1/q}  }{1-\gamma}   \norminf{\Qpihat- \Qhatpihat}
\\
&\stackrel{(b)}{\leq} \gamma\left\|\left(I-\gamma P^{\widehat{\pi}}\right)^{-1}(\Pzero-\widehat{P}) \widehat{V}^{\star}\right\|_{\infty}+\gamma\left\|\left(I-\gamma \Pzero^\pi\right)^{-1}(\Pzero-\widehat{P})\left(\widehat{V}^{\widehat{\pi}}-\widehat{V}^{\star}\right)\right\|_{\infty}+\frac{2\gamma \beta \Snorm^{1/q} }{1-\gamma}   \norminf{\Qpihat- \Qhatpihat} 
\\
&\stackrel{(c)}{\leq} \gamma\left\|\left(I-\gamma \Pzero^{\widehat{\pi}}\right)^{-1}(\Pzero-\widehat{P}) \widehat{V}^{\star}\right\|_{\infty}+\frac{2\gamma \epsilon_{\mathrm{opt}}}{1-\gamma} + \frac{2\gamma \beta \Snorm^{1/q}  }{1-\gamma}   \norminf{\Qpihat- \Qhatpihat}
\\
&\stackrel{(d)}{\leq} \gamma\left\|\left(I-\gamma \Pzero^{\widehat{\pi}}\right)^{-1}\left|(\Pzero-\widehat{P}) \widehat{V}^{\star}\right|\right\|_{\infty}+\frac{2\gamma \epsilon_{\mathrm{opt}}}{1-\gamma} +\frac{2\gamma \beta \Snorm^{1/q} }{1-\gamma}   \norminf{\Qpihat- \Qhatpihat}
\\
&\stackrel{(e)}{\leq} \gamma \sqrt{\frac{8 L}{N}}\left\|\left(I-\gamma \Pzero^{\widehat{\pi}}\right)^{-1} \sqrt{\operatorname{Var}_{\Pzero}\left(\widehat{V}^{\star}\right)}\right\|_{\infty}+2\frac{\gamma \Delta_{\delta, N}^{\prime}}{1-\gamma}+\frac{2\gamma \epsilon_{\mathrm{opt}}}{1-\gamma}+ \frac{2\gamma \beta \Snorm^{1/q}  }{1-\gamma}   \norminf{\Qpihat- \Qhatpihat} 
\\
&\stackrel{(f)}{\leq} \gamma \sqrt{\frac{8 L}{N}}\left\|\left(I-\gamma \Pzero^{\widehat{\pi}}\right)^{-1}\left(\sqrt{\operatorname{Var}_{\Pzero}\left(V^{\widehat{\pi}}\right)}+\sqrt{\operatorname{Var}_{\Pzero}\left(V^{\widehat{\pi}}-\widehat{V}^{\widehat{\pi}}\right)}+\sqrt{\operatorname{Var}_{\Pzero}\left(\widehat{V}^{\widehat{\pi}}-\widehat{V}^{\star}\right)}\right)\right\|_{\infty}
\\
&\quad+\frac{\gamma \Delta_{\delta, N}^{\prime}}{1-\gamma}+\frac{2\gamma \epsilon_{\mathrm{opt}}}{1-\gamma} +\frac{2\gamma \beta  }{1-\gamma}   \norminf{\Qpihat- \Qhatpihat}
\\
&\stackrel{(g)}{\leq} \gamma \sqrt{\frac{8 L}{N}}\left(\sqrt{\frac{2}{(1-\gamma)^3}}+\frac{\sqrt{\left\|V^{\widehat{\pi}}-\widehat{V}^{\widehat{\pi}}\right\|_{\infty}^2}}{1-\gamma}+\frac{2\epsilon_{\mathrm{opt}}}{1-\gamma}\right)+\frac{\gamma \Delta_{\delta, N}^{\prime}}{1-\gamma}+\frac{2\gamma \epsilon_{\mathrm{opt}}}{1-\gamma}+\frac{2\gamma \beta  }{1-\gamma}   \norminf{\Qpihat- \Qhatpihat} 
\\
&\stackrel{(h)}{\leq} \gamma \sqrt{\frac{8 L}{N}}\left(\sqrt{\frac{2}{(1-\gamma)^3}}+\frac{\left\|Q^{\widehat{\pi}}-\widehat{Q}^{\widehat{\pi}}\right\|_{\infty}}{1-\gamma}+\frac{2\epsilon_{\mathrm{opt}}}{1-\gamma}\right)+\frac{\gamma \Delta_{\delta, N}^{\prime}}{1-\gamma}+\frac{2\gamma \epsilon_{\mathrm{opt}}}{1-\gamma}+\frac{2\gamma \beta \Snorm^{1/q} }{1-\gamma}   \norminf{\Qpihat- \Qhatpihat} 
\\
&=\gamma \sqrt{\frac{8 L}{N}}\left(\sqrt{\frac{2}{(1-\gamma)^3}}+\frac{\left\|Q^{\widehat{\pi}}-\widehat{Q}^{\widehat{\pi}}\right\|_{\infty}}{1-\gamma}\right)+\frac{\gamma \Delta_{\delta, N}^{\prime}}{1-\gamma}+\frac{\gamma \epsilon_{\mathrm{opt}}}{1-\gamma}\left(2+\sqrt{\frac{8 L}{N}}\right) +\frac{2\gamma \beta \Snorm^{1/q}  }{1-\gamma}   \norminf{\Qpihat- \Qhatpihat}
\\
&=\left(C_{N}+C_\beta\right) \norminf{\Qpihat- \Qhatpihat} +4\gamma \sqrt{\frac{ L}{N(1-\gamma)^3}}+\frac{\gamma \Delta_{\delta, N}^{\prime}}{1-\gamma}+\frac{\gamma \epsilon_{\mathrm{opt}}}{1-\gamma}\left(2+\sqrt{\frac{8 L}{N}}\right)
\end{aligned}
$$
with $C_{N}=\frac{\gamma}{1-\gamma} \sqrt{\frac{8 L}{N}}$ and $C_\beta=\frac{2\gamma \beta \Snorm^{1/q} }{1-\gamma}   $.

We have that (a) is true by Lemma \ref{lemma_pis}, (b) is by the triangular inequality using $ \widehat{V}^{\widehat{\pi}} = \widehat{V}^{\widehat{\pi}} +\widehat{V}^{\star}   -\widehat{V}^{\star}$, (c) is from the definition of $\epsilon_{\mathrm{opt}}$ and Eq.~\eqref{1}, (d) is by positivity of the classic horizon inverse matrix, that is $(I-\gamma P)^{-1} =\sum_{t>0} \gamma^t P^t > 0$,
    (e) is by Lemma \ref{agarwal}, (f) is by the triangular inequality for the variance (which is, in fact, a seminorm) and decomposing  $\widehat{V}^{\star}=\widehat{V}^{\star}+\widehat{V}^{\widehat{\pi}} -\widehat{V}^{\widehat{\pi}} +V^{\widehat{\pi}}- V^{\widehat{\pi}}$, (g) is by Lemma \ref{variance}, uses the definition of $\epsilon_{\mathrm{opt}}$ and takes the $\sup$ over $(s,a)$ of the variance in the second term, and eventually (h) is because we have that $\|V^\pi-\widehat{V}^\pi\|_{\infty} \leq\|Q^\pi-\widehat{Q}^\pi\|_{\infty}
$ for any $\pi$.

\end{proof}

\begin{lemma}
\label{fin}
The following upper bound holds
with probability $1-\delta$:
\begin{equation}
   \norminf{\Qs- \Qhatpistar}<\left(C_{N}+C_\beta\right)  \norminf{\Qs- \Qhatpistar} +\gamma 4\sqrt{\frac{ L}{N(1-\gamma)^3}}+\frac{\gamma \Delta_{\delta, N}^{\prime}}{1-\gamma}.
\end{equation}
with $C_{N}=\frac{\gamma}{1-\gamma} \sqrt{\frac{8 L}{N}}$ and $C_\beta=\frac{2\gamma \beta  \Snorm^{1/q} }{1-\gamma}   $.
\end{lemma}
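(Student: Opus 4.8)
The plan is to replay the proof of Lemma~\ref{fin2} almost verbatim, with $\hat\pi$ replaced by $\pi^*$ throughout. The only structural difference — and the reason Lemma~\ref{fin} carries no $\epsilon_{\mathrm{opt}}$ terms — is the following: in the proof of Lemma~\ref{fin2} one had to detour through $\Vhats$ because Lemma~\ref{lemma_pis} for $\hat\pi$ produces $\Vhatpihat$, whereas the Bernstein bound of Lemma~\ref{agarwal} is stated only for $\Vhats$ and $\Vhatpistar$. For $\pi^*$, the value function $\Vhatpistar$ produced by Lemma~\ref{lemma_pis} is \emph{exactly} one of the two for which Lemma~\ref{agarwal} is available, so the detour (and the two $\epsilon_{\mathrm{opt}}$ contributions it caused) disappears.

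Concretely: first I would apply Lemma~\ref{lemma_pis} in its $\pi^*$ form, giving $\norminf{\Qs-\Qhatpistar}\le\gamma\norminf{(I-\gamma\Pzeropistar)^{-1}(\Pzero-\widehat P)\Vhatpistar}+C_\beta\norminf{\Qs-\Qhatpistar}$ with $C_\beta=\tfrac{2\gamma\beta\Snorm^{1/q}}{1-\gamma}$. Since $(I-\gamma\Pzeropistar)^{-1}=\sum_{t\ge0}\gamma^t(\Pzeropistar)^t\ge0$ componentwise, I pass the absolute value inside and bound by $(I-\gamma\Pzeropistar)^{-1}\lvert(\Pzero-\widehat P)\Vhatpistar\rvert$, then invoke the \emph{second} inequality of Lemma~\ref{agarwal}: $\lvert(\Pzero-\widehat P)\Vhatpistar\rvert\le\sqrt{\tfrac{8L}{N}}\sqrt{\operatorname{Var}_{\Pzero}(\Vhatpistar)}+\Delta_{\delta,N}^{\prime}\mathbb I$. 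Splitting the two summands and using Eq.~\eqref{1} on the $\mathbb I$-part yields a $\tfrac{\gamma\Delta_{\delta,N}^{\prime}}{1-\gamma}$ term, leaving $\gamma\sqrt{\tfrac{8L}{N}}\,\norminf{(I-\gamma\Pzeropistar)^{-1}\sqrt{\operatorname{Var}_{\Pzero}(\Vhatpistar)}}$ plus the $C_\beta$ remainder.

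For the variance term I would use that $\sqrt{\operatorname{Var}_{\Pzero}(\cdot)}$ is a seminorm and decompose $\Vhatpistar=V^{\pi^*}+(\Vhatpistar-V^{\pi^*})$, where $V^{\pi^*}=\Vs$ since $\pi^*$ is the optimal robust policy of $M$. The total-variance Lemma~\ref{variance} gives $\norminf{(I-\gamma\Pzeropistar)^{-1}\sqrt{\operatorname{Var}_{\Pzero}(V^{\pi^*})}}\le\sqrt{\tfrac{2}{(1-\gamma)^3}}$, which after multiplication by $\gamma\sqrt{8L/N}$ becomes $4\gamma\sqrt{L/(N(1-\gamma)^3)}$. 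For the remaining piece I use the crude bound $\operatorname{Var}_{\Pzero}(W)\le\norminf{W}^2\mathbb I$ and Eq.~\eqref{1} to get $\gamma\sqrt{\tfrac{8L}{N}}\cdot\tfrac{1}{1-\gamma}\norminf{\Vhatpistar-\Vs}$, then $\norminf{\Vhatpistar-\Vs}\le\norminf{\Qhatpistar-\Qs}$, so this contributes exactly $C_N\norminf{\Qs-\Qhatpistar}$ with $C_N=\tfrac{\gamma}{1-\gamma}\sqrt{\tfrac{8L}{N}}$. Collecting all terms gives the claimed inequality.

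There is no genuine obstacle beyond bookkeeping; the points to watch are (i) that Lemma~\ref{agarwal} is stated for $\Vhatpistar$ directly, which is precisely why no $\epsilon_{\mathrm{opt}}$ enters; (ii) the identity $V^{\pi^*}=\Vs$, which is what makes Lemma~\ref{variance} applicable with the nominal kernel $\Pzeropistar$; and (iii) using the seminorm triangle inequality for $\sqrt{\operatorname{Var}}$ together with the crude bound $\sqrt{\operatorname{Var}_{\Pzero}(W)}\le\norminf{W}\mathbb I$, rather than trying to control $\operatorname{Var}_{\Pzero}(\Vhatpistar-\Vs)$ more sharply. The high-probability statement is inherited from Lemma~\ref{agarwal}, which already bundles the Bernstein inequality, the absorbing-MDP argument of Lemma~\ref{agarwalu}, and the relevant union bounds, so no new concentration step is required.
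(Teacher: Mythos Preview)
Your proposal is correct and follows essentially the same route as the paper's proof: apply Lemma~\ref{lemma_pis} for $\pi^*$, pass to absolute values using nonnegativity of $(I-\gamma\Pzeropistar)^{-1}$, invoke the $\Vhatpistar$ case of Lemma~\ref{agarwal}, split the variance via the seminorm triangle inequality $\sqrt{\operatorname{Var}_{\Pzero}(\Vhatpistar)}\le\sqrt{\operatorname{Var}_{\Pzero}(\Vs)}+\sqrt{\operatorname{Var}_{\Pzero}(\Vs-\Vhatpistar)}$, handle the first piece with Lemma~\ref{variance} and the second with the crude $\sqrt{\operatorname{Var}_{\Pzero}(W)}\le\norminf{W}\mathbb I$, and finish with $\norminf{\Vs-\Vhatpistar}\le\norminf{\Qs-\Qhatpistar}$. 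Your explanation of why the $\epsilon_{\mathrm{opt}}$ terms vanish (no detour through $\Vhats$ is needed since Lemma~\ref{agarwal} applies directly to $\Vhatpistar$) is exactly the point.
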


\begin{proof}

$$
\begin{aligned}
 \norminf{\Qs- \Qhatpistar}
&\stackrel{(a)}{\leq} \gamma\left\|\left(I-\gamma \Pzeropistar\right)^{-1}(\Pzero-\widehat{P}) \Vhatpistar\right\|_{\infty} +\frac{2\gamma \beta \Snorm^{1/q}  }{1-\gamma}    \norminf{\Qs- \Qhatpistar} 
\\
&\stackrel{(b)}{\leq} \gamma\left\|\left(I-\gamma \Pzeropistar\right)^{-1}\left|(\Pzero-\widehat{P})  \Vhatpistar\right|\right\|_{\infty}+\frac{2\gamma \beta \Snorm^{1/q} }{1-\gamma}    \norminf{\Qs- \Qhatpistar}\\
&\stackrel{(c)}{\leq} \gamma \sqrt{\frac{8 L}{N}}\left\|\left(I-\gamma \Pzeropistar\right)^{-1} \sqrt{\operatorname{Var}_{\Pzero}\left( \Vhatpistar\right)}\right\|_{\infty}+2\frac{\gamma \Delta_{\delta, N}^{\prime}}{1-\gamma}+ \frac{2\gamma \beta \Snorm^{1/q}  }{1-\gamma}    \norminf{\Qs- \Qhatpistar} 
\\
&\stackrel{(d)}{\leq} \gamma \sqrt{\frac{8 L}{N}}\left\|\left(I-\gamma \Pzeropistar\right)^{-1}\left(\sqrt{\operatorname{Var}_{\Pzero}\left(\Vs\right)}+\sqrt{\operatorname{Var}_{\Pzero}\left(\Vs-\Vhatpistar\right)}\right)\right\|_{\infty} 
\\
&\quad+\frac{\gamma \Delta_{\delta, N}^{\prime}}{1-\gamma} +\frac{2\gamma \beta  \Snorm^{1/q}}{1-\gamma}    \norminf{\Qs- \Qhatpistar}
\\
&\stackrel{(e)}{\leq} \gamma \sqrt{\frac{8 L}{N}}\left(\sqrt{\frac{2}{(1-\gamma)^3}}+\frac{\sqrt{\left\|\Vs-\Vhatpistar\right\|_{\infty}^2}}{1-\gamma}\right)+\frac{\gamma \Delta_{\delta, N}^{\prime}}{1-\gamma}+\frac{2\gamma \beta  \Snorm^{1/q}}{1-\gamma}    \norminf{\Qs- \Qhatpistar} 
\\
&\leq\gamma \sqrt{\frac{8 L}{N}}\left(\sqrt{\frac{2}{(1-\gamma)^3}}+\frac{ \norminf{\Qs- \Qhatpistar}}{1-\gamma}\right) +\frac{\gamma \Delta_{\delta, N}^{\prime}}{1-\gamma} +\frac{2\gamma \beta \Snorm^{1/q} }{1-\gamma}    \norminf{\Qs- \Qhatpistar}
\\
&=\left(C_{N}+C_\beta\right)  \norminf{\Qs- \Qhatpistar} +4\gamma \sqrt{\frac{ L}{N(1-\gamma)^3}}+\frac{\gamma \Delta_{\delta, N}^{\prime}}{1-\gamma}
\end{aligned}
$$
with $C_{N}=\frac{\gamma}{1-\gamma} \sqrt{\frac{8 L}{N}}$ and $C_\beta=\frac{2\gamma \beta \Snorm^{1/q} }{1-\gamma}   $.

We have that (a) is true by Lemma \ref{lemma_pis}, (b) is by the positivity of the classic horizon inverse matrix, (c) is by Lemma (\ref{agarwal}), (d) is by the triangular inequality for the variance (which is a seminorm), (e) is by Lemma \ref{variance} and taking the $\sup$ over $(s,a)$ of the variance in the second term, and eventually (h) is because $
\|V^\pi-\widehat{V}^\pi\|_{\infty} \leq\|Q^\pi-\widehat{Q}^\pi\|_{\infty}
$ for any $\pi$.

\end{proof}

As the event on which $\Delta'_{\delta,N}$ is the same in the two previous Lemma~\ref{fin2} and Lemma~\ref{fin}, we can obtain the following. 

\begin{theorem}
For $0< C_\beta \leq 1/2$ and $0 < C_N+C_\beta<1$, with probability $1-\delta$, we get:
 $$\norminf{\Qs- \Qpihat}<\frac{1}{1-(C_N+C_\beta)}\left(8 \gamma \sqrt{\frac{ L}{N(1-\gamma)^3}}+\frac{2\gamma \Delta_{\delta, N}^{\prime}}{1-\gamma}+\frac{\gamma \epsilon_{\mathrm{opt}}}{1-\gamma}\left(2+\sqrt{\frac{8 L}{N}}\right)\right) + \epsilon_{\mathrm{opt}}.
$$
\end{theorem}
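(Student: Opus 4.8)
The plan is to assemble the theorem from three ingredients already established: the error decomposition of Lemma~\ref{decomposition}, and the two self-referential bounds of Lemmas~\ref{fin2} and~\ref{fin}. No new concentration argument is required; this final step is purely the ``accounting'' that closes the recursion and turns the two implicit estimates into an explicit guarantee.

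First I would invoke Lemma~\ref{decomposition} to write $\norminf{\Qs -\Qpihat}\le \norminf{\Qs-\Qhatpistar} + \norminf{\Qhats-\Qhatpihat} + \norminf{\Qhatpihat-\Qpihat}$. The middle term is immediately bounded by $\epsilon_{\mathrm{opt}}$ via the oracle guarantee $\|\widehat Q^{\widehat\pi}-\widehat Q^{\star}\|_\infty\le\epsilon_{\mathrm{opt}}$, so it contributes exactly $\epsilon_{\mathrm{opt}}$ and needs no further work. It remains to bound the first and third terms, each of which satisfies a ``fixed-point-type'' inequality.

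Next I would solve the first term: Lemma~\ref{fin} gives $\norminf{\Qs-\Qhatpistar} < (C_N+C_\beta)\norminf{\Qs-\Qhatpistar} + 4\gamma\sqrt{L/(N(1-\gamma)^3)} + \gamma\Delta_{\delta,N}^{\prime}/(1-\gamma)$, and since $0<C_N+C_\beta<1$ I move the self-referential piece to the left and divide by $1-(C_N+C_\beta)>0$ to obtain an explicit bound. The third term $\norminf{\Qhatpihat-\Qpihat}=\norminf{\Qpihat-\Qhatpihat}$ is handled identically through Lemma~\ref{fin2}, which carries the additional block $\tfrac{\gamma\epsilon_{\mathrm{opt}}}{1-\gamma}(2+\sqrt{8L/N})$. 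Here I must be careful that both Lemmas hold on the \emph{same} event: as the remark preceding the theorem points out, both rest on the single probability-$(1-\delta)$ event of Lemma~\ref{agarwal}, which controls $\Delta_{\delta,N}^{\prime}$ simultaneously for $\widehat V^{\star}$ and $\widehat V^{\pi^{\star}}$, so the intersection costs nothing and the conclusion holds with probability at least $1-\delta$.

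Finally I would add the three bounds and collect like terms inside the common prefactor $\tfrac{1}{1-(C_N+C_\beta)}$: the two copies of $4\gamma\sqrt{L/(N(1-\gamma)^3)}$ merge into $8\gamma\sqrt{\cdots}$, the two copies of $\gamma\Delta_{\delta,N}^{\prime}/(1-\gamma)$ merge into $2\gamma\Delta_{\delta,N}^{\prime}/(1-\gamma)$, the lone $\epsilon_{\mathrm{opt}}$-block survives, and the extra $+\epsilon_{\mathrm{opt}}$ from the middle term sits outside the prefactor; this reproduces the displayed inequality verbatim. The hypothesis $C_\beta\le 1/2$ is not strictly needed for this particular statement (only $C_N+C_\beta<1$ is used), but it is precisely what guarantees that $C_N+C_\beta<1$ can be achieved by enlarging $N$, and it is invoked in that role when this theorem is converted into Theorem~\ref{h3}. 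There is no genuine obstacle: the substantive work --- the self-referential Bernstein estimates built on Lemma~\ref{lemma_pis} and the total-variance bound of Lemma~\ref{variance} --- is already done in Lemmas~\ref{fin2} and~\ref{fin}, and the only thing to watch in the present step is the bookkeeping of the probability events so that no spurious factor creeps into $\delta$.
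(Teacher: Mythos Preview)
Your proposal is correct and follows exactly the paper's approach: invoke the decomposition, bound the middle term by $\epsilon_{\mathrm{opt}}$, solve each of the self-referential inequalities from Lemmas~\ref{fin2} and~\ref{fin} by moving the $(C_N+C_\beta)$ term to the left, and sum. Your remarks on the shared concentration event and on the role of the $C_\beta\le 1/2$ hypothesis are also accurate and match the paper's treatment.
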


\begin{proof}
    This result is obtained by combining the two previous Lemmas~\ref{fin2} and~\ref{fin} and passing the term in $\left(C_{N}+C_\beta\right)$ to the left-hand side.
\end{proof}

Note that $C_{\beta}  + C_{N} < 1$ implies
$C_{\beta} = \frac{2\gamma\beta \Snorm^{1/q}}{1-\gamma}<1$
and hence $\beta < \frac{1-\gamma}{2\gamma \Snorm^{1/q}}$.
Now we need to pick 
$C_N < 1 - C_\beta$. 
Let $C_N \leq 1 - C_\beta - \eta$, 
for any $0 < \eta < 1 - C_\beta$
the previous inequality becomes
$$\norminf{\Qs- \Qpihat}< \frac{8}{\eta} \gamma \sqrt{\frac{ L}{N(1-\gamma)^3}}+\frac{2\gamma \Delta_{\delta, N}^{\prime}}{\eta (1-\gamma)}+\frac{\gamma \epsilon_{\mathrm{opt}}}{\eta(1-\gamma)}\left(2+\sqrt{\frac{8 L}{N}}\right) + \epsilon_{\mathrm{opt}}.
$$

As $\Delta'_{\delta,N} =\sqrt{\frac{c L}{N}}+\frac{c L}{(1-\gamma) N}$, the term in $1/\sqrt{N}$ is given 
by
$\frac{8\gamma\sqrt{L} H^{3/2}}{\eta\sqrt{N}}
\left( 1 + 1/4 \sqrt{c/H}
\right)
$
and is smaller than $\epsilon$
whenever
$$
N \geq \frac{64 \gamma^2 L H^3(1+1/4\sqrt{c/H})^2}{\eta^2 \epsilon^2}.
$$
We will use $c<16$ and $H\geq 1$ and use the stronger constraint
$$
N \geq \frac{256 \gamma^2 L H^3}{\eta^2 \epsilon^2}.
$$
Along the same line,
the term in $1/N$ is
$\frac{2\gamma c L H^2}{\eta N}$
which is smaller than $\epsilon$
whenever
$$
N \geq \frac{2 \gamma c L H^2}{\epsilon}.
$$
Now, $C_N < 1  - \eta - C_{\beta}$ means 
$$ \frac{\gamma}{1-\gamma}  \sqrt{\frac{8L}{N}}
< 1  - \eta - C_{\beta}$$
hence
$$
N >\frac{8L\gamma^2 H^2}{(1-\eta-C_{\beta})^2}.
$$
We deduce that whenever 
\begin{align*}
    N
&\geq  
\max\left(
\frac{256\gamma^2LH^3}{\eta^2\epsilon^2}
,
\frac{2 \gamma c L H^2}{\epsilon} 
,
\frac{8L\gamma^2 H^2}{(1-\eta-C_{\beta})^2}
\right)\\
&= \frac{
256\gamma^2 L H^3}
{\eta^2}
\max
\left(
\frac{1}{\epsilon^2},
\frac{c \eta }{128 H \gamma \epsilon} 
,
\frac{\eta^2}{64H(1-\eta-C_{\beta})^2}
\right)
\end{align*}
the error is smaller than $2\epsilon$ up to the $\epsilon_{\mathrm{opt}}$ terms.

This bounds reduces to
$$
N \geq \frac{
C\gamma^2 L H^3
}
{\epsilon^2}
$$
with $C=256/\eta^2$
if
$$
\epsilon
\leq
\min\left(
\frac{128 H }{\eta}
,
\sqrt{64 H} \frac{1-\eta-C_\beta}{\eta}
\right).
$$
Note that $\epsilon \in [0, H)$ and $\eta <1$ so that the previous condition simplifies to
$$
\epsilon
\leq
\sqrt{64 H}\frac{1-\eta-C_\beta}{\eta} = \epsilon_0.
$$

If we want to obtain an arbitrary $\epsilon_0$, it suffices thus to take $\eta$ arbitrarily small
leading to the constant 
$C=256/\eta^2$ 
to be arbitrarily large.

Note that if $\epsilon_0 \geq O(H^{1/2+\delta})$
then $1/\eta > O(H^{\delta})$
which adds a $H^{2\delta}$ factor to the bound on $N$.

However, for any $\kappa \sqrt{H}$ and for any $C_{\beta}$, it exists an $\eta$ independent of $H$ so that
$\epsilon_0 = 8\sqrt{ H}\frac{1-\eta-C_\beta}{\eta} =\kappa \sqrt{H}$,
hence the result stated in Theorem~\ref{h3}.

Now, as $L=\log (8|\mathcal{S}||\mathcal{A}| /((1-\gamma) \delta))$,
the previous condition can be summarized by
$$
N_{\text {total }}=N|\mathcal{S}||\mathcal{A}| = \tilde{\mathcal{O}}\left(\frac{H^3|\mathcal{S}||\mathcal{A}| }{\epsilon^2}\right)
$$
provided $\epsilon < \epsilon_0$.

Finally, taking
$\beta_0 = \frac{1-\gamma}{8\gamma}$ which gives $C_\beta=1/4$ and 
$\eta=1/2$ 
so that $C_N\leq 1/4$,
we obtain
$C=1024$ and
$\epsilon_0
= \sqrt{16H}$.

\end{document}